\documentclass[10pt]{article} % For LaTeX2e
%\usepackage{tmlr}
% If accepted, instead use the following line for the camera-ready submission:
%\usepackage[accepted]{tmlr}
% To de-anonymize and remove mentions to TMLR (for example for posting to preprint servers), instead use the following:
\usepackage[preprint]{tmlr}

% Optional math commands from https://github.com/goodfeli/dlbook_notation.
%\input{math_commands.tex}

\usepackage{hyperref}
\usepackage{url}
\RequirePackage{amsmath,amsfonts,amssymb}
\usepackage{booktabs}
\usepackage{nicefrac}    
\usepackage{microtype} 
\usepackage{tikz} 
\usepackage{float}
\usepackage{array}
\usepackage{mathtools}
\usepackage{multirow}
\usepackage{rotating}
\usepackage{wrapfig}
\usepackage{algorithm}
\usepackage{algpseudocode}
\usepackage{subcaption}
\usepackage{siunitx}
\usepackage{relsize}
\sisetup{output-exponent-marker=\ensuremath{\mathrm{e}}}

\newtheorem{theorem}{Theorem}
\newtheorem{lemma}[theorem]{Lemma} 
\newtheorem{proposition}[theorem]{Proposition} 
\newtheorem{remark}[theorem]{Remark}

\newtheorem{definition}[theorem]{Definition}

\newtheorem{assumption}{Assumption}[section]
\newcommand{\BlackBox}{\rule{1.5ex}{1.5ex}}  % end of proof
\ifdefined\proof
    \renewenvironment{proof}{\par\noindent{\bf Proof\ }}{\hfill\BlackBox\\[2mm]}
\else
    \newenvironment{proof}{\par\noindent{\bf Proof\ }}{\hfill\BlackBox\\[2mm]}
\fi

% NEW COMMANDS
\renewcommand{\l}{\left}
\renewcommand{\r}{\right}
\newcommand{\E}{\mathbb{E}}
\newcommand{\F}{\mathcal{F}}

\newcommand{\N}{\mathbb{N}}
\newcommand{\R}{\mathbb{R}}
\renewcommand{\P}{\mathbb{P}}

\DeclareMathOperator{\sign}{sign}
\DeclareMathOperator{\sd}{sd}
\DeclareMathOperator{\Tr}{Tr}

%%%%%%%%%%%%%
\graphicspath{{figs/}}
%%%%%%%%%%%%%

\title{Robust Learning Rate Selection for Stochastic Optimization via Splitting Diagnostic}

% Authors must not appear in the submitted version. They should be hidden
% as long as the tmlr package is used without the [accepted] or [preprint] options.
% Non-anonymous submissions will be rejected without review.

\author{\name Matteo Sordello \email matteo.sordello91@gmail.com \\
       \addr Department of Statistics and Data Science\\
       Wharton School, University of Pennsylvania\\
       Philadelphia, PA, USA
       \AND
       \name Niccol\`o Dalmasso \email niccolo.dalmasso@gmail.com \\
       \addr Department of Statistics \& Data Science\\
       Carnegie Mellon University\\
       Pittsburgh, PA, USA
       \AND
       \name Hangfeng He \email hangfeng@seas.upenn.edu \\
       \addr Department of Computer and Information Science\\
       University of Pennsylvania\\
       Philadelphia, PA, USA
       \AND
       \name Weijie Su \email suw@wharton.upenn.edu \\
       \addr Department of Statistics and Data Science\\
       Wharton School, University of Pennsylvania\\
       Philadelphia, PA, USA
      }

% The \author macro works with any number of authors. Use \AND 
% to separate the names and addresses of multiple authors.

  % Insert correct month for camera-ready version
 % Insert correct year for camera-ready version
 % Insert correct link to OpenReview for camera-ready version

\begin{document}

\maketitle

\begin{abstract}
This paper proposes SplitSGD, a new dynamic learning rate schedule for stochastic optimization. This method decreases the learning rate for better adaptation to the local geometry of the objective function whenever a \textit{stationary} phase is detected, that is, the iterates are likely to bounce at around a vicinity of a local minimum. The detection is performed by splitting the single thread into two and using the inner product of the gradients from the two threads as a measure of stationarity. Owing to this simple yet provably valid stationarity detection, SplitSGD is easy-to-implement and essentially does not incur additional computational cost than standard SGD. Through a series of extensive experiments, we show that this method is appropriate for both convex problems and training (non-convex) neural networks, with performance compared favorably to other stochastic optimization methods. Importantly, this method is observed to be very robust with a set of default parameters for a wide range of problems and, moreover, can yield better generalization performance than other adaptive gradient methods such as Adam.
\end{abstract}

\section{Introduction}
\label{sec:introduction}

Many machine learning problems boil down to finding a minimizer $\theta^* \in \R^d$ of a risk function taking the form
\begin{equation}\label{function_to_minimize}
F(\theta) = \E\l[f(\theta, Z)\r],
\end{equation}
where $f$ denotes a loss function, $\theta$ is the model parameter, and the \textit{random} data point $Z = (X, y)$ contains a feature vector $X$ and its label $y$. In the case of a finite population, for example, this problem is reduced to the empirical minimization problem. The touchstone method for minimizing (\ref{function_to_minimize}) is stochastic gradient descent (SGD). Starting from an initial point $\theta_0$, SGD updates the iterates according to
\begin{equation}{\label{SGD}}
\theta_{t+1} = \theta_t - \eta_t \cdot g(\theta_t, Z_{t+1})
\end{equation}
for $t \ge 0$, where $\eta_t$ is the learning rate, $\{Z_t\}_{t=1}^{\infty}$ are i.i.d.~copies of $Z$ and $g(\theta, Z)$ is the (sub-) gradient of $f(\theta, Z)$ with respect to $\theta$. The noisy gradient $g(\theta, Z)$ is an unbiased estimate for the true gradient $\nabla F(\theta)$ in the sense that $\E\l[g(\theta, Z)\r] = \nabla F(\theta)$ for any $\theta$.

The convergence rate of SGD crucially depends on the \textit{learning rate}---often recognized as ``the single most important hyper-parameter'' in training deep neural networks \citep{bengio2012practical}---and, accordingly, there is a vast literature on how to \textit{decrease} this fundamental tuning parameter for improved convergence performance. In the pioneering work of \citet{rm51}, the learning rate $\eta_t$ is set to $O(1/t)$ for convex objectives. Later, it was recognized that a slowly decreasing learning rate in conjunction with iterate averaging leads to a faster rate of convergence for strongly convex and smooth objectives \citep{ruppert1988,polyak1992}. More recently, extensive effort has been devoted to incorporating preconditioning/Hessians into learning rate selection rules \citep{dhs11,dauphin2015equilibrated,tan2016barzilai}. Among numerous proposals, a simple yet widely employed approach is to repeatedly halve the learning rate after performing a \textit{pre-determined} number of iterations
(see, for example, \citealp{b18}).

In this paper, we introduce a new variant of SGD that we term \textit{SplitSGD} with a novel learning rate selection rule. At a high level, our new method is motivated by the following fact: an optimal learning rate should be adaptive to the \textit{informativeness} of the noisy gradient $g(\theta_t, Z_{t+1})$. Roughly speaking, the informativeness is higher if the true gradient $\nabla F(\theta_t)$ is relatively large compared with the noise $\nabla F(\theta_t) - g(\theta_t, Z_{t+1})$ and vice versa. On the one hand, if the learning rate is too small with respect to the informativeness of the noisy gradient, SGD makes rather slow progress. On the other hand, the iterates would bounce around a region of an optimum of the objective if the learning rate is too large with respect to the
informativeness. 
The latter case corresponds to a stationary phase in stochastic optimization \citep{murata1998statistical,ct18}, which necessitates the \textit{reduction} of the learning rate for better convergence. 
Specifically, let $\pi_\eta$ be the stationary distribution for $\theta$ when the learning rate is constant and set to $\eta$.
From (\ref{SGD}) one has that $\E_{\theta \sim \pi_\eta}\l[g(\theta, Z)\r] = 0$, and consequently that
\begin{equation}{\label{stationary_dot_product}}
\E[\langle g(\theta^{(1)}, Z^{(1)}), g(\theta^{(2)}, Z^{(2)}) \rangle] = 0
\end{equation}
for $\theta^{(1)}, \theta^{(2)} \stackrel{i.i.d.}{\sim} \pi_\eta$ and $Z^{(1)}, Z^{(2)}  \stackrel{i.i.d.}{\sim} Z$.

\begin{figure}
\centering
\includegraphics[width = 0.5\textwidth]{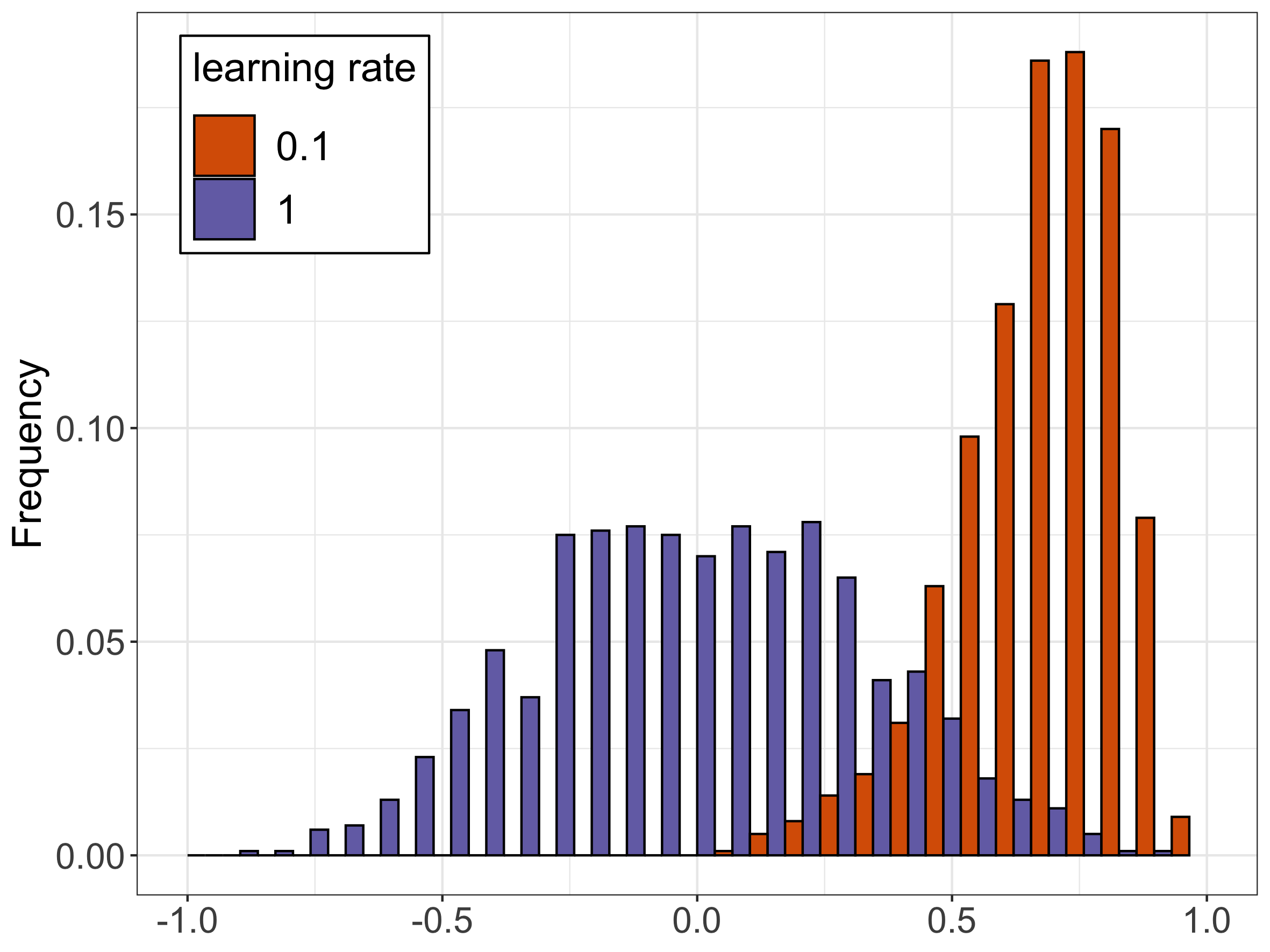}
\caption{\label{fig:intro}Normalized dot product of averaged noisy gradients over $100$ iterations. Stationarity depends on the learning rate: $\eta = 1$ corresponds to stationarity (purple), while $\eta = 0.1$ corresponds to non stationarity (orange). Details in Section \ref{sec:splitt-diagn-splitsg}.}
\end{figure}

SplitSGD differs from other stochastic optimization procedures in its \textit{robust} stationarity phase detection, which we refer to as the \textit{Splitting Diagnostic}. In short, this diagnostic runs two SGD threads initialized at the same iterate using \textit{independent} data points (refers to $Z_{t+1}$ in \eqref{SGD}), and then performs hypothesis testing to determine whether the learning rate leads to a stationary phase or not. The effectiveness of the Splitting Diagnostic is illustrated in Figure~\ref{fig:intro}, which reveals different patterns of dependence between the two SGD threads with difference learning rates. Loosely speaking, in the stationary phase (in purple), the two SGD threads behave as if they are
independent due to a large learning rate, and SplitSGD subsequently decreases the learning rate by some factor. In contrast, strong positive dependence is exhibited in the non stationary phase (in orange) and, thus, the learning rate remains the same after the diagnostic. In essence, the robustness of the Splitting Diagnostic is attributed to its adaptivity to the \textit{local geometry} of the objective, thereby making SplitSGD a \textit{tuning-insensitive} method for stochastic optimization. Its strength is confirmed by our experimental results in both convex and non-convex settings. In the latter, SplitSGD showed robustness with respect to the choice of the initial learning rate, and remarkable success in improving the test accuracy and avoiding overfitting compared to classic optimization procedures.

\subsection{Related work}
\label{sec:related-work}

There is a long history of detecting stationarity or non-stationarity in stochastic optimization to improve convergence rates \citep{yin1989stopping,pflug1990non,delyon1993accelerated,murata1998statistical}. Perhaps the most relevant work in this vein to the present paper is \citet{ct18}, which builds on top of \citet{pflug1990non} for general convex functions. Specifically, this work uses the running sum of the inner products of successive stochastic gradients for stationarity detection. However, this approach does not take into account the strong correlation between consecutive gradients and, moreover, is not sensitive to the local curvature of the current iterates due to unwanted influence from prior gradients. In contrast, the splitting strategy, which is akin to HiGrad \citep{sz18}, allows our SplitSGD to concentrate on the current gradients and leverage the regained independence of gradients to test stationarity. Lately, \citet{yaida2018fluctuation} and \citet{lang2019automate} derive a stationarity detection rule that is based on gradients of a mini-batch to tune the learning rate in SGD with momentum while \citet{pesme2020convergence} base their diagnostic on the distance between the current iterate and the last iterate where learning rate reduction happened. 
%An interesting direction for future work is to develop theoretical guarantees for SplitSGD with momentum. 

From a different angle, another related line of work is concerned with the relationship between the informativeness of gradients and the mini-batch size \citep{keskar2016large,yin2017gradient,li2017batch,smith2017don}. Among others, it has been recognized that the optimal mini-batch size should be adaptive to the local geometry of the objective function and the noise level of the gradients, delivering a growing line of work that leverage the mini-batch gradient variance for learning rate selection \citep{byrd2012sample,balles2016coupling,balles2017dissecting,de2017automated,zhang2017yellowfin,mccandlish2018empirical}.

\section{The SplitSGD algorithm}
\label{sec:splitt-diagn-splitsg}
In this section, we first develop the Splitting Diagnostic for stationarity detection, reported in Algorithm \ref{diagnostic_algorithm}, followed by the introduction of SplitSGD, detailed in Algorithm \ref{procedure}.

\begin{figure*}[!tp]
\centering
\begin{tikzpicture}
\draw (0, 1) -- (2, 1) node[draw=none,fill=none,midway,below] {$t_1$} node[draw=none,fill=none,midway,above] {\footnotesize $\eta_t = \eta$};
\draw (2, 0.5) -- (2, 1.5);
\draw (2, 0.5) -- (2.4, 0.5) node[draw=none,fill=none,midway,below]{$1$};
\draw (2.4, 0.4) -- (2.4, 0.6);
\draw (2.4, 1.4) -- (2.4, 1.6);
\draw (2.4, 0.5) -- (2.8, 0.5) node[draw=none,fill=none,midway,below]{$2$};
\node[] at (3, 1) {\tiny{\begin{tabular}{c} NON \\ STATIONARY \end{tabular}}};
\draw (2.8, 0.4) -- (2.8, 0.6);
\draw (2.8, 1.4) -- (2.8, 1.6);
\draw (2, 1.5) -- (2.8, 1.5);
\draw[dashed] (2.8, 0.5) -- (3.6, 0.5);
\draw[dashed] (2.8, 1.5) -- (3.6, 1.5);
\draw (3.6, 0.4) -- (3.6, 0.6);
\draw (3.6, 1.4) -- (3.6, 1.6);
\draw (3.6, 0.5) -- (4, 0.5) node[draw=none,fill=none,midway,below]{$w$};
\draw (3.6, 1.5) -- (4, 1.5);
\draw[fill] (4,1) circle (.3ex);
\draw (4, 0.5) -- (4, 1.5);
\draw (4, 1) -- (6, 1) node[draw=none,fill=none,midway,below]{$t_1$} node[draw=none,fill=none,midway,above] {\footnotesize $\eta_t = \eta$};
\draw (6, 0.5) -- (6, 1.5);
\draw (6, 0.5) -- (8, 0.5);
\draw (6, 1.5) -- (8, 1.5);
\draw (8, 0.5) -- (8, 1.5);
\node[] at (7, 1) {\tiny{STATIONARY}};
\draw[fill] (8,1) circle (.3ex);
\draw (8, 1) -- (11, 1) node[draw=none,fill=none,midway,below]{$\lfloor t_1/\gamma \rfloor$} node[draw=none,fill=none,midway,above] {\footnotesize $\eta_t = \eta \cdot \gamma$};
\draw (11, 0.5) -- (11, 1.5);
\draw (11, 0.5) -- (13, 0.5);
\draw (11, 1.5) -- (13, 1.5);
\draw (13, 0.5) -- (13, 1.5);
\draw[fill] (13,1) circle (.3ex);
\draw[dashed] (13, 1) -- (13.8, 1);
\end{tikzpicture}
\caption{The architecture of SplitSGD. The initial learning rate is $\eta$ and the length of the first single thread is $t_1$. If the diagnostic does not detect stationarity, the length and learning rate of the next thread remain unchanged. If stationarity is observed, we decrease the learning rate by a factor $\gamma$ and proportionally increase the length.
}
\label{sgd_diagnostic}
\end{figure*}

\subsection{Diagnostic via Splitting} 

\begin{algorithm}[t]
\caption{\textbf{Diagnostic}($\eta, w, l, q, \theta^{in}$)}
\label{diagnostic_algorithm}
%\vspace*{1mm}
{\fontsize{10}{15} \selectfont
\begin{algorithmic}[1]
\State $\theta_0^{(1)} = \theta_0^{(2)} = \theta^{in}$
\For{$i = 1,..., w$}
\For{$k = 1,2$}
\For{$j = 0,..., l-1$}
\State $\theta_{(i-1)\cdot l + j + 1}^{(k)} = \theta_{(i-1)\cdot l + j}^{(k)} - \eta \cdot g_{(i-1)\cdot l + j}^{(k)}$
\EndFor
\State $\bar{g}_i^{(k)} =  (\theta_{(i-1)\cdot l+1}^{(k)} - \theta_{i\cdot l}^{(k)})/l\cdot\eta$.
\EndFor
\State $Q_i = \langle \bar{g}_i^{(1)}, \bar{g}_i^{(2)} \rangle$
\EndFor
\If{$\sum_{i=1}^{w} (1-\sign{(Q_i)})/2 \geq q\cdot w$}
\State{\textbf{return}$\l\{\theta_{D} = (\theta_{w\cdot l}^{(1)} + \theta_{w\cdot l}^{(2)})/2, \ T_D = S\r\}$} 
\Else
\State{\textbf{return}$\l\{\theta_{D} = (\theta_{w\cdot l}^{(1)} + \theta_{w\cdot l}^{(2)})/2, \ T_D = N\r\}$}
\EndIf
\end{algorithmic}
}
\end{algorithm}

Intuitively, the stationarity phase occurs when two independent threads with the same starting point are \textit{no longer} moving along the same direction. This intuition is the motivation for our Splitting Diagnostic, which is presented in Algorithm \ref{procedure} and described in what follows. We call $\theta_0$ the initial value, even though later it will often have a different subscript based on the number of iterations already computed before starting the diagnostic. 
From the starting point, we run two SGD threads, each consisting of $w$ windows of length $l$. For each thread $k = 1, 2$, we define $g_t^{(k)} = g(\theta_{t}^{(k)}, Z_{t+1}^{(k)})$ and the iterates are
\begin{equation}
\theta_{t+1}^{(k)} = \theta_{t}^{(k)} - \eta \cdot g_t^{(k)},
\end{equation}
where $t \in \{0, ..., wl-1\}$. 
A similar splitting strategy can be also found in the HiGrad procedure introduced in \citet{sz18}. There, the authors use the estimates produced by several parallel threads to obtain a confidence interval around $\theta^*$, keeping the learning rate constant and using a decorrelating procedure.
Here, instead, on every thread we compute the average noisy gradient in each window, indexed by $i = 1,..., w$, which is
\begin{equation}
\bar{g}_{i}^{(k)} := \frac1l \sum_{j=1}^l g_{(i-1)\cdot l + j}^{(k)} = \frac{\theta_{(i-1)\cdot l + 1}^{(k)} - \theta_{i\cdot l +1}^{(k)}}{l \cdot \eta}.
\end{equation}
The length $l$ of each window has the same function as the mini-batch parameter in mini-batch SGD \citep{lzcs14}, in the sense that a larger value of $l$ aims to capture more of the true signal by averaging out the errors.
At the end of the diagnostic, we have stored two vectors, each containing the average noisy gradients in the windows in each thread.

\begin{definition}
For $i = 1,..., w$, we define the gradient coherence with respect to the starting point of the Splitting Diagnostic $\theta_0$, the learning rate $\eta$, and the length of each window $l$, as
\begin{equation}{\label{dot_product}}
Q_i(\theta_0, \eta, l) = \langle \bar{g}_{i}^{(1)}, \bar{g}_{i}^{(2)} \rangle.
\end{equation}
We will drop the dependence from the parameters and refer to it simply as $Q_i$.
\end{definition}
The gradient coherence expresses the relative position of the average noisy gradients, and its sign indicates whether the SGD updates have reached stationarity. In fact, if in the two threads the noisy gradients are pointing on average in the same direction, it means that the signal is stronger than the noise, and the dynamic is still in its transient phase. On the contrary, as (\ref{stationary_dot_product}) suggests, when the gradient coherence is on average very close to zero, and it also assumes negative values thanks to its stochasticity, this indicates that the noise component in the gradient is now dominant, and stationarity has been reached. Of course these values, no matter how large $l$ is, are subject to some randomness. Our diagnostic then considers the signs of $Q_1,..., Q_w$ and returns a result  based on the proportion of negative $Q_i$. One output is a boolean value $T_D$, defined as follows: 
\begin{equation}{\label{diagnostic_truth}}
T_D = \l\{\begin{array}{ll}
S \qquad &\text{if} \quad \sum_{i=1}^{w} (1-\sign{(Q_i)})/2 \geq q\cdot w \\[5pt]
N \qquad &\text{if} \quad \sum_{i=1}^{w} (1-\sign{(Q_i)})/2 < q\cdot w. \end{array}\r.
\end{equation}
where $T_D = S$ indicates that stationarity has been detected, and $T_D = N$ means non-stationarity.
The parameter $q \in [0, 1]$ controls the tightness of this guarantee, being the smallest proportion of negative $Q_i$ required to declare stationarity. 
In addition to $T_D$, we also return the average last iterate of the two threads as a starting point for following iterations. We call it $\theta_D := (\theta_{w\cdot l}^{(1)} + \theta_{w\cdot l}^{(2)})/2.$
The gradient coherence has a similar flavor to the \texttt{pflug} diagnostic that is used in \citet{ct18}. There, a running sum of the dot products of consecutive gradients from a single thread is stored, and stationarity is declared when such statistic becomes negative. A downside of such strategy is that the initial positive dot products can have a large impact on the rest of the procedure. Because of an ill selected initial learning rate, it can happen that the running sum of the dot products approaches zero very slowly, making the stationarity detection less practical.
We show in Section \ref{convex_section} how the Splitting Diagnostic improves on the \texttt{pflug} diagnostic by avoiding such problem.

\subsection{The Algorithm}

\begin{algorithm}[t]
\caption{\textbf{SplitSGD}($\eta, w, l, q, B, t_1, \theta_0, \gamma$)}
\label{procedure}
%\vspace*{1mm}
{\fontsize{10}{15} \selectfont
\begin{algorithmic}[1]
\State $\eta_1 = \eta$
\State $\theta_1^{in} = \theta_0$
\For{$b = 1,..., B$}
\State Run SGD with constant step size $\eta_b$ for $t_b$ steps, starting from $\theta_{b}^{in}$
\State Let the last update be $\theta_{b}^{last}$
\State $D_b = \textbf{Diagnostic}(\eta_b, w, l, q, \theta_{b}^{last})$
\State $\theta_{b+1}^{in} = \theta_{D_b}$
\If{$T_{D_b} = S$}
\State $\eta_{b+1} = \gamma\cdot\eta_b$ and $t_{b+1} = \lfloor t_b/\gamma \rfloor$
\Else
\State $\eta_{b+1} = \eta_b$ and $t_{b+1} = t_b$
\EndIf
\EndFor
\end{algorithmic}
}
\end{algorithm}

The Splitting Diagnostic can be employed in a more sophisticated SGD procedure, which we call SplitSGD. We start by running the standard SGD with constant learning rate $\eta$ for a fixed number of iterations $t_1$. Then, starting from $\theta_{t_1}$, we use the the next $2lw$ updates to verify if stationarity has been reached, using the Splitting Diagnostic.
If stationarity is not detected, the next single thread has the same length $t_1$ and learning rate $\eta$ as the previous one. On the contrary, if $T_D = S$, we realize that it is the time to decrease the learning rate by a factor $\gamma \in (0, 1)$. At the same time we also increase the length of the single thread by a factor $1/\gamma$, as suggested by \citet{b18} in their procedure, later analyzed under the name of SGD$^{1/2}$ by \citet{ct18}.
Notice that, if $q = 0$, then the learning rate gets deterministically decreased after each diagnostic since every Splitting Diagnostic will return $T_D = S$. On the other extreme, if we set $q = 1$, then the procedure maintains constant learning rate with high probability, since the only chance to decay the learning rate happens when all the gradient coherences are negative.
Figure \ref{sgd_diagnostic} illustrates what happens when the first diagnostic does not detect stationarity, but the second one does. 
SplitSGD puts together two crucial aspects: it employs the Splitting Diagnostic at deterministic times, but it does not deterministically decreases the learning rate. We will see in Section \ref{experiments} how both of these features combine to give advantage over existing methods. It is important to notice that SplitSGD does not add relevant computational cost compared to classic SGD. Although there are two threads that are used in some parts of the SplitSGD procedure, we i) decide their length in advance so that they do not use the majority of the iterates compared to the single threads and ii) use averaging at the end of the Splitting Diagnostic, which makes the second thread useful in the computation of the minimizer, as we will see in the experiment section. Moreover, the computation of the gradient coherence is small compared to the cost of training the model.
A detailed explanation of SplitSGD is presented in Algorithm \ref{procedure}.

\section{Theoretical Guarantees for Stationarity Detection}
\label{sec:theor-guar}

This section develops theoretical guarantees for the validity of our learning rate selection. Specifically, in the case of a relatively small learning rate, we can imagine that, if the number of iterations is fixed, the SGD updates are not too far from the starting point, so the stationary phase has not been reached yet. On the other hand, however, when $t \to \infty$ and the learning rate is fixed, we would like the diagnostic to tell us that we have reached stationarity, since we know that in this case the updates will oscillate around $\theta^*$.
Our first assumption concerns the convexity of the function $F(\theta)$. It will not be used in Theorem \ref{asymptotic_eta}, in which we focus our attention on a neighborhood of $\theta_0$.
\begin{assumption}{\label{strong_convexity}}
The function $F$ is strongly convex, with convexity constant $\mu > 0$. For all $\theta_1, \theta_2$,
$$F(\theta_1) \geq F(\theta_2) + \langle \nabla F(\theta_2), \theta_1 - \theta_2 \rangle + \frac{\mu}{2} \|\theta_1 - \theta_2\|^2 $$
and also $\| \nabla F(\theta_1) - \nabla F(\theta_2) \| \geq \mu\cdot \|\theta_1 - \theta_2\| .$
\end{assumption}
\begin{assumption}{\label{smoothness}}
The function $F$ is smooth, with smoothness parameter $L > 0$. For all $\theta_1, \theta_2$,
$$\|\nabla F(\theta_1) - \nabla F(\theta_2) \| \leq L\cdot \|\theta_1 - \theta_2\|.$$
\end{assumption}
We said before that the noisy gradient is an unbiased estimate of the true gradient. The next assumption that we make is on the distribution of the errors. 
\begin{assumption}{\label{errors_assumption}}
We define the error in the evaluation of the gradient in $\theta_{t-1}$ as
\begin{equation}
\epsilon_t := \epsilon(\theta_{t-1}, Z_t) = g(\theta_{t-1}, Z_t) - \nabla F(\theta_{t-1})
\end{equation}
and the filtration $\F_t = \sigma(Z_1, ..., Z_t)$. Then $\epsilon_t \in \F_t$ and $\{\epsilon_t\}_{t=1}^\infty$ is a martingale difference sequence with respect to $\{\F_t\}_{t=1}^\infty$, which means that $\E[\epsilon_t | \F_{t-1}] = 0 .$ The covariance of the errors satisfies
\begin{equation}{\label{errors_def}}
\sigma_{\min} \cdot I \preceq \E\l[\epsilon_t \epsilon_t^T \ | \ \F_{t-1}\r] \preceq \sigma_{\max} \cdot I,
\end{equation}
where $0 < \sigma_{\min} \leq \sigma_{\max} < \infty$ for any $\theta$. 
\end{assumption}

\begin{figure}[t]
\centering
\begin{minipage}{.49\textwidth}
  \centering
  \includegraphics[width=.9\linewidth]{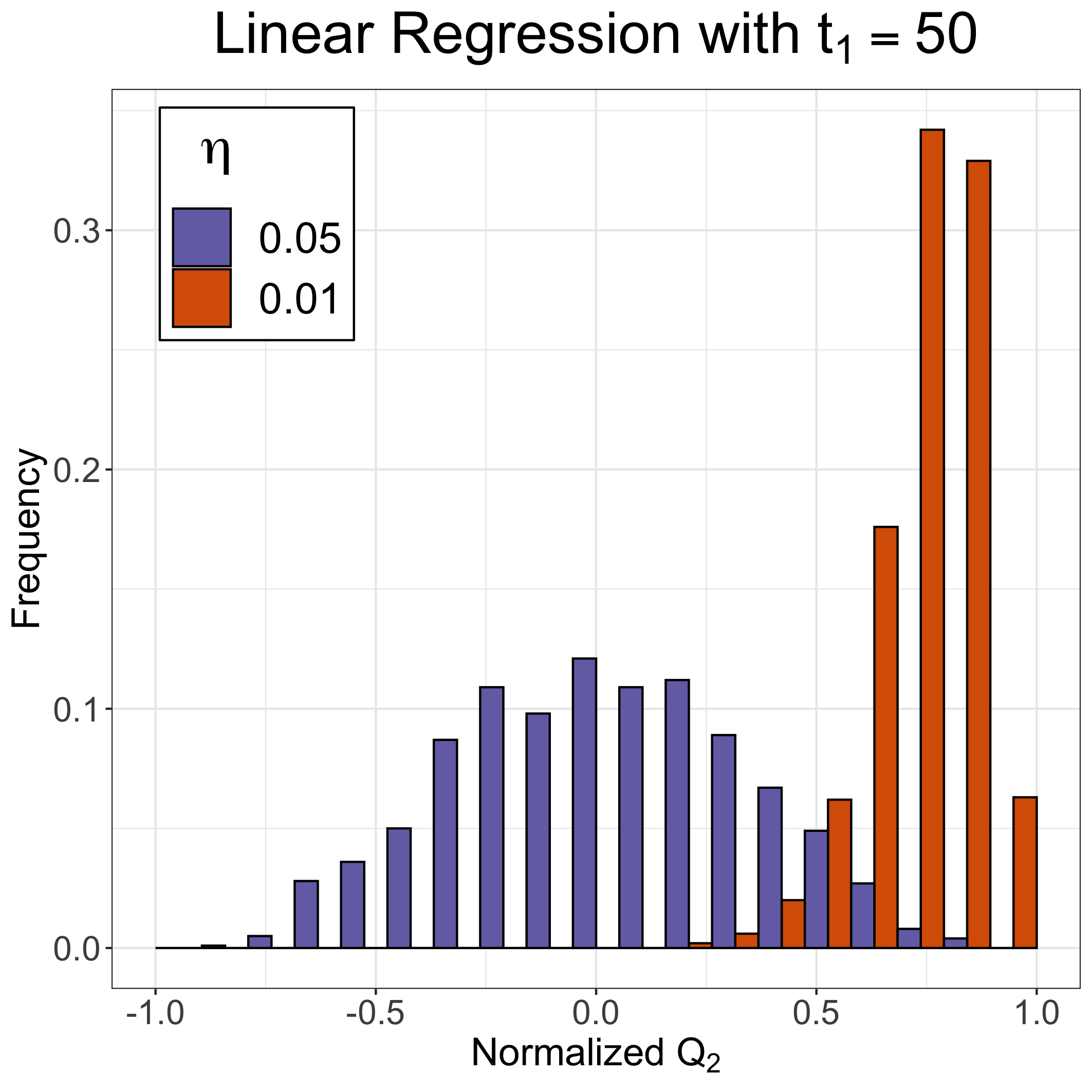}
  \vspace{2mm}
\end{minipage}%
\hfill
\centering
\begin{minipage}{.49\textwidth}
  \centering
  \includegraphics[width=.9\linewidth]{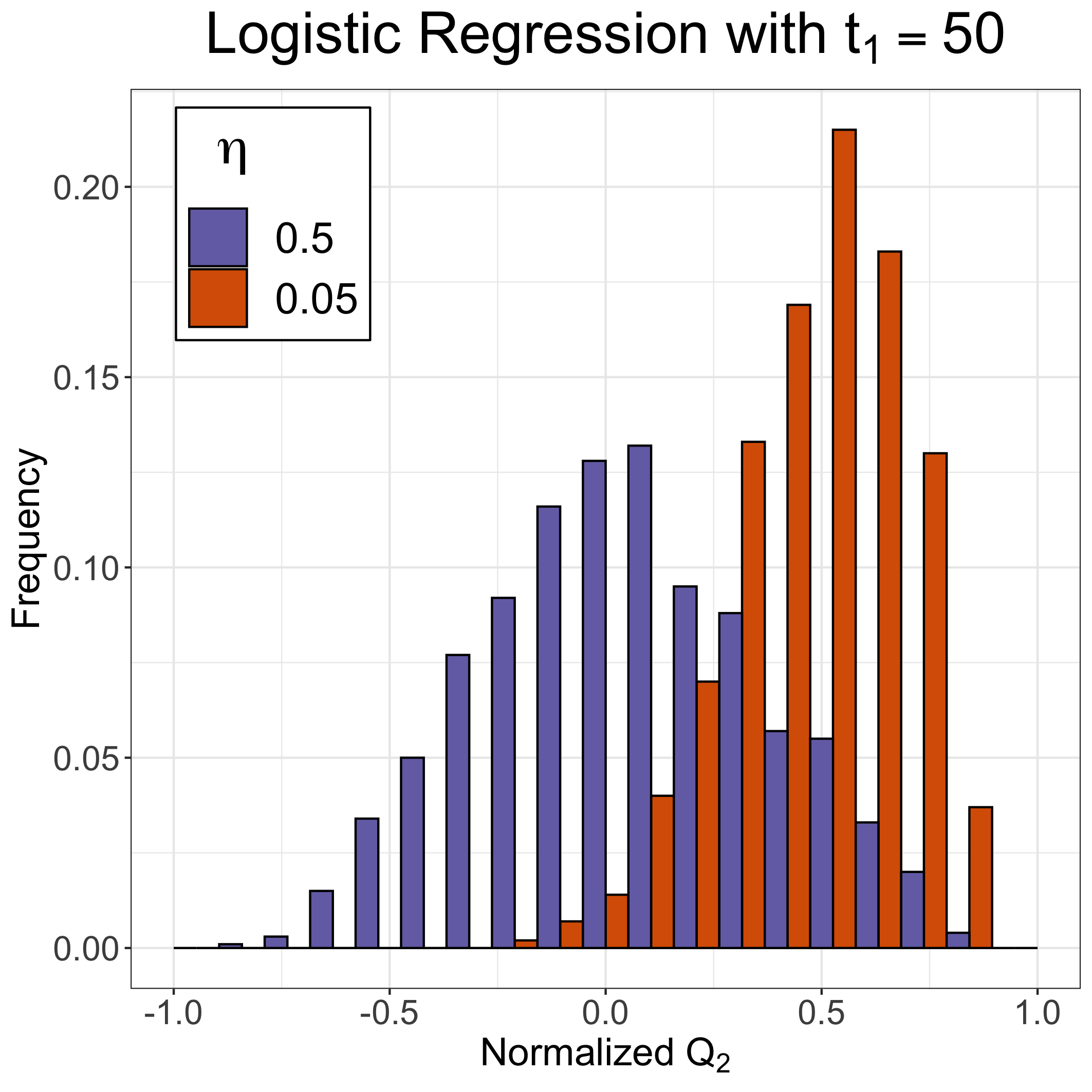}
  \vspace{2mm}
\end{minipage}%
\hfill
\centering
\begin{minipage}{.49\textwidth}
  \centering
  \includegraphics[width=.9\linewidth]{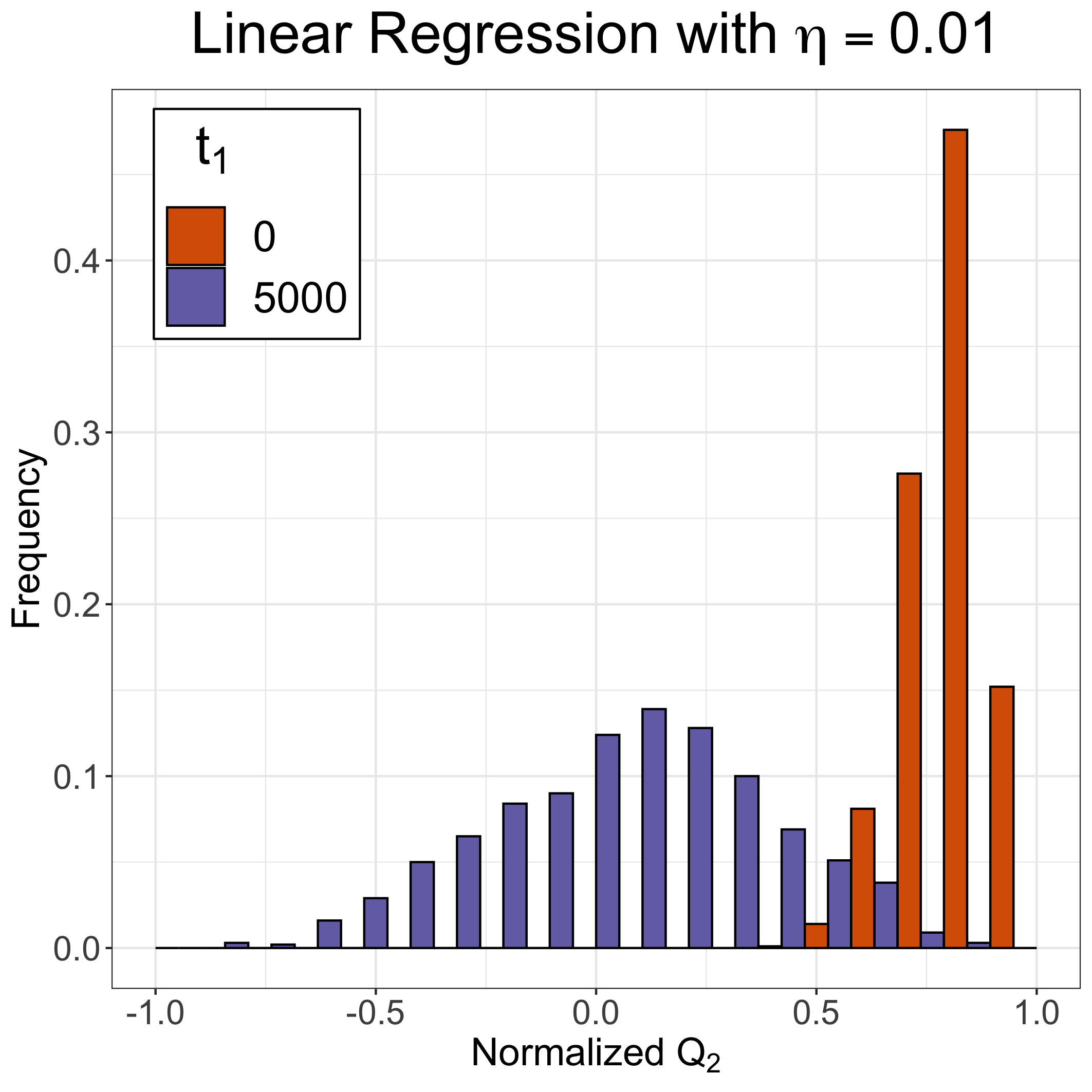}
\end{minipage}%
\hfill
\centering
\begin{minipage}{.49\textwidth}
  \centering
  \includegraphics[width=.9\linewidth]{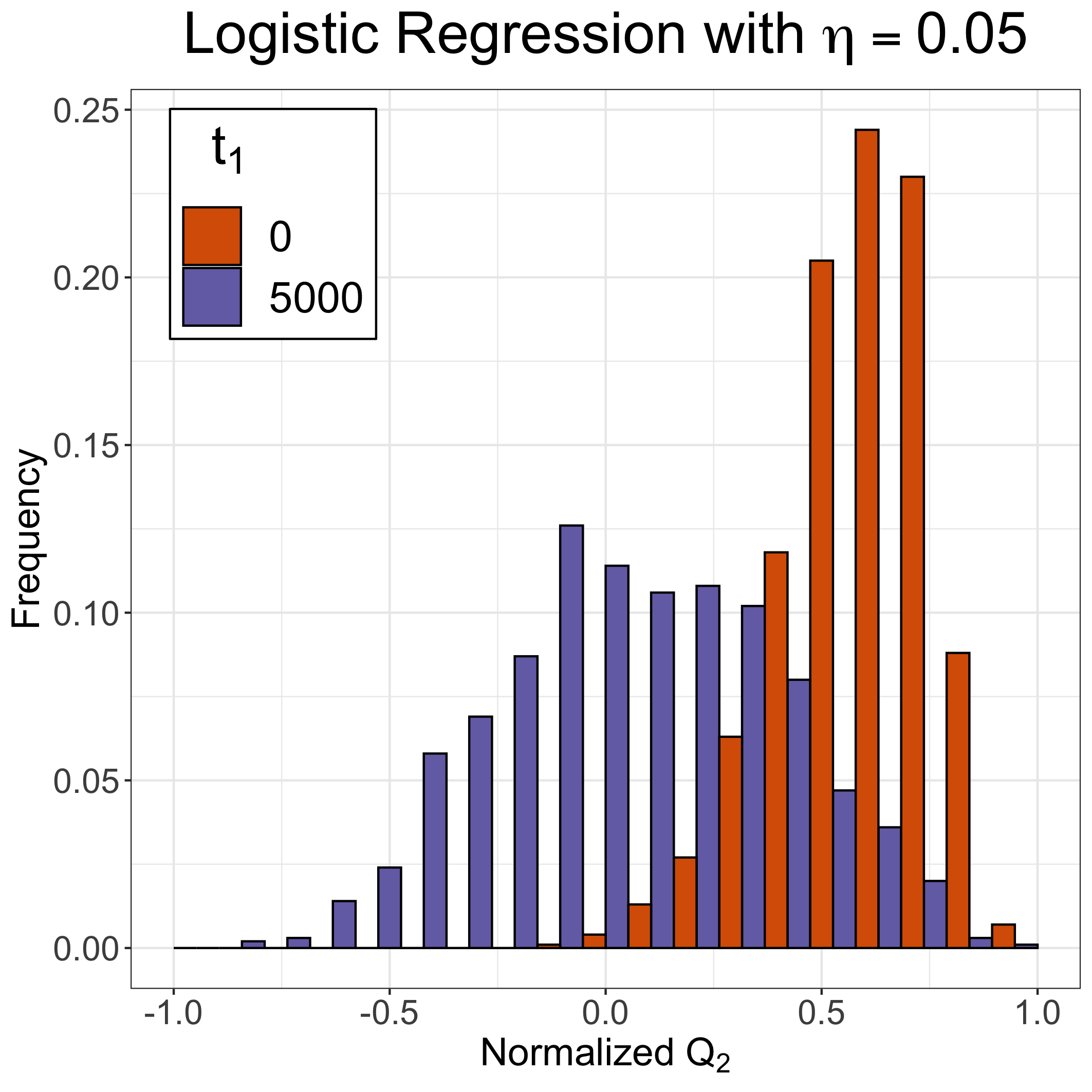}
\end{minipage}%
\caption{Histogram of the gradient coherence $Q_i$ (for the second pair of windows, normalized) of the Splitting Diagnostic for linear and logistic regression. The two top panels show the behavior in Theorem \ref{asymptotic_eta}, the two bottom panels the one in Theorem \ref{asymptotic_t}. In orange we see non stationarity, while in purple a distribution that will return stationarity for an appropriate choice of $w$ and $q$.}
\label{hist_asymptotic}
\end{figure}

Our last assumption is on the noisy functions $f(\theta, Z)$ and on an upper bound on the moments of their gradient. We do not specify $m$ here since different values are used in the next two theorems, but the range for this parameter is $m \in \{2, 4\}$.
\begin{assumption}{\label{bounded_noisy_gradient}}
Each function $f(\theta, Z)$ is convex, and there exists a constant $G$ such that $\E\l[ \|g(\theta_t, Z_{t+1})\|^m\ |\ \F_t\r] \leq G^m$ for any $\theta_t$.
\end{assumption}

Note that Assumption~\ref{errors_assumption} imposes that the eigenvalue of a rank-one matrix is finite, which is a relatively mild assumption, and Assumption~\ref{bounded_noisy_gradient} correspond to assumption H7 in \citet{bm11}.
We first show that there exists a learning rate sufficiently small such that the standard deviation $\sd(Q_i)$ of any gradient coherence is arbitrarily small compared to its expectation, and the expectation is positive when $\theta_{t_1+l}$ is not very far from $\theta_0$. This implies that the probability of any gradient coherence to be negative, $\P(Q_i < 0)$, is extremely small, which means that the Splitting Diagnostic will return $T_D = N$ with high probability.  
%%%%%%%%
\begin{theorem}{\label{asymptotic_eta}}
If Assumptions \ref{smoothness}, \ref{errors_assumption} and \ref{bounded_noisy_gradient} with $m=4$ hold, $\|\nabla F(\theta_0)\| > 0$ and we run $t_1$ iterations before a Splitting Diagnostic with $w$ windows of length $l$, then for any $i \in \{1,..., w\}$ we can set $\eta$ small enough to guarantee that
$$\sd(Q_i) \leq C_1(\eta, l)\cdot\E\l[Q_i\r],$$ 
where $C_1(\eta, l) = O(1/\sqrt l) + O(\sqrt{\eta(t_1+l)})$\footnote{$C_1(\eta, l)$ also depends on $\|\nabla F(\theta_0)\|, d, \sigma_{\max}, L$ and $G$}. The proof is in Appendix \ref{appendix_proof_asymptotic_eta}.
\end{theorem}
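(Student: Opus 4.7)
The plan is to (i) condition on the state $\theta^{(0)}:=\theta_{t_1}$ at which the Splitting Diagnostic begins, (ii) exploit the conditional independence of the two threads to decompose $Q_i$ into a ``signal'' term and mean-zero noise terms, and then (iii) unconditionalize via the law of total variance, picking up an extra contribution from the fluctuations of $\theta^{(0)}$ over the $t_1$ pre-diagnostic steps and of the within-window iterates around $\theta^{(0)}$.

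\textbf{Step 1 (conditional decomposition).} Set $\mu_i := \E[\bar g_i^{(k)} \mid \theta^{(0)}]$ and $\xi_i^{(k)} := \bar g_i^{(k)} - \mu_i$. Given $\theta^{(0)}$, the two threads are i.i.d., so
\[
Q_i = \|\mu_i\|^2 + \langle \mu_i, \xi_i^{(1)}+\xi_i^{(2)}\rangle + \langle \xi_i^{(1)}, \xi_i^{(2)}\rangle,
\]
which yields $\E[Q_i\mid \theta^{(0)}]=\|\mu_i\|^2$ and $\Var(Q_i\mid \theta^{(0)}) = 2\mu_i^{\top}\Sigma_i\mu_i + \|\Sigma_i\|_F^2$, with $\Sigma_i := \mathrm{Cov}(\bar g_i^{(k)}\mid \theta^{(0)})$. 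Writing $g_s = \nabla F(\theta_s) + \epsilon_{s+1}$ and using Assumption \ref{errors_assumption} together with the $m=4$ moment bound in Assumption \ref{bounded_noisy_gradient}, show that $\|\Sigma_i\|_F^2 = O(1/l^2)$ and $\mu_i^{\top}\Sigma_i\mu_i = O(1/l)$; hence $\Var(Q_i\mid \theta^{(0)}) = O(1/l)$. The fourth-moment hypothesis is needed precisely here, since $\|\Sigma_i\|_F^2$ and $\mu_i^{\top}\Sigma_i\mu_i$ are built from products of four noisy gradients.

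\textbf{Step 2 (within-window displacement and bias).} Iterating the SGD recursion within a window gives $\theta_s^{(k)} - \theta^{(0)} = -\eta\sum_{j=0}^{s-1}\nabla F(\theta_j^{(k)}) - \eta\sum_{j=0}^{s-1}\epsilon_{j+1}^{(k)}$. Because $\{\epsilon_t\}$ is a martingale difference sequence, $\E\|\sum\epsilon\|^2$ is only $O(s\sigma_{\max})$; combined with Assumption \ref{bounded_noisy_gradient} and smoothness (Assumption \ref{smoothness}) this lets us bound $\E[\|\theta_s^{(k)}-\theta^{(0)}\|^2\mid \theta^{(0)}]$ by a quantity that tends to $0$ with $\eta$. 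Plugging into Assumption \ref{smoothness} bounds the bias $\|\mu_i - \nabla F(\theta^{(0)})\|$ and forces $\|\mu_i\|^2 \to \|\nabla F(\theta^{(0)})\|^2$ as $\eta\to 0$. The residual from this bias contributes $O(\eta l)$ to the overall variance.

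\textbf{Step 3 (unconditioning).} The law of total variance gives
\[
\Var(Q_i) = \E[\Var(Q_i\mid \theta^{(0)})] + \Var(\|\mu_i(\theta^{(0)})\|^2).
\]
The first term is $O(1/l)$ by Step 1. For the second term, apply the same martingale-plus-smoothness argument to the $t_1$ iterations \emph{before} the diagnostic: combining the displacement bound on $\|\theta^{(0)}-\theta_0\|$ with $|\|\nabla F(\theta^{(0)})\|^2 - \|\nabla F(\theta_0)\|^2| \leq 2GL\,\|\theta^{(0)}-\theta_0\|$ shows $\Var(\|\nabla F(\theta^{(0)})\|^2) = O(\eta t_1)$, and the window-bias term of Step 2 contributes $O(\eta l)$. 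Altogether $\sd(Q_i) = O(1/\sqrt{l}) + O(\sqrt{\eta(t_1+l)})$. Finally, for $\eta$ small, $\E[Q_i]$ is arbitrarily close to $\|\nabla F(\theta_0)\|^2$, which is bounded away from zero (we are in the non-stationary regime); dividing the two gives the announced $C_1(\eta,l)$.

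\textbf{Main obstacle.} The delicate point is the displacement estimate for $\|\theta_s-\theta_0\|$ over long stretches ($s$ up to $t_1+wl$). A plain triangle inequality through the SGD recursion yields only $\eta s G$, which after smoothness and squaring would degrade the final scaling to $O(\eta(t_1+l))$ rather than $O(\sqrt{\eta(t_1+l)})$. The sharper rate has to come from separating the martingale-difference noise $\sum\epsilon$ (whose variance grows like $s$, giving an extra $1/\sqrt{s}$ after a square root) from the deterministic drift $\sum\nabla F$ (which is absorbed once $\eta$ is small enough to keep the drift dominated by the diffusion), together with an inductive use of smoothness to close a Grönwall-type recursion. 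Making this tradeoff precise, uniformly in $i\leq w$, is where the proof does its real work; the rest is careful bookkeeping against the fourth-moment bound.
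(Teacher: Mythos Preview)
Your conditional decomposition via the law of total variance is a legitimate and arguably cleaner route than the paper's. The paper does \emph{not} condition on the split point $\theta^{(0)}=\theta_{t_1}$; instead it expands $l^4\E[Q_1^2]$ into ten cross terms by writing each $g=\nabla F+\epsilon$ in every slot of $\langle\bar g^{(1)},\bar g^{(2)}\rangle^2$, bounds each of the ten pieces individually using only the crude worst-case estimates $\E[\|\theta_{t+i}-\theta_0\|^k]\le(\eta G(t+i))^k$, and then subtracts the square of the lower bound on $\E[Q_1]$. The $O(\eta(t+l))$ contribution to the variance arises precisely from the gap between the upper bound on the purely-signal term and the lower bound on $\E[Q_1]^2$: both equal $\|\nabla F(\theta_0)\|^4$ at zeroth order, and their first-order corrections each carry one factor of $\E[\|\theta_t-\theta_0\|]=O(\eta(t+l))$. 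Your law-of-total-variance route replaces this ten-term bookkeeping by the single identity $\Var(Q_i)=\E[\Var(Q_i\mid\theta^{(0)})]+\Var(\|\mu_i\|^2)$, which is structurally nicer and isolates the $1/l$ and $\eta$-dependent pieces more transparently.

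Your ``main obstacle'' paragraph, however, misreads the situation. First, in the small-$\eta$ regime of the theorem one has $\eta(t_1+l)=o(\sqrt{\eta(t_1+l)})$, so an $\sd$ bound of $O(1/\sqrt l)+O(\eta(t_1+l))$ is \emph{stronger}, not weaker, than the stated $O(1/\sqrt l)+O(\sqrt{\eta(t_1+l)})$; you have the direction of ``degrade'' reversed. Second, and more to the point, the paper obtains its rate using \emph{only} the naive triangle-inequality bound $\|\theta_s-\theta_0\|\le \eta sG$ throughout (its displacement lemma is exactly this worst-case estimate, up to fourth moments); no martingale separation of drift and diffusion, and no Gr\"onwall-type recursion, appears anywhere in the argument. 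The $\sqrt{\eta(t_1+l)}$ in the statement is simply the square root of a variance that is linear in $\eta(t_1+l)$, and that linearity is a consequence of first-order cancellation, not of any refined displacement control. In your own framework, plugging the crude bound $\E[\|\theta^{(0)}-\theta_0\|^2]\le\eta^2G^2t_1^2$ into $\Var(\|\mu_i\|^2)\le 4G^2L^2\E[\|\theta^{(0)}-\theta_0\|^2]$ already gives $O(\eta^2(t_1+l)^2)$ for that piece, which is absorbed into the paper's $O(\eta^2(t+l)^2)$ remainder; no sharper tool is required.
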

In the two top panels of Figure \ref{hist_asymptotic} we provide a visual interpretation of this result. When the starting point of the SGD thread is sufficiently far from the minimizer $\theta^*$ and $\eta$ is sufficiently small, then all the mass of the distribution of $Q_i$ is concentrated on positive values, meaning that the Splitting Diagnostic will not detect stationarity with high probability. In particular we can use Chebyshev inequality to get a bound for $\P(Q_i < 0)$ of the following form:
\begin{align*}
\P(Q_i < 0) &\leq \P(|Q_i - \E[Q_i]| > \E[Q_i]) \\
&\quad \leq \sd(Q_i)^2/\E[Q_i]^2 \leq C_1(\eta, l)^2
\end{align*}
Note that to prove Theorem \ref{asymptotic_eta} we do not need to use the strong convexity Assumption \ref{strong_convexity} since, when $\eta(t_1 + l)$ is small, $\theta_{t_1+l}$ is not very far from $\theta_0$.
In the next Theorem we show that, if we let the SGD thread before the diagnostic run for long enough and the learning rate is not too big, then the splitting diagnostic output is $T_D = S$ with probability that can be made arbitrarily high. This is consistent with the fact that, as $t_1 \to \infty$, the iterates will start oscillating in a neighborhood of $\theta^*$.

%%%%%%%%
\begin{theorem}{\label{asymptotic_t}}
If Assumptions \ref{strong_convexity}, \ref{smoothness}, \ref{errors_assumption} and \ref{bounded_noisy_gradient} with $m=2$ hold, then for any $\eta \leq \frac{\mu}{L^2}, \ l \in \N$ and $i \in \{1,..., w\}$, as $t_1 \to \infty$ we have
$$\l| \E\l[Q_i\r] \r| \leq C_2(\eta) \cdot \sd(Q_i),$$ 
where $C_2(\eta) = C_2 \cdot \eta + o(\eta)$. The proof is in Appendix \ref{appendix_proof_asymptotic_t}.
\end{theorem}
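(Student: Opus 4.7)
}
The plan is to exploit the fact that, as $t_1 \to \infty$, the starting point $\theta_0 := \theta_{t_1}$ of the Splitting Diagnostic converges in second moment to the stationary distribution of constant-stepsize SGD around $\theta^*$, whose spread scales as $O(\eta)$. The conditional independence of the two threads given $\theta_0$ then forces $\mathbb{E}[Q_i]$ to vanish with $\eta$ while $\mathrm{sd}(Q_i)$ stays bounded below.

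\paragraph{Step 1: stationarity of the pre-diagnostic SGD.}
First I would show, using Assumptions \ref{strong_convexity}--\ref{errors_assumption} and the restriction $\eta \leq \mu/L^2$, the standard contraction inequality
\[
\mathbb{E}\|\theta_{t+1}-\theta^*\|^2 \leq (1-\mu\eta)\,\mathbb{E}\|\theta_t-\theta^*\|^2 + \eta^2\sigma_{\max}.
\]
Iterating yields $\limsup_{t_1\to\infty}\mathbb{E}\|\theta_{t_1}-\theta^*\|^2 \leq \eta\sigma_{\max}/\mu$, and the same bound (up to a constant) survives after a further window of $l$ iterations, so $\sup_{0 \le s \le wl}\mathbb{E}\|\theta_{t_1+s}^{(k)}-\theta^*\|^2 = O(\eta)$ uniformly in $t_1$ large.

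\paragraph{Step 2: upper bound on $\mathbb{E}[Q_i]$.}
Because threads $k=1,2$ use independent data given $\theta_0$, letting $h_i(\theta_0) := \mathbb{E}[\bar g_i^{(1)}\mid \theta_0]$ I get $\mathbb{E}[Q_i\mid \theta_0] = \|h_i(\theta_0)\|^2 \geq 0$, so $|\mathbb{E}[Q_i]| = \mathbb{E}\|h_i(\theta_0)\|^2$. Using $h_i(\theta_0) = l^{-1}\sum_{j}\mathbb{E}[\nabla F(\theta_{(i-1)l+j-1}^{(1)})\mid\theta_0]$, Jensen and the smoothness Assumption \ref{smoothness} give
\[
\mathbb{E}\|h_i(\theta_0)\|^2 \leq \frac{L^2}{l}\sum_{j=1}^l \mathbb{E}\|\theta_{(i-1)l+j-1}^{(1)}-\theta^*\|^2 = O(\eta),
\]
by Step 1, with an explicit constant of the form $C_2\eta + o(\eta)$ once one tracks the $(1-\mu\eta)^s$ contraction over the window.

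\paragraph{Step 3: lower bound on $\mathrm{sd}(Q_i)$.}
By the law of total variance, $\mathrm{Var}(Q_i)\geq \mathbb{E}[\mathrm{Var}(Q_i\mid \theta_0)]$. Conditioning on $\theta_0$ and using independence of the threads, a direct computation gives
\[
\mathrm{Var}(Q_i\mid\theta_0)=\mathrm{Tr}\bigl(C_i(\theta_0)^2\bigr)+2\,h_i(\theta_0)^\top C_i(\theta_0)\,h_i(\theta_0),
\]
where $C_i(\theta_0):=\mathrm{Cov}(\bar g_i^{(1)}\mid\theta_0)$. The martingale-difference noise $\epsilon_t$ is uncorrelated across $t$, so a direct variance expansion of $\bar g_i^{(1)}$ combined with Assumption \ref{errors_assumption} gives $C_i(\theta_0)\succeq (\sigma_{\min}/l)\,I$, hence $\mathrm{Tr}(C_i^2)\geq d\sigma_{\min}^2/l^2$ uniformly in $\theta_0$. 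Thus $\mathrm{sd}(Q_i)\geq c(l,d,\sigma_{\min})>0$, a constant independent of $\eta$ and $t_1$. Combining with Step 2 gives $|\mathbb{E}[Q_i]|/\mathrm{sd}(Q_i)\leq C_2\eta + o(\eta)$, as required.

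\paragraph{Main obstacle.}
The cleanest part is the upper bound on $\mathbb{E}[Q_i]$ once stationarity is in hand. The delicate step is obtaining a genuinely $\eta$-free lower bound on $\mathrm{sd}(Q_i)$: the noisy gradient covariance $C_i(\theta_0)$ inherits $\theta_0$-dependence through the SGD trajectory, and one must verify that the between-iterate correlations within the window do not conspire to shrink $\mathrm{Tr}(C_i^2)$ below the $\sigma_{\min}$-based floor or to introduce spurious $\eta$-dependence that would break the $C_2(\eta)=C_2\eta+o(\eta)$ scaling. Bounding the dominant contribution by the uncorrelated martingale-difference part and controlling the residual via the fourth-moment hypothesis (upgraded from $m=2$ only if needed in the variance computation) is the main technical effort.
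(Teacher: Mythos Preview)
Your approach is essentially the paper's: (i) contraction gives $\mathbb{E}\|\theta_{t_1+s}-\theta^*\|^2=O(\eta)$ as $t_1\to\infty$; (ii) smoothness then yields $|\mathbb{E}[Q_i]|=O(\eta)$; (iii) the martingale-noise part of $\bar g_i$ furnishes a variance floor of order $d\sigma_{\min}^2/l^2$. Your law-of-total-variance formula $\mathrm{Var}(Q_i\mid\theta_0)=\Tr(C_i^2)+2h_i^\top C_ih_i$ is algebraically equivalent to the paper's trace identity $l^4\,\mathbb{E}[Q_1^2]=\Tr\bigl(\mathbb{E}\{\mathbb{E}[S_1S_1^\top\mid\mathcal F_t]^2\}\bigr)$, just with the centering already subtracted.

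The obstacle you flag is real and is exactly where the work lies: the bound $C_i(\theta_0)\succeq(\sigma_{\min}/l)I$ does \emph{not} hold uniformly, because the signal--noise cross term $\mathbb{E}[G_ie_i^\top+e_iG_i^\top\mid\theta_0]$ is nonzero (later $\nabla F(\theta_{t+j})$ depend on earlier $\epsilon$'s within the window). The paper controls it via the elementary matrix inequality $uv^\top+vu^\top\succeq-2\|u\|\|v\|\,I$ combined with $\|\nabla F(\theta_{t+j})\|\leq L\|\theta_t-\theta^*\|+L\eta Gj$, producing a correction of order $\sqrt{\eta}$ (through $\|\theta_t-\theta^*\|$) plus $O(\eta l)$. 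The resulting lower bound $\mathrm{Var}(Q_1)\gtrsim d\sigma_{\min}^2/l^2-K_1\sqrt{\eta}/l-K_2\eta$ is therefore not $\eta$-free, but the correction is lower-order and is absorbed into the $o(\eta)$ of $C_2(\eta)$. No fourth-moment upgrade is needed; $m=2$ suffices throughout.
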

The result of this theorem is confirmed by what we see in the bottom panels of Figure \ref{hist_asymptotic}. There, most of the mass of $Q_i$ is on positive values if $t_1 = 0$, since the learning rate is sufficiently small and the starting point is not too close to the minimizer. But when we let the first thread run for longer, we see that the distribution of $Q_i$ is now centered around zero, with an expectation that is much smaller than its standard deviation. An appropriate choice of $w$ and $q$ makes the probability that $T_D = S$ arbitrarily big.
In the proof of Theorem \ref{asymptotic_t}, we make use of a result that is contained in \citet{bm11} and then subsequently improved in \citet{nws14}, representing the dynamic of SGD with constant learning rate. For completeness we report its proof in Appendix \ref{appendix_lemmas}.
\begin{lemma}{\label{lemma_squared_distance}}
If Assumptions \ref{strong_convexity}, \ref{smoothness}, \ref{errors_assumption} and \ref{bounded_noisy_gradient} with $m=2$ hold, and $\eta \leq \frac{\mu}{L^2}$, then for any $t \geq 0$
\begin{align*}
\E\l[\|\theta_t - \theta^*\|^2\r] &\leq \l(1 - 2\eta (\mu - L^2 \eta)\r)^t \cdot \E\l[\|\theta_0 - \theta^*\|^2\r]  \\
&\quad + \frac{G^2 \eta}{\mu - L^2 \eta}.
\end{align*}
\end{lemma}
The simulations in Figure \ref{hist_asymptotic} show us that, once stationarity is reached, the distribution of the gradient coherence is fairly symmetric and centered around zero, so its sign will be approximately a coin flip. In this situation, if $l$ is large enough, the count of negative gradient coherences is approximately distributed as a Binomial with $w$ number of trials, and $0.5$ probability of success. Then we can set $q$ to control the probability of making a type I error -- rejecting stationarity after it has been reached -- by making $\frac{1}{2^w}\sum_{i=0}^{q\cdot w-1} \binom{w}{i}$
sufficiently small. Notice that a very small value for $q$ makes the type I error rate decrease but makes it easier to think that stationarity has been reached too early. In the Appendix \ref{appendix_choice_q} we provide a simple visual interpretation to understand why this trade-off gets weaker as $w$ becomes larger, while we show in Figure \ref{type_I_error} with a simple experiment how the Splitting Diagnostic is detecting stationarity with the correct probability once stationarity is actually been reached. What we see is that when $w$ is small the agreement is not perfect, but as it gets larger the theoretical and observed probabilities nearly overlap perfectly.

\begin{figure}[t]
\centering
\begin{minipage}{.49\textwidth}
  \centering
  \includegraphics[width=.9\linewidth]{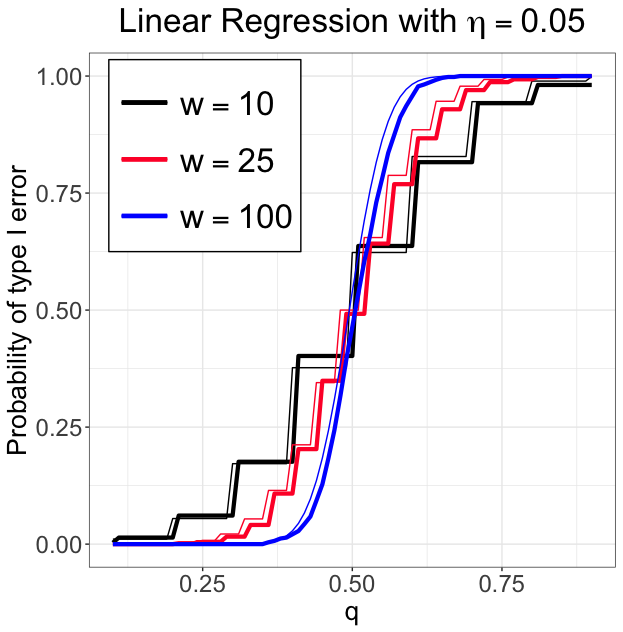}
  \vspace{2mm}
\end{minipage}%
\hfill
\centering
\begin{minipage}{.49\textwidth}
  \centering
  \includegraphics[width=.9\linewidth]{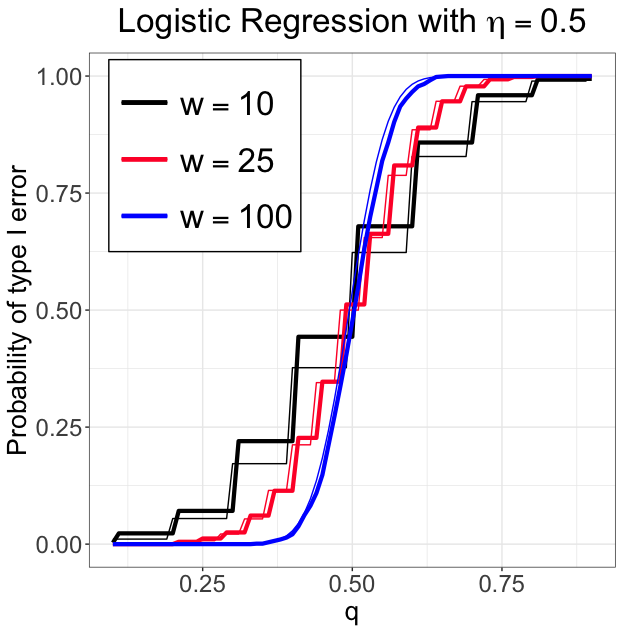}
  \vspace{2mm}
\end{minipage}
\caption{The probability of making a type I error using the Splitting Diagnostic (thick lines) closely matches with the respective theoretical probability (thin lines), in both linear and logistic regression settings.  In both settings we considered 1000 experiments for each value of $w$ and we initialize $\theta_0$ to be close to $\theta^*$, with $l = 10$ and $\eta$ sufficiently large to guarantee stationarity.}
\label{type_I_error}
\end{figure}

Finally, we provide a result on the convergence of SplitSGD. We leave for future work to prove the convergence rate of SplitSGD, which appears to be a very challenging problem.

%%%%%%%%%%%%%%%%%%%%%%%%%%%%%%%%%%
\begin{proposition}{\label{convergence_SplitSGD}}
If Assumptions \ref{strong_convexity}, \ref{smoothness}, \ref{errors_assumption} and \ref{bounded_noisy_gradient} with $m=2$ hold, and $\eta \leq \frac{\mu}{L^2}$, then SplitSGD is guaranteed to converge with probability tending to $1$ as the number of diagnostics $B \to \infty$. 
\end{proposition}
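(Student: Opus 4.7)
The plan is to show that the learning rate $\eta_b$ descends to zero with probability tending to one as $b\to\infty$, after which standard constant-step-size SGD bounds force $\theta_t\to\theta^*$. Under Assumptions~\ref{strong_convexity}--\ref{bounded_noisy_gradient} with $\eta_b\le\mu/L^2$, the standard one-step inequality $\E[\|\theta_{t+1}-\theta^*\|^2\mid\F_t]\le (1-\mu\eta_b)\|\theta_t-\theta^*\|^2 + \eta_b^2 G^2$, obtained by expanding the square and combining strong convexity, smoothness, and Assumption~\ref{errors_assumption}, iterates within the $b$-th block to yield
$$
\E\l[\l\|\theta_b^{last}-\theta^*\r\|^2\r] \le (1-\mu\eta_b)^{t_b}\,\E\l[\l\|\theta_b^{in}-\theta^*\r\|^2\r] + \frac{\eta_b G^2}{\mu},
$$
so the iterates contract geometrically toward a ball of radius $O(\sqrt{\eta_b})$ around $\theta^*$ during each block.

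Next, I would apply Theorem~\ref{asymptotic_t} to the $b$-th diagnostic conditionally on $\F_{b-1}$. Once the starting point of that diagnostic lies inside the stationary ball above, the theorem gives $|\E[Q_i\mid\F_{b-1}]|\le C_2(\eta_b)\,\sd(Q_i\mid\F_{b-1})$, and Chebyshev's inequality then yields $\P(\sign(Q_i)<0\mid\F_{b-1})\ge \tfrac12 - \delta(\eta_b)$ with $\delta(\eta_b)\to 0$ as $\eta_b\to 0$. Thus the count of negative coherences stochastically dominates a Binomial$(w,\tfrac12-\delta(\eta_b))$ variable, and choosing $w,q$ as in the remark following Theorem~\ref{asymptotic_t} yields a uniform lower bound $\P(T_{D_b}=S\mid\F_{b-1})\ge p^\ast>0$ for all sufficiently large $b$.

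Finally, letting $K_b$ denote the number of S-returns among the first $b$ diagnostics, the conditional lower bound above couples $\{T_{D_b}\}$ with an i.i.d.~Bernoulli$(p^\ast)$ sequence and forces $K_b\to\infty$ with probability tending to one; hence $\eta_b=\gamma^{K_b}\eta_1\to 0$ and $t_b=\gamma^{-K_b}t_1\to\infty$, and substituting back into the contraction gives $\E[\|\theta_{b+1}^{in}-\theta^*\|^2]\to 0$. The main obstacle I expect lies in transferring Theorem~\ref{asymptotic_t} from its asymptotic, marginal form (in $t_1\to\infty$ for a single fresh SGD run) into a history-uniform conditional lower bound on $\P(T_{D_b}=S\mid\F_{b-1})$. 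Because the block length $t_b$ does not grow after an N-return, the argument must use the \textit{cumulative} past iterations across all previous blocks---rather than the current block length $t_b$ alone---to place the diagnostic's starting point in the stationary ball, and then upgrade the asymptotic conclusion into a quantitative conditional statement whose $\delta(\eta_b)$ remains usable inside Chebyshev; this is the most delicate piece.
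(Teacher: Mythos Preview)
Your overall strategy---show that the learning rate eventually goes to zero and then feed this into a contraction recursion---matches the paper's, but the mechanism you propose for the key step differs and contains a gap. You invoke Theorem~\ref{asymptotic_t} and Chebyshev to conclude $\P(\sign(Q_i)<0\mid\F_{b-1})\ge \tfrac12 - \delta(\eta_b)$. Chebyshev cannot deliver this: from $|\E[Q_i]|\le C_2(\eta)\,\sd(Q_i)$ you only get a one-sided \emph{upper} bound on tail probabilities, not a lower bound on $\P(Q_i<0)$. To push mass to the negative side you need distributional (anti-concentration or symmetry) information that Theorem~\ref{asymptotic_t} alone does not provide. The paper obtains exactly this by a different route: it argues by contradiction that if the learning rate were frozen at some $\eta^*$ forever, the chain $\{\theta_t\}$ converges to its stationary distribution, and then a Markov-chain CLT gives $l\cdot Q_i \approx \langle X_1,X_2\rangle$ with $X_1,X_2$ independent centered Gaussians, whence $\P(Q_i<0)$ is bounded below by a fixed $\alpha>0$ for any $q<(w-1)/w$. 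This sidesteps entirely the issue you flag about cumulative versus per-block iterations, since stationarity of the chain (not the length $t_b$) is what drives the argument.

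Two further pieces you omit but the paper treats explicitly: (i) the averaging $\theta_D=(\theta^{(1)}_{wl}+\theta^{(2)}_{wl})/2$ at the end of each diagnostic is handled by a Cauchy--Schwarz step showing it never hurts the squared distance to $\theta^*$; (ii) once one knows $\eta_b\to0$ almost surely, the paper separately verifies that the worst case---deterministic decay at every diagnostic---still gives $L_b\to0$ via the recursion $L_{b+1}\le c_1 L_b + c_2\gamma^b$ with $c_1\in(0,1)$, which is needed because fast decay of $\eta_b$ could in principle stall the contraction. Your final sentence gestures at a contraction argument but does not close this loop.
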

\begin{proof}
We first notice that the averaging at the end of each diagnostic can be ignored, and replaced by simply considering each diagnostic as a single thread made of $wl$ iterates. For the first diagnostic, for example, we have that
\begin{align*}
\E\big[\|\theta_{D_1} - \theta^*\|^2\big] &\leq \E\l[\big\|\frac{\theta_{t_1+wl}^{(1)}  + \theta_{t_1+wl}^{(2)} - 2\theta^*}{2}\big\|^2\r] \\
&= \frac14 \E\big[\|\theta_{t_1+wl}^{(1)} - \theta^*\|^2\big] + \frac14 \E\big[\|\theta_{t_1+wl}^{(2)} - \theta^*\|^2\big] \\
&\quad + \frac12\E\l[\langle \theta_{t_1+wl}^{(1)} - \theta^*, \theta_{t_1+wl}^{(2)} - \theta^* \rangle\r] \\
&\leq \E\big[\|\theta_{t_1+wl}^{(1)} - \theta^*\|^2\big]
\end{align*}
where we have used the fact that each thread is identically distributed, together with the Cauchy-Schwarz inequality.
The same inequality, with appropriate indexes, is true for all the diagnostics.

Our proof is now divided in two parts. First we show that, in the extreme case where each diagnostic detects stationarity deterministically, the learning rate does not decay too fast and we still have convergence to $\theta^*$. Then we prove that eventually the learning rate decreases to zero when the number of diagnostics goes to infinity.
We initially notice that 
\begin{equation*}
\l(1 - 2\eta \gamma^b (\mu - L^2 \eta\gamma^b)\r)^{t_1/\gamma^b} \leq e^{- 2\eta (\mu - L^2 \eta) t_1} =: c_1
\end{equation*}
where $c_1 \in (0, 1)$. We also have
$$\frac{G^2 \eta \gamma^b}{\mu - L^2 \eta \gamma^b} \leq \frac{G^2 \eta \gamma^b}{\mu - L^2 \eta} =: c_2\cdot\gamma^b$$
We define $L_b$ to be the expected square distance from the minimizer, $\E\big[\|\theta_{D_b} - \theta^*\|^2\big]$, at the end of the b$^{th}$ diagnostic, and $L_0 = \E\big[\|\theta_0- \theta^*\|^2\big]$. If the learning rate decreases deterministically, then we have that after the b$^{th}$ diagnostic, the learning rate is $\eta\gamma^b$ and the length of the single thread is $\lfloor t_1/\gamma^b \rfloor$. 
By recursion, using Lemma \ref{lemma_squared_distance} in the main text, we have that
\begin{align*}
L_{b+1} &\leq \l(1 - 2\eta \gamma^b (\mu - L^2 \eta\gamma^b)\r)^{t_1/\gamma^b}\cdot L_b + \frac{G^2 \eta \gamma^b}{\mu - L^2 \eta \gamma^b}  \\
&\leq c_1\cdot L_b + c_2\cdot\gamma^b \\
&\leq c_1^{b+1}\cdot L_0 + c_2\cdot\sum_{i=0}^b \gamma^{b-i} c_1^i \\
&\leq c_1^{b+1}\cdot L_0 + c_2\cdot b \cdot\max\{\gamma, c_1\}^b
\end{align*}
Since $\gamma, c_1 \in (0, 1)$, this proves that $L_{b} \to 0$ as the number of diagnostics $b \to \infty$.

To prove that it is impossible for the learning rate to remain fixed on a certain value for infinite many iterations, we show that the probability that the learning rate reaches a point where it never decreases is zero. We assume by contradiction that there exists a point in the SplitSGD procedure where the learning rate is $\eta^*$ and, from that moment on, it is never reduced again. Following \citet{dieuleveut2017bridging}, we know that the Markov chain $\{\theta_t\}$ in \eqref{SGD} with constant learning rate $\eta^*$ will converge in distribution to its stationary distribution $\pi_{\eta^*}$. This means that
\begin{equation}
\sup_{s, t \geq T} \|\E[\theta_t] - \E[\theta_s]\| \to 0 \qquad\text{as } T \to \infty
\end{equation}
and if we let $s = t+1$ we realise that $\|\E[g(\theta_t, Z_{t+1})]\| \to 0$ as $t \to \infty$. Notice that also the Markov chain $\{g(\theta_t, Z_{t+1})\}$ converges to a stationarity distribution when $\{\theta_t\}$ does, so we can use the Central Limit Theorem for Markov chains \citep{maxwell2000central} to get that
\begin{equation}{\label{markov_clt}}
\frac{1}{\sqrt{l}} \sum_{j = 1}^l g(\theta_{t+j}, Z_{t+j+1}) \stackrel{d}{\rightarrow} N(0, \sigma^2) \qquad \text{as } l \to \infty
\end{equation}
where $\sigma^2 > 0$. We are now going to use the fact that $\sign{(Q_i)} = \sign{(l\cdot Q_i)}$. Thanks to (\ref{markov_clt}) we can now write
\begin{align*}
l\cdot Q_i &= \l\langle \frac{1}{\sqrt{l}} \sum_{j = 1}^l g^{(1)}_{t + (i-1)l + j}, \frac{1}{\sqrt{l}} \sum_{j = 1}^l g^{(2)}_{t + (i-1)l + k} \r\rangle \\
&= \langle X_1 + o_p(1), X_2 + o_p(1) \rangle \\
&= \langle X_1, X_2 \rangle + o_p(1)
\end{align*}
where $X_1, X_2$ are independent $N(0,\sigma^2)$ (the independence being true for $l \to \infty$ and $i = 2,..., w$) and the $o_p(1)$ are defined as $l \to \infty$. Since $l\cdot Q_i$ is approximately distributed as $\langle X_1, X_2 \rangle$, which has mean zero and positive variance, then for any choice of $q < (w-1)/w$ we know that there is a positive probability $\alpha > 0$ that the proportion of negative gradient coherences observed is greater than $q$, which means that stationarity is detected.
The probability that the learning rate $\eta^*$ never decays is then bounded above by $\lim_{b\to\infty} (1-\alpha)^{b} = 0$, so the learning rate gets eventually reduced with probability $1$.
\end{proof}

\begin{figure}[t]
\centering
\begin{minipage}{.49\textwidth}
  \centering
  \includegraphics[width=.9\linewidth]{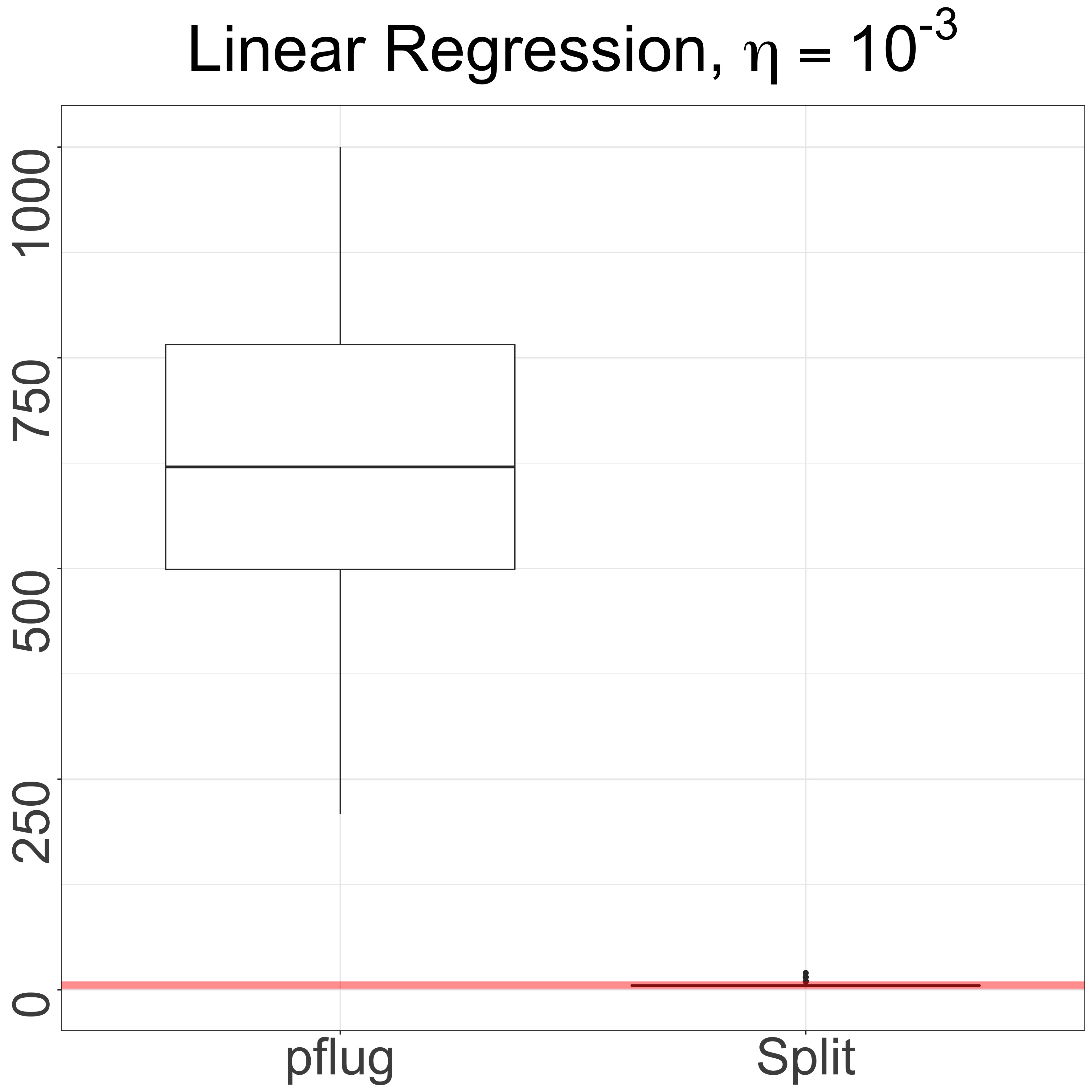}
  \vspace{2mm}
\end{minipage}%
\hfill
\begin{minipage}{.49\textwidth}
  \centering
  \includegraphics[width=.9\linewidth]{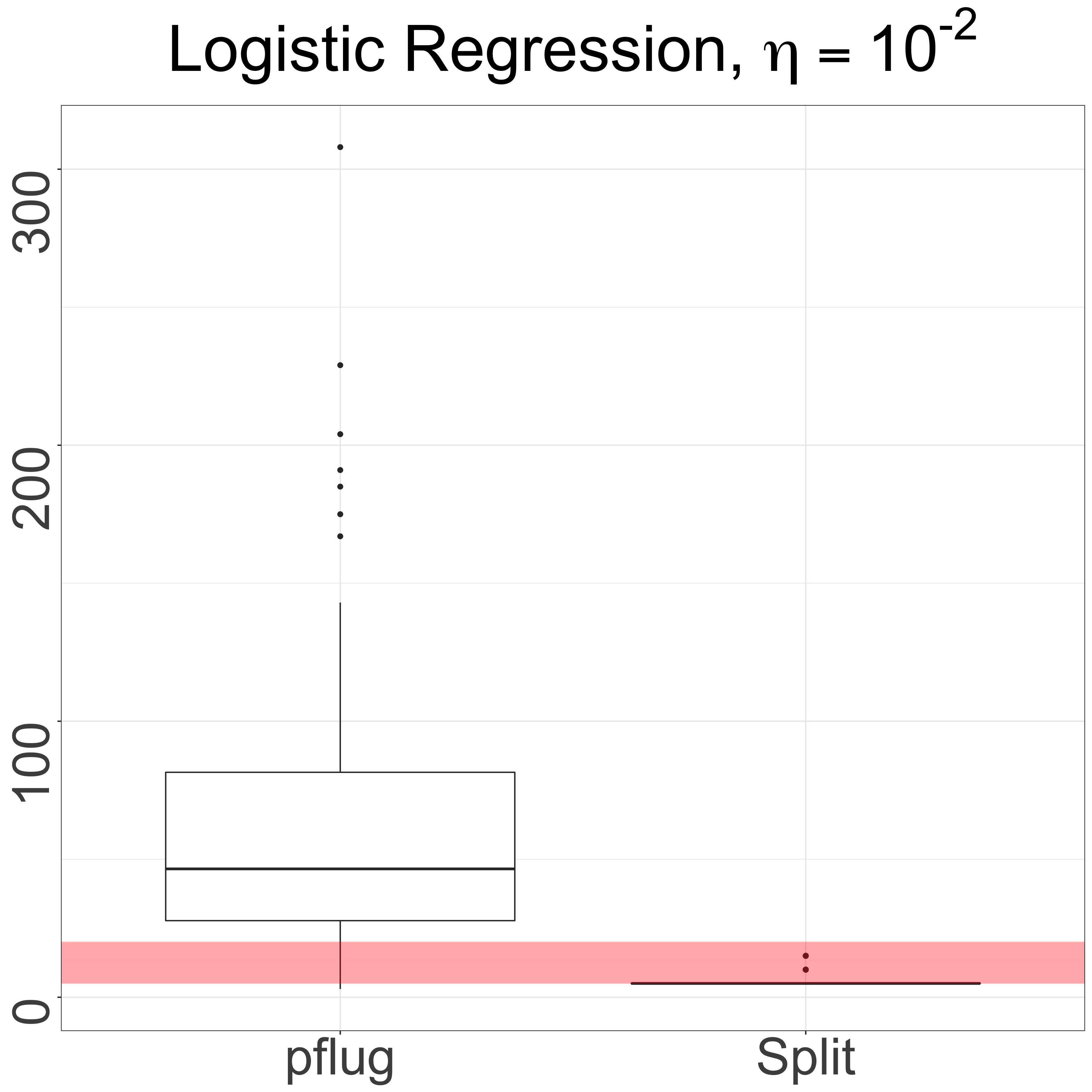}
  \vspace{2mm}
\end{minipage}%
\hfill
\begin{minipage}{.49\textwidth}
  \centering
  \includegraphics[width=.9\linewidth]{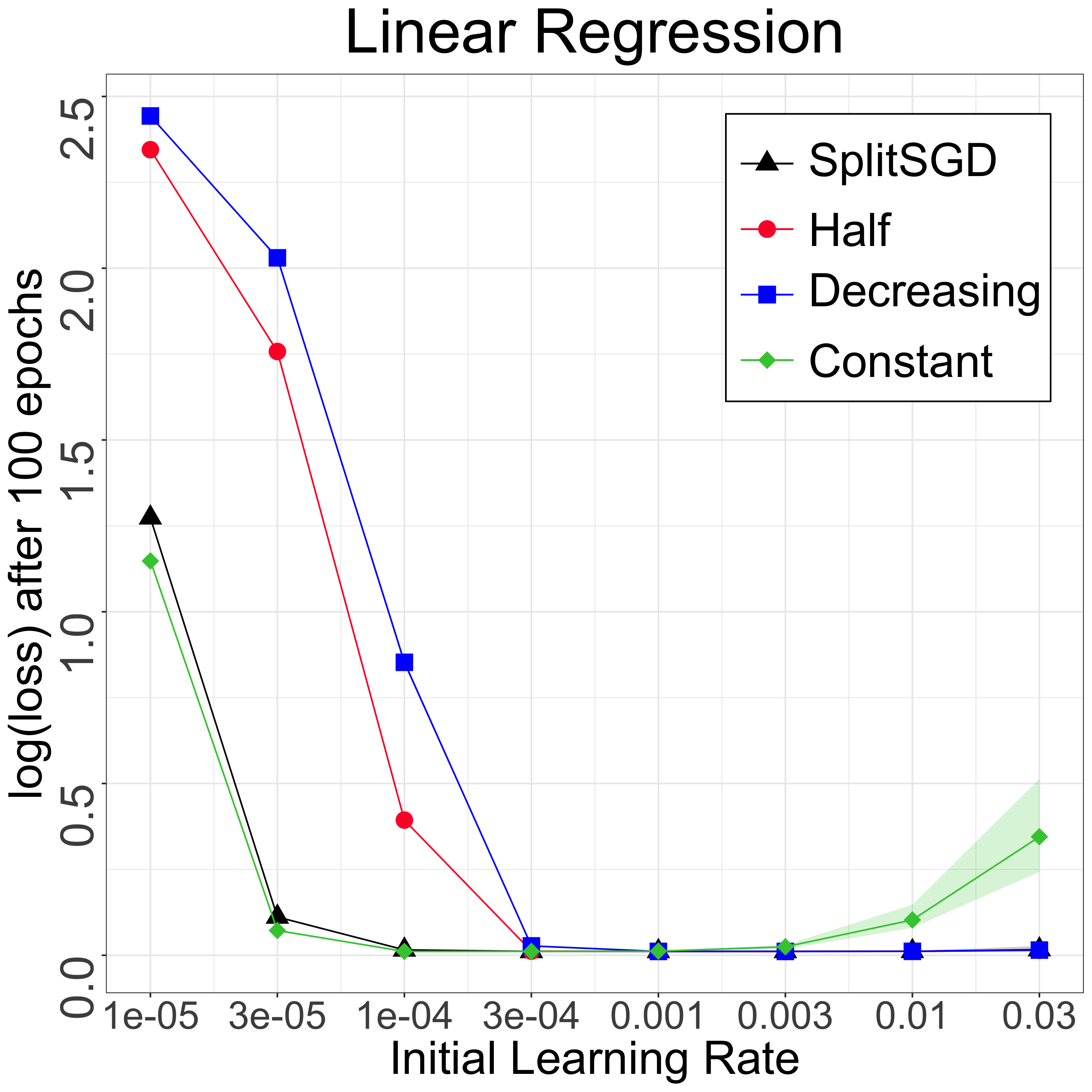}
\end{minipage}%
\hfill
\begin{minipage}{.49\textwidth}
  \centering
  \includegraphics[width=.9\linewidth]{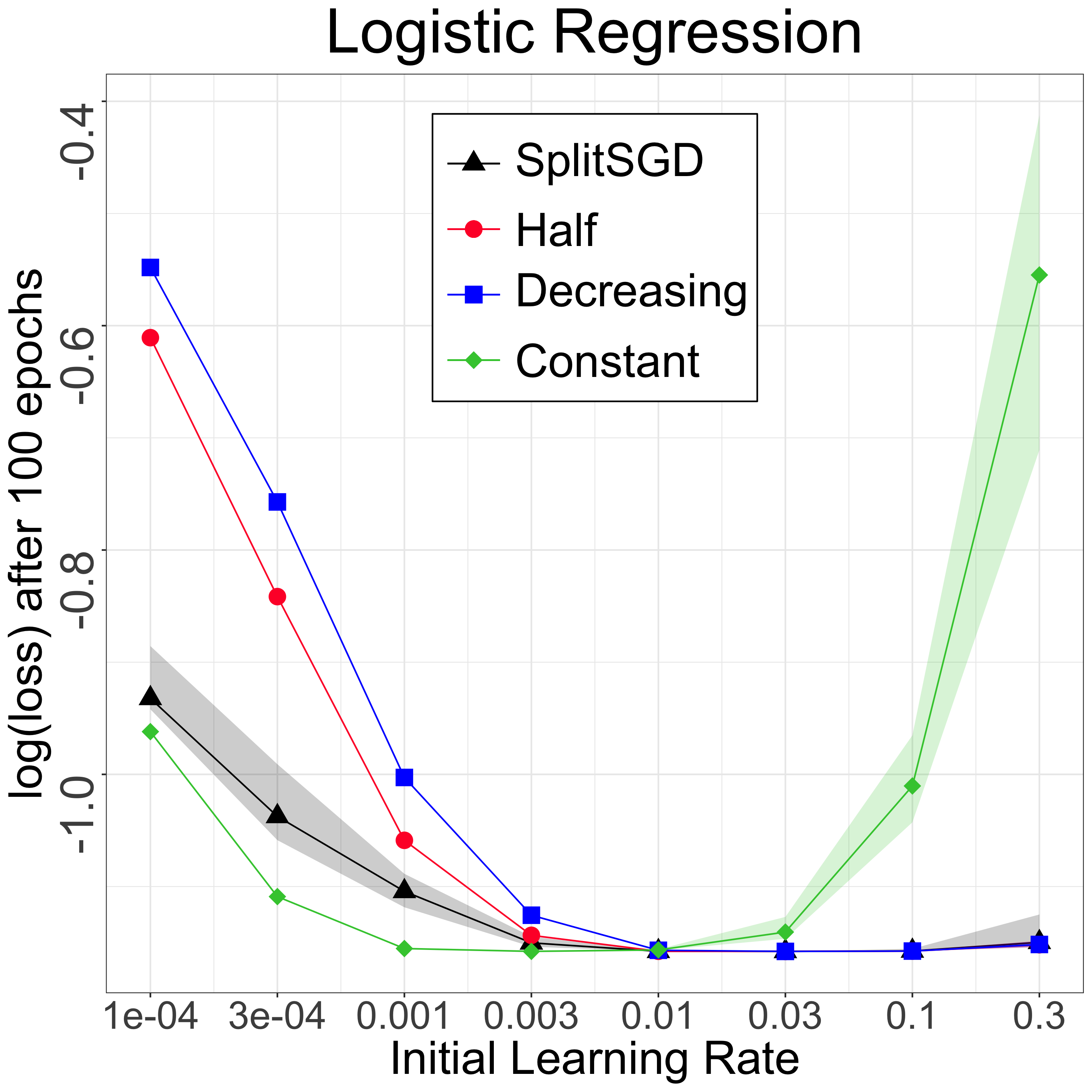}
\end{minipage}%
\caption{(Top) Comparison between Splitting and \texttt{pflug} Diagnostics on linear and logistic regression. The y-axis represents the epochs and the red bands are the epochs where stationarity should be detected, while the boxplots represent the distribution of when the method actually detects stationarity. The \texttt{pflug} Diagnostics incurs in the risk of waiting too long after stationarity is reached, while the Splitting Diagnostic does not as a checkpoint is set every fixed number of iterations.  (Bottom) comparison of the log(loss) achieved after $100$ epochs between SplitSGD, SGD$^{1/2}$ (Half) and SGD with constant or decreasing learning rate on linear and logistic regression. More details are in Section \ref{convex_section}.}
\label{experiment_convex}
\end{figure}
\vspace{-5mm}

%%%%%%%%%%%%%%%%%%%%%%%%%%%%%%%%%%%%%%%%%%%%%%%
%%%%%%%%%%%%%%%%%%%%%%%%%%%%%%%%%%%%%%%%%%%%%%%
\section{Experiments}
{\label{experiments}}

We now compare the Splitting Diagnostic and SplitSGD procedure with other diagnostic and optimization techniques in both convex and non-convex settings.

%%%%%%%%%%%%%%%%%%%%%%%%%%%%%%%%%
\subsection{Convex Objective}
{\label{convex_section}}
The setting is described in details in Appendix \ref{appendix_choice_q}. We use a feature matrix $X \in \R^{n \times d}$ with standard normal entries and $n = 1000$, $d = 20$ and $\theta^*_j = 5\cdot e^{-j/2}$ for $j = 1,..., 20$. The key parameters are $t_1 = 4, w = 20, l = 50$ and $q = 0.4$. For both linear and logistic regression, we use the streaming version of SGD where the batch size is set to 1, meaning that each data point in the training set is observed individually. A sensitivity analysis is in Section \ref{sensitivity_analysis_sec}.

{\bf Comparison between splitting and \texttt{pflug} diagnostic.} In the top panels of Figure \ref{experiment_convex} we compare the Splitting Diagnostic with the \texttt{pflug} Diagnostic introduced in \citet{ct18}.
The boxplots are obtained running both diagnostic procedures from a starting point $\theta_0 = \theta_s + \epsilon'$, where $\epsilon' \sim N(0, 0.01 I_d)$ is multivariate Gaussian and $\theta_s$ has the same entries of $\theta^*$ but in reversed order, so $\theta_{s, j} = 5\cdot e^{-(d-j)/2}$ for $j = 1,..., 20$. Each experiment is repeated $100$ times. 
For the Splitting Diagnostic, we run SplitSGD and declare that stationarity has been detected at the first time that a diagnostic gives result $T_D = S$, and output the number of epochs up to that time.
For the \texttt{pflug} diagnostic, we stop when the running sum of dot products used in the procedure becomes negative at the end of an epoch. 
The maximum number of epochs is $1000$, and the red horizontal bands represent the approximate values for when we can assume that stationarity has been reached, based on when the loss function of SGD with constant learning rate stops decreasing.
We can see that the result of the Splitting Diagnostic is close to the truth, while the \texttt{pflug} Diagnostic incurs the risk of waiting for too long, when the initial dot products of consecutive noisy gradients are positive and large compared to the negative increments after stationarity is reached. The Splitting Diagnostic does not have this problem, as a checkpoint is set every fixed number of iterations. The previous computations are then discarded, and only the new learning rate and starting point are stored. In Appendix \ref{appendix_comparison_pflug} we show more configurations of learning rates and starting points.

{\bf Comparison between SplitSGD and other optimization procedures.} Here we set the decay rate to the standard value $\gamma = 0.5$, and compare SplitSGD with SGD with constant learning rate $\eta$, SGD with decreasing learning rate $\eta_t \propto 1/\sqrt{t}$ (where the initial learning rate is set to $20\eta$), and SGD$^{1/2}$ \citep{b18}, where the learning rate is halved deterministically and the length of the next thread is double that of the previous one. For SGD$^{1/2}$ we set the length of the initial thread to be $t_1$, the same as for SplitSGD. In the bottom panels of Figure \ref{experiment_convex} we report the log of the loss that we achieve after $100$ epochs for different choices of the initial learning rate. It is clear that keeping the learning rate constant is optimal when its initial value is small, but becomes problematic for large initial values. On the contrary, deterministic decay can work well for larger initial learning rates but performs poorly when the initial value is small.
Here, SplitSGD shows its robustness with respect to the initial choice of the learning rate, performing well on a wide range of initial learning rates.

%%%%%%%%%%%%%%%%%%%%%%%%%%%%%%%%%
\subsection{Deep Neural Networks}
{\label{neural_network}}

 \begin{figure*}[tbp]
\begin{minipage}{.49\textwidth}
  \centering
  \includegraphics[width=0.9\linewidth]{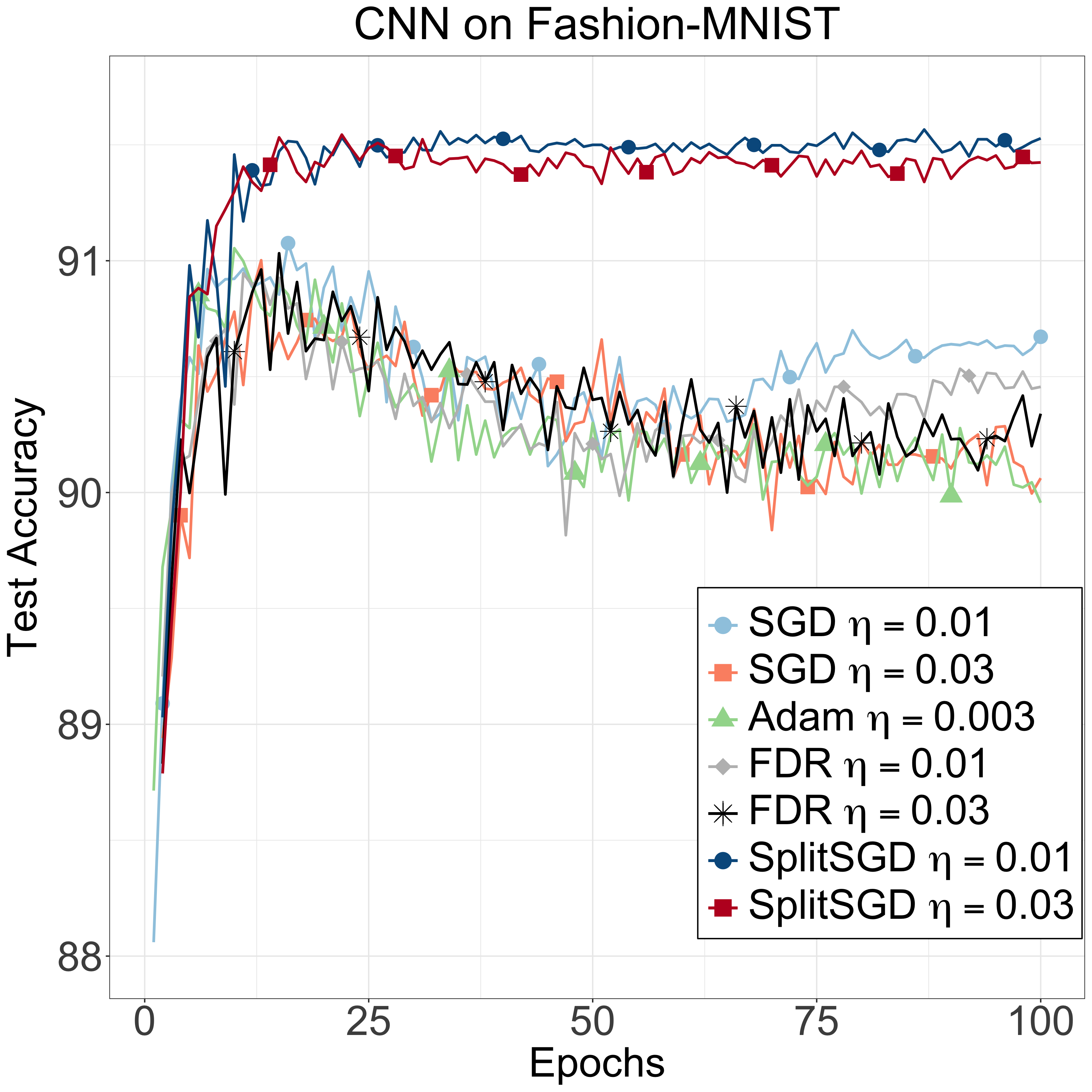}
  \vspace{2mm}
\end{minipage}%
\hfill
\begin{minipage}{.49\textwidth}
  \centering
  \includegraphics[width=.9\linewidth]{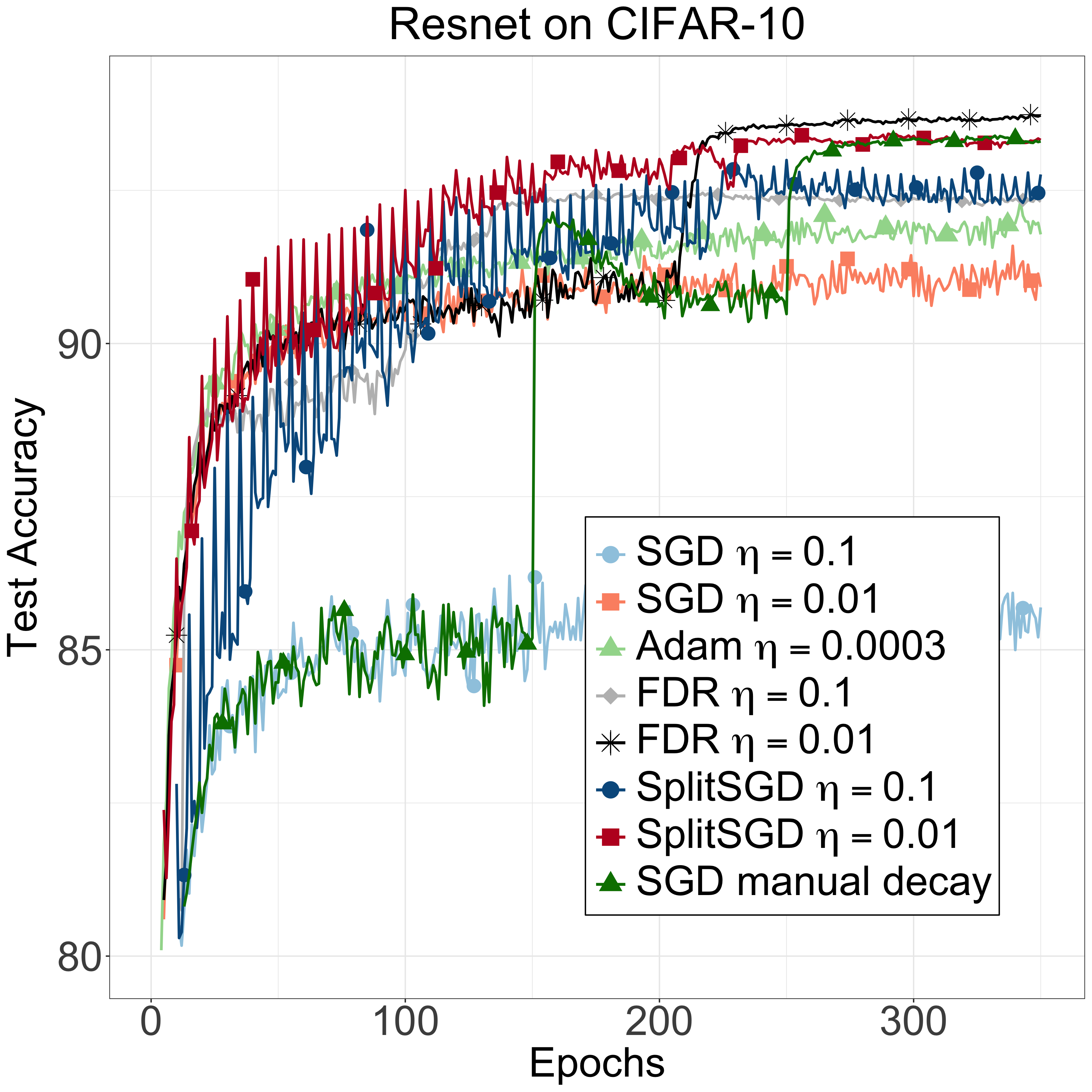}
  \vspace{2mm}
\end{minipage}%
\hfill
\begin{minipage}{.49\textwidth}
  \centering
  \includegraphics[width=.9\linewidth]{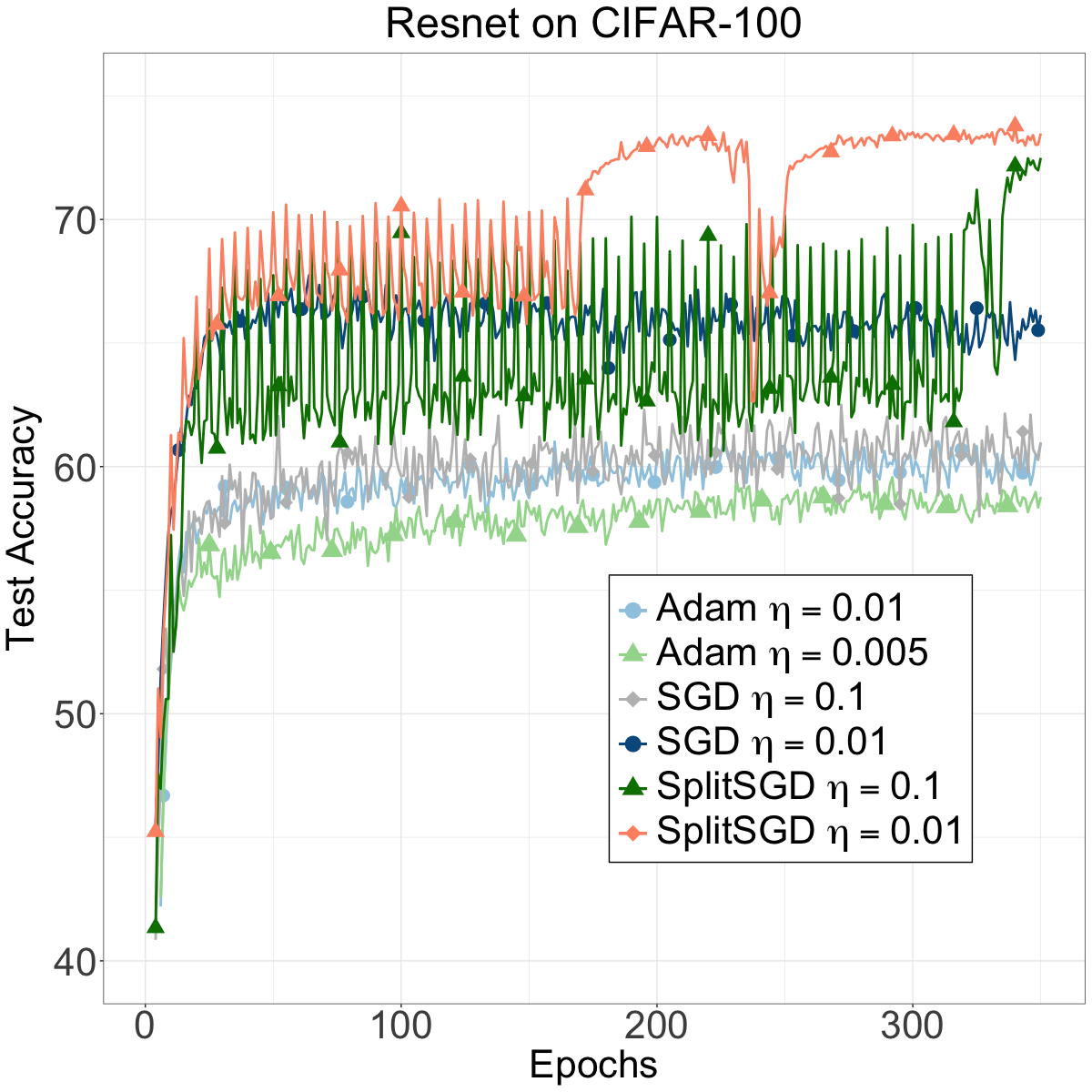}
\end{minipage}%
\hfill
\begin{minipage}{.49\textwidth}
  \centering
  \includegraphics[width=.9\linewidth]{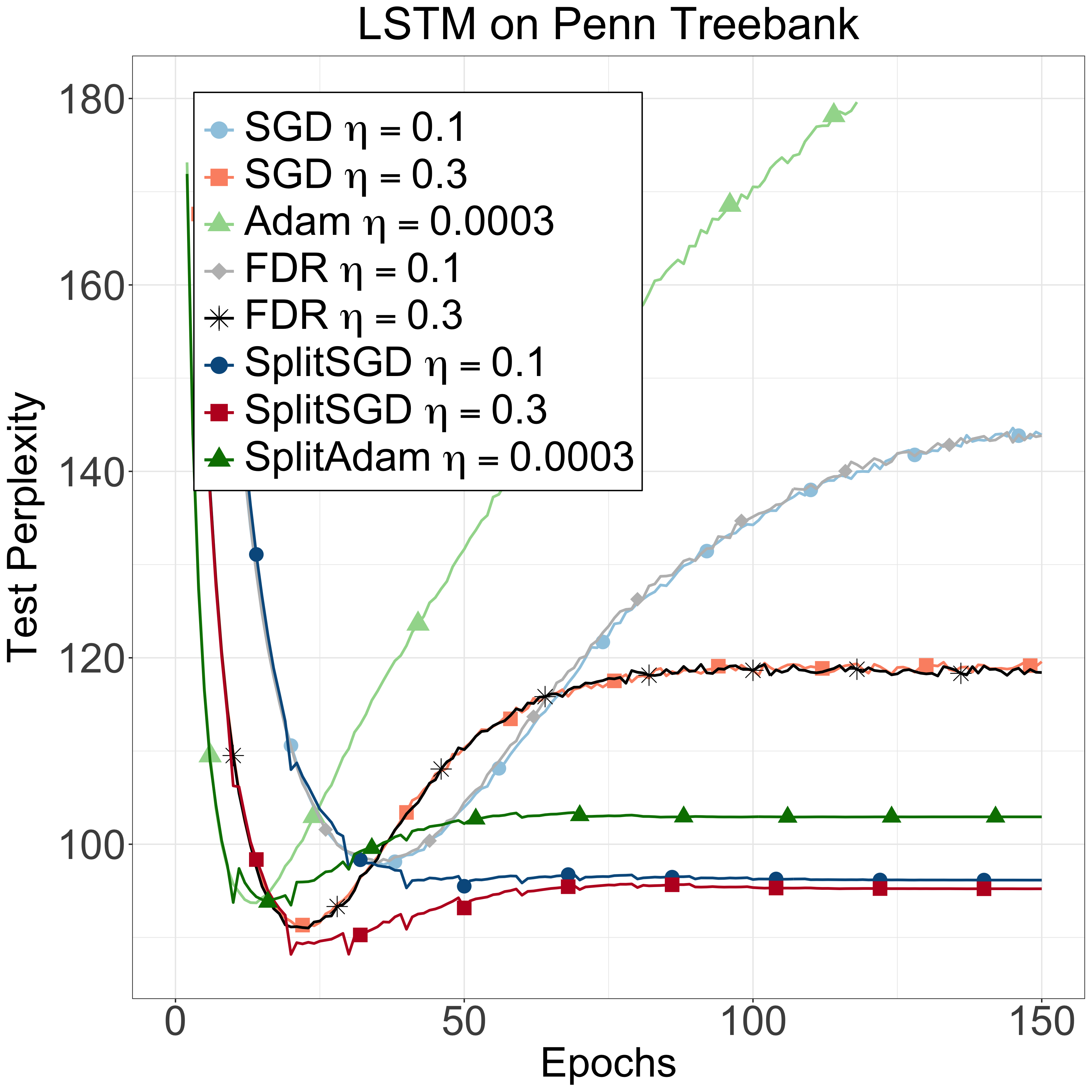}
\end{minipage}%
\caption{Performance of SGD, Adam, FDR and SplitSGD in training various neural networks across different tasks and datasets: (top left) a convolutional neural network on a 10-class computer vision task on Fashion-MNIST, (top right) a Resnet-18  on a 10-class computer vision task on CIFAR-10, (bottom left) a Resnet-18  on a 100-class computer vision task on CIFAR-100, and (bottom right) an LSTM on a NLP prediction task on Penn Treebank. SplitSGD proved to be beneficial in (i) better robustness to the choice of initial learning rates, (ii) achieving higher test accuracy when possible, and (iii) reducing the effect of overfitting. Details of each plot are in Section \ref{neural_network}.}
\label{experiment_deep_learning}
\end{figure*}

To train deep neural networks, instead of using the simple SGD with a constant learning rate inside the SplitSGD procedure, we adopt SGD with momentum \citep{qian1999momentum}, where the momentum parameter is set to $0.9$. SGD with momentum is a popular choice in training deep neural networks \citep{sutskever2013importance}, and when the learning rate is constant, it still exhibits both transient and stationary phase.
We introduce three more differences with respect to the convex setting: (i) the gradient coherences are defined for each layer of the network separately, then counted together to globally decay the learning rate for the whole network, (ii) the length of the single thread is not increased if stationarity is detected, and (iii) we consider the default parameters $q = 0.25$ and $w = 4$ for each layer. We expand on these differences in Appendix \ref{appendix_splitsgd_DL}. As before, the length of the Diagnostic is set to be one epoch, and $t_1 = 4$.
We compare SplitSGD against SGD with momentum and two other optimization methods widely used in practice.
Adam \citep{kingma2014adam}, which has been developed as an improvement over AdaGrad \citep{dhs11} and RMSprop \citep{tieleman2012lecture}, stores the decaying averages of the first and second moment of the past gradients and uses them to compute an adaptive learning rate for each parameter.
FDR \citep{yaida2018fluctuation}, instead, uses  a stationarity detection rule based on two fluctuation-dissipation relations to decay the constant learning rate of SGD with momentum.
We train our models using a range of different learning rates for all these methods, and report the ones that show the best results.
Notice that, although $\eta = \num{3e-4}$ is the popular default value for Adam, this method is still sensitive to the choice of the learning rate, so the best performance can be achieved with other values. For FDR, we tested each setting with the parameter t\_adaptive $\in \{100, 1000\}$, which gave similar results.
It has also been proved that SGD generalizes better than Adam \citep{keskar2017improving, luo2019adaptive}. We show that in many situations SplitSGD, using the same default parameters, can outperform both.
In Figure \ref{experiment_deep_learning} we report the average results of $5$ runs. In Figure \ref{experiment_deep_learning_bands} in the appendix we consider the same plot but also add $90\%$ confidence bands, omitted here for better readability.

%%%%%%%%%%%%%%%%%%%%%%%%

%%%%%%%%%%%%%%%%%%%%%%%%
{\bf Convolutional neural networks (CNNs).} We consider a CNN with two convolutional layers and a final linear layer trained on the Fashion-MNIST dataset \citep{xiao2017fashion}. We set $\eta \in \{\num{1e-2}, \num{3e-2}, \num{1e-1}\}$ for SGD and SplitSGD, $\eta \in \{\num{1e-2}, \num{1e-1}\}$ for FDR and $\eta \in \{\num{3e-4}, \num{1e-3}, \num{3e-3}, \num{1e-2}\}$ for Adam. The batch size is $64$ across all models. In the first panel of Figure \ref{experiment_deep_learning} we see the interesting fact that SGD, FDR and Adam all show clear signs of overfitting, after reaching their peak in the first $20$ epochs. SplitSGD, on the contrary, does not incur in this problem, but for a combined effect of the averaging and learning rate decay is able to reach a better overall performance without overfitting. We also notice that SplitSGD is very robust with respect to the choice of the initial learning rate, and that its peak performance is better than the one of any of the competitors.

%%%%%%%%%%%%%%%%%%%%%%%%
{\bf Residual neural networks (ResNets) on Cifar-10.} 
We consider a 18-layer ResNet\footnote{More details at \url{https://pytorch.org/docs/stable/torchvision/models.html}.} and evaluate it on the CIFAR-10 dataset  \citep{krizhevsky2009learning}. 
We use the initial learning rates $\eta \in \{\num{1e-3}, \num{1e-2}, \num{1e-1}\}$ for SGD and SplitSGD, $\eta \in \{\num{1e-2}, \num{1e-1}\}$ for FDR and $\eta \in \{\num{3e-5}, \num{3e-4}, \num{3e-3}\}$ for Adam, and also consider the SGD procedure with manual decay that consists in setting $\eta = \num{1e-1}$ and then decreasing it by a factor $10$ at epoch $150$ and $250$. For consistency, we set the batch size to $128$ across all models. In the second panel of Figure \ref{experiment_deep_learning} we see a classic behavior for SplitSGD. The averaging after the diagnostics makes the test accuracy peak, but the improvement is only momentary as the learning rate is not decreased. When the decay happens, the peak is maintained and the fluctuations get smaller. We can see that SplitSGD, with both initial learning rate $\eta = \num{1e-2}$ and  $\eta = \num{1e-1}$ is better than both SGD and Adam and that one setting achieves the same final test accuracy of the manually tuned method in less epochs. The FDR method is showing excellent performance when $\eta = 0.01$ and a worse result when $\eta = 0.1$.
In Appendix \ref{appendix_more_experiments_DL} we see a similar plot obtained with the neural network VGG19.

%%%%%%%%%%%%%%%%%%%%%%%%
{\bf Residual neural networks (ResNets) on Cifar-100.}
To show the performance of SplitSGD on a more complex classification task, we have also evaluated 18-layer ResNet on the CIFAR-100 dataset. Here we only compared SplitSGD with SGD and Adam, and for all considered $\eta \in \{\num{5e-3}, \num{1e-2}, \num{3e-1}, \num{1e-1}\}$. As with CIFAR-10, we set the batch size to $128$ across all models. In the third panel of \ref{experiment_deep_learning} we reported the two best runs for each method, and from those we can see that SplitSGD is able to achieve a higher peak accuracy compared to the other methods. Here the Splitting Diagnostic appears to work slightly better when $\eta = 0.01$, but also for $\eta = 0.1$ we can see at the very end that the method stabilizes to a higher test accuracy than the competitors.

\begin{figure}[t]
\centering
\begin{subfigure}{0.35\textwidth}
\centering
\includegraphics[width = \linewidth]{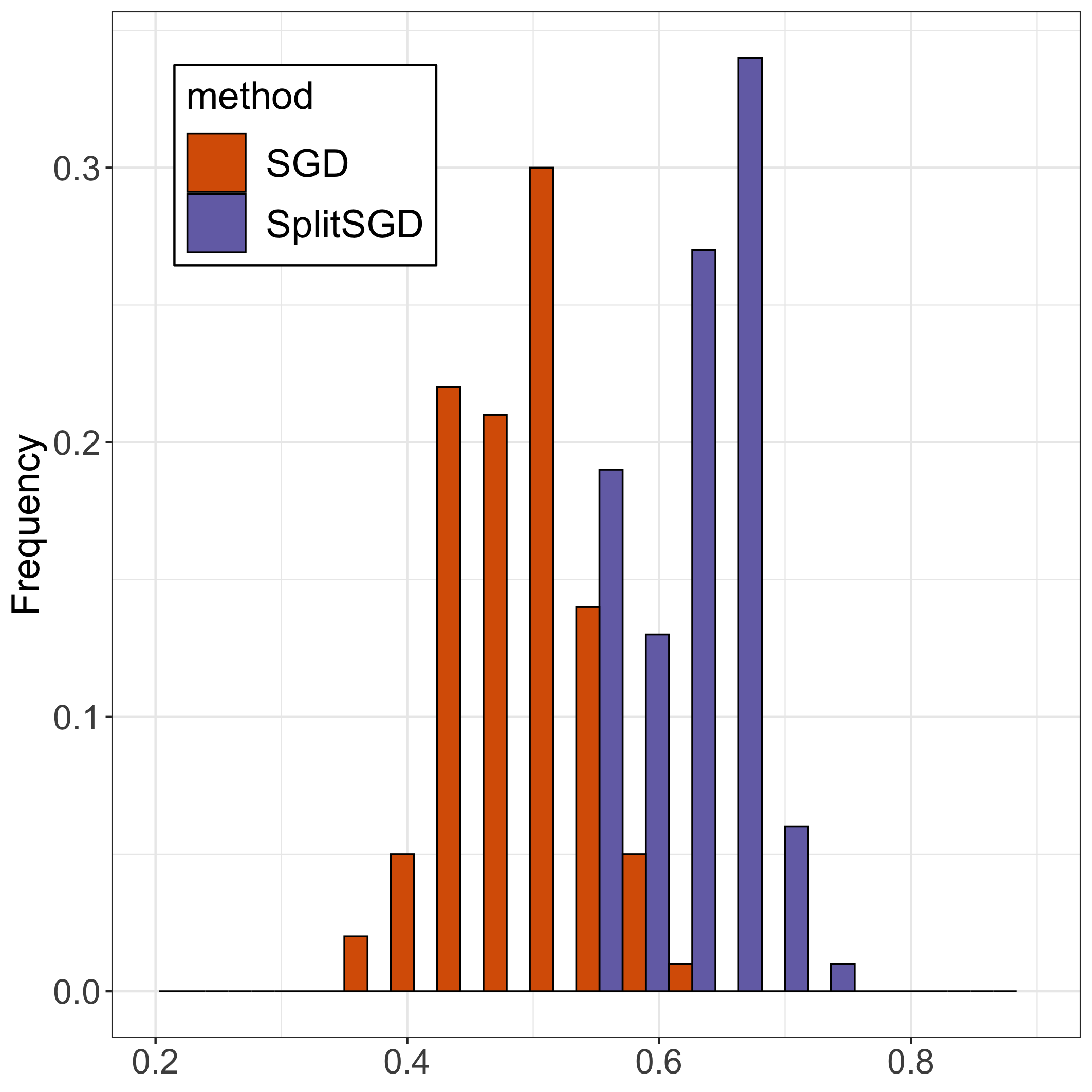}
\end{subfigure} \hfil
\begin{subfigure}{0.35\textwidth}
\centering
\includegraphics[width = \linewidth]{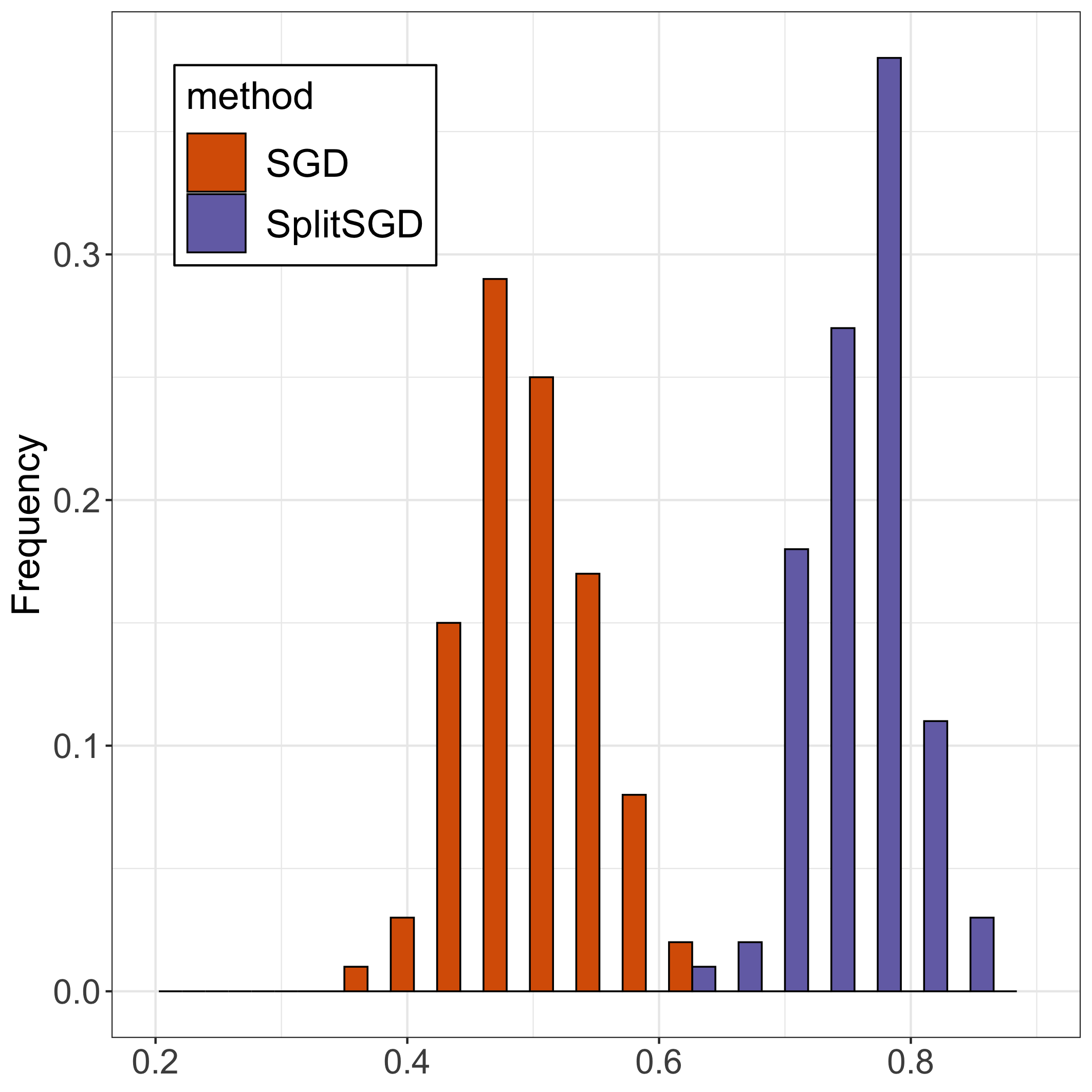}
\end{subfigure}
\caption{Histogram of the proportion of times that SplitSGD and SGD converged to the global minimum, which shows SplitSGD achieving the global minimum significantly more often. We run $100$ simulations and evaluate the proportion $100$ times each. We consider $1000$ updates with $\eta=0.01$ and SplitSGD averages its two threads every $200$ updates. We consider $a = 2.1$ (left) and $a = 4$ (right).}
\label{explanation_overfitting}
\end{figure}

%%%%%%%%%%%%%%%%%%%%%%%%
{\bf Recurrent neural networks (RNNs).} For RNNs, we evaluate a two-layer LSTM \citep{hochreiter1997long} model on the Penn Treebank \citep{marcus1993building} language modelling task. We use $\eta \in \{0.1, 0.3, 1.0\}$ for both SGD and SplitSGD, $\eta \in \{0.1, 0.3\}$ for FDR, $\eta \in \{ \num{1e-4}, \num{3e-4}, \num{1e-3} \}$ for Adam and also introduce SplitAdam, a method similar to SplitSGD, but with Adam in place of SGD with momentum. Here we set the batch size to $20$ across all models.
As shown in the fourth panel of Figure \ref{experiment_deep_learning}, we can see that SplitSGD outperforms SGD and SplitAdam outperforms Adam with regard to both the best performance and the last performance. 
FDR is not showing any improvement compared to standard SGD, meaning that in this framework it is unable to detect stationarity and decay the learning rate accordingly.
Similar to what already observed with the CNN, we need to note that our proposed splitting strategy has the advantage of reducing the effect of overfitting, which is very severe for SGD, Adam and FDR while very small for SplitAdam and SplitSGD. We postpone the theoretical understanding for this phenomena as our future work, but we make an attempt to develop an intuition on why this could be the case with an example.

\begin{figure}[t]
\begin{minipage}{.49\textwidth}
  \centering
  \includegraphics[width=.9\linewidth]{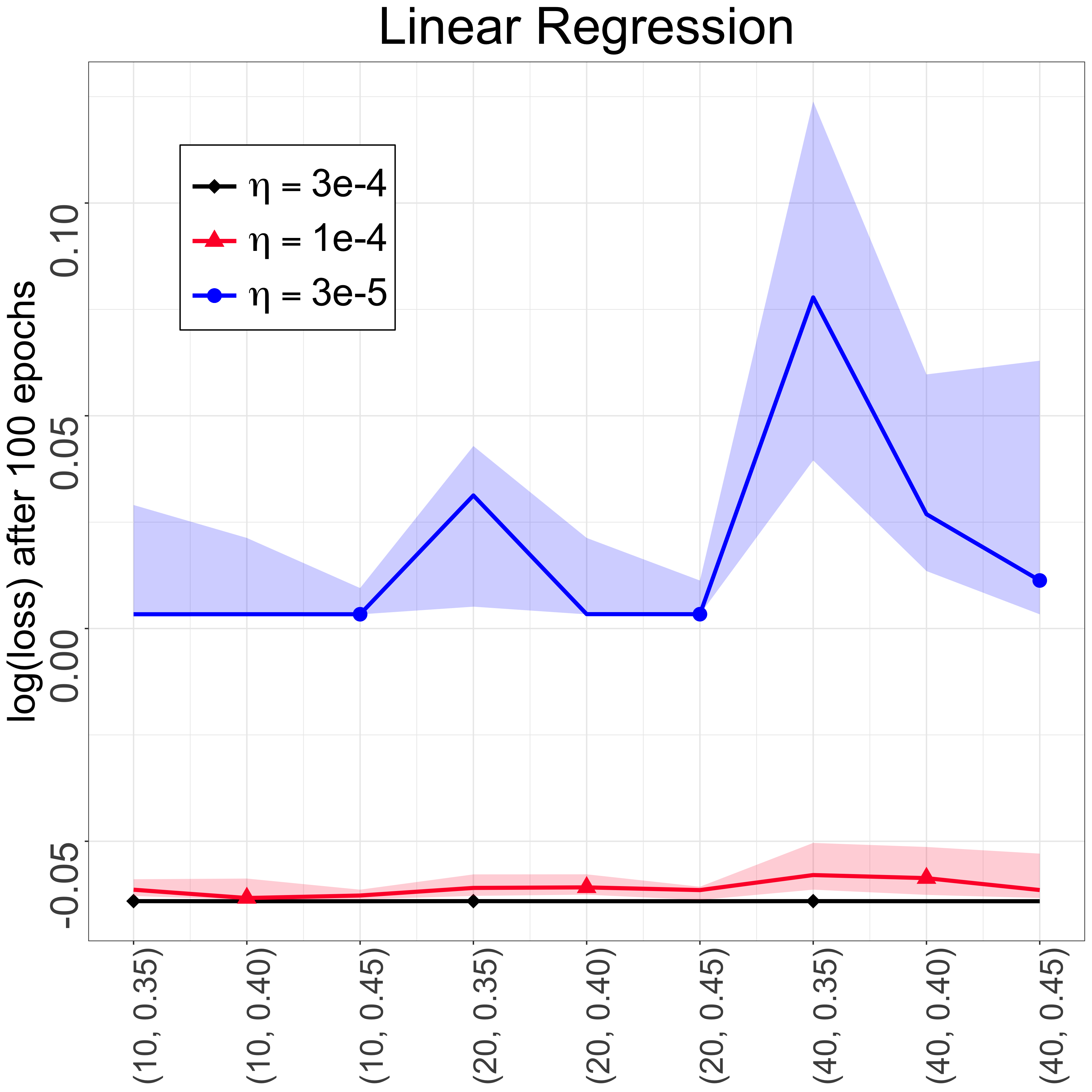}
  \vspace{2mm}
\end{minipage}%
\hfill
\begin{minipage}{.49\textwidth}
  \centering
  \includegraphics[width=.9\linewidth]{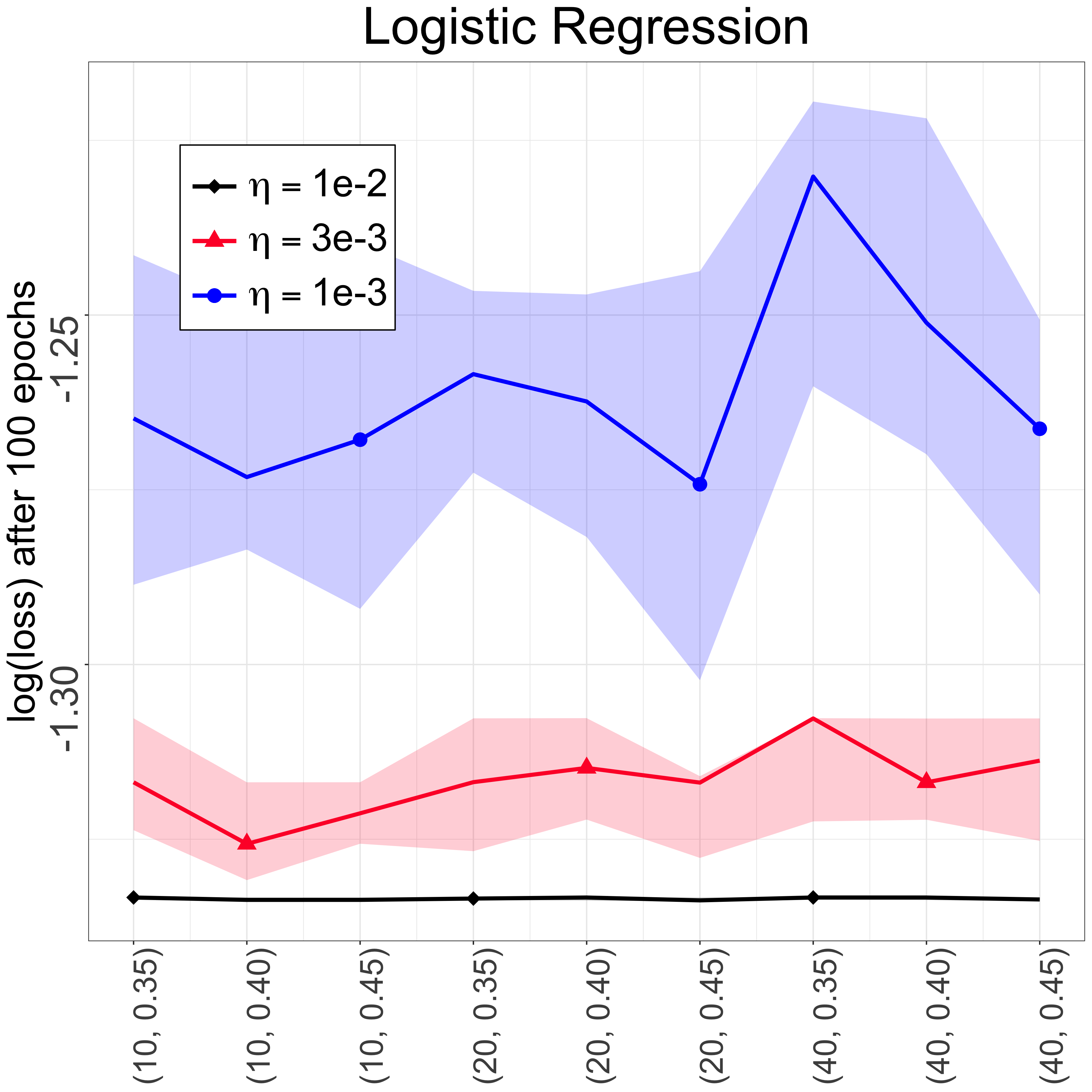}
  \vspace{2mm}
\end{minipage}%
\hfill
\begin{minipage}{.33\textwidth}
  \centering
  \includegraphics[width=.9\linewidth]{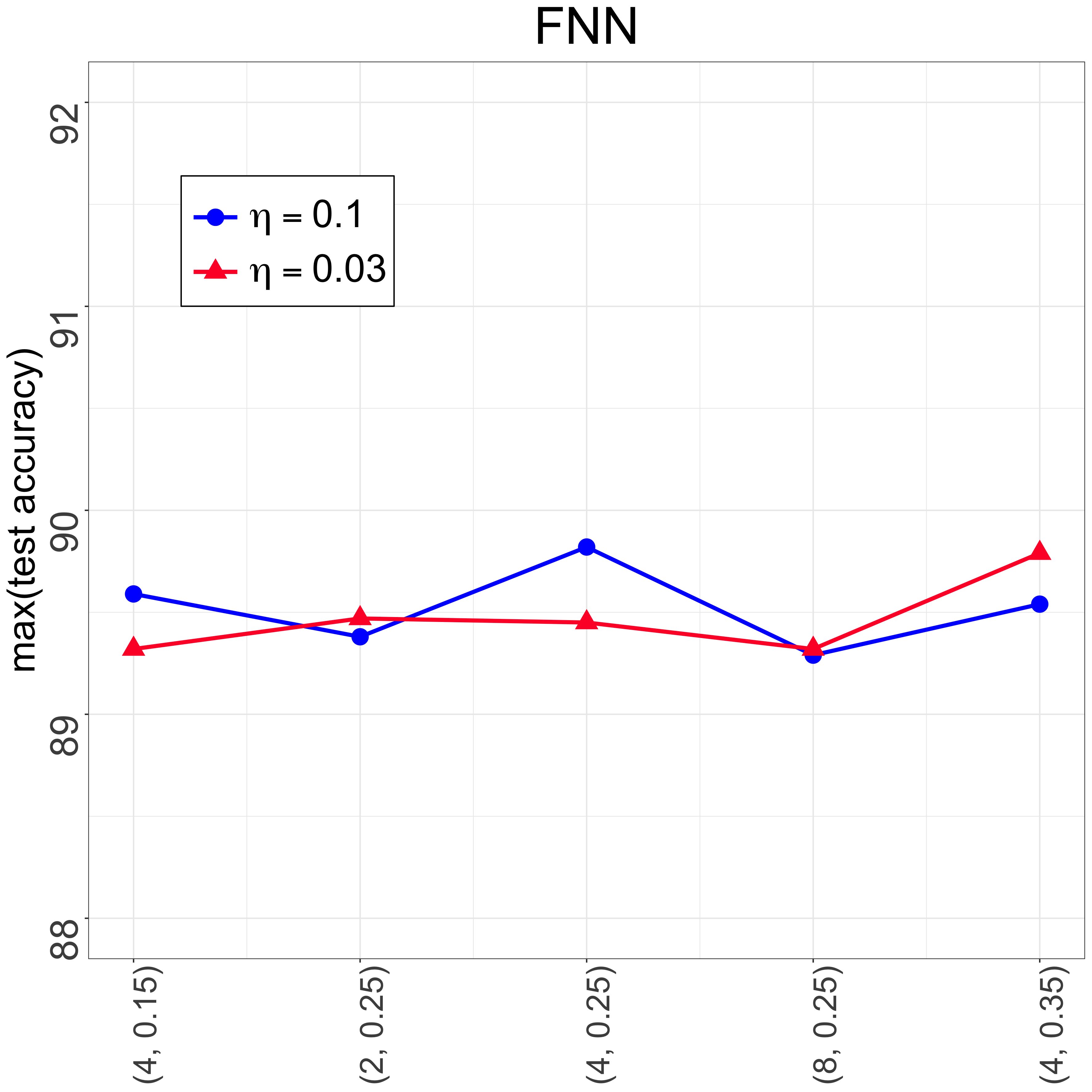}
\end{minipage}%
\hfill
\begin{minipage}{.33\textwidth}
  \centering
  \includegraphics[width=.9\linewidth]{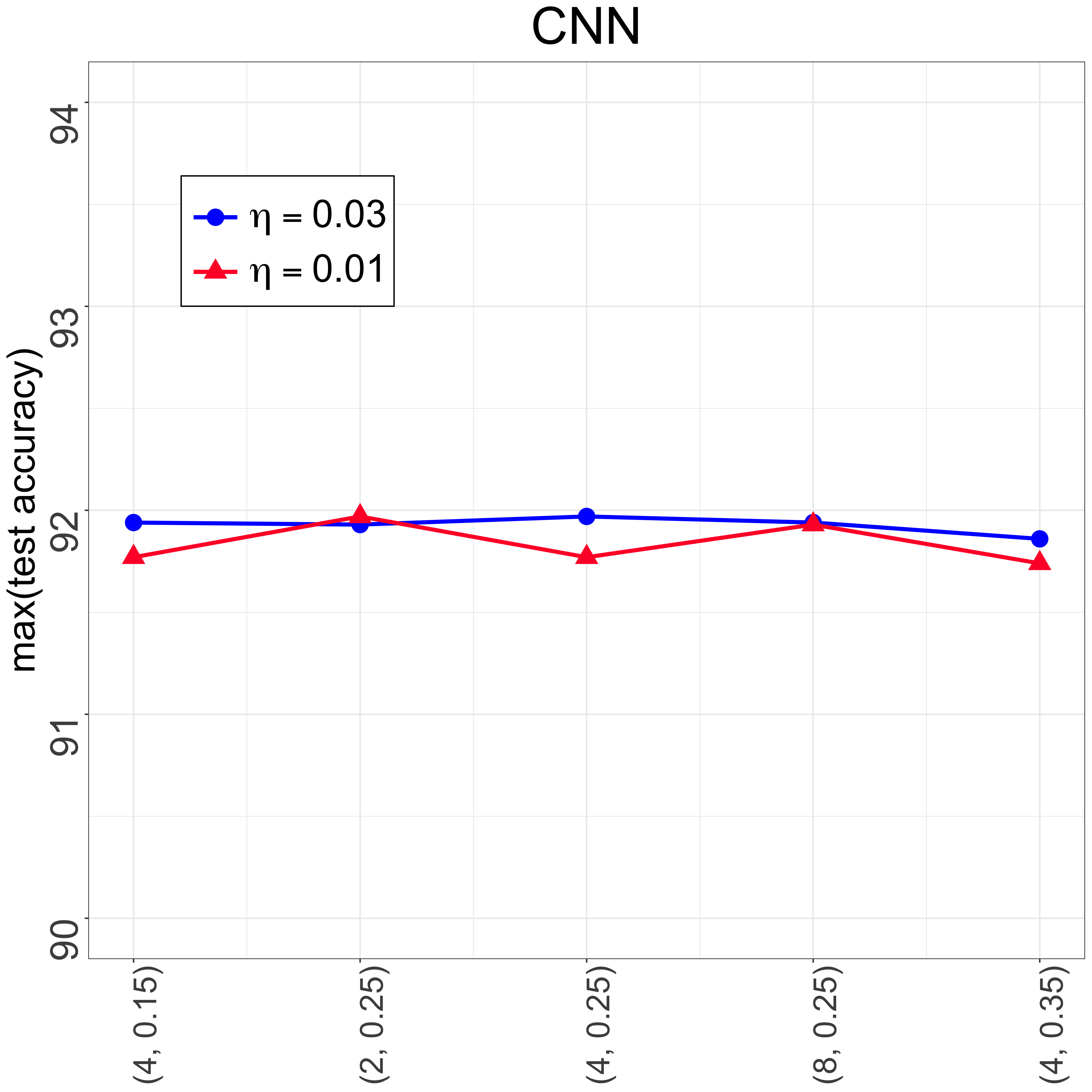}
\end{minipage}%
\hfill
\begin{minipage}{.33\textwidth}
  \centering
  \includegraphics[width=.9\linewidth]{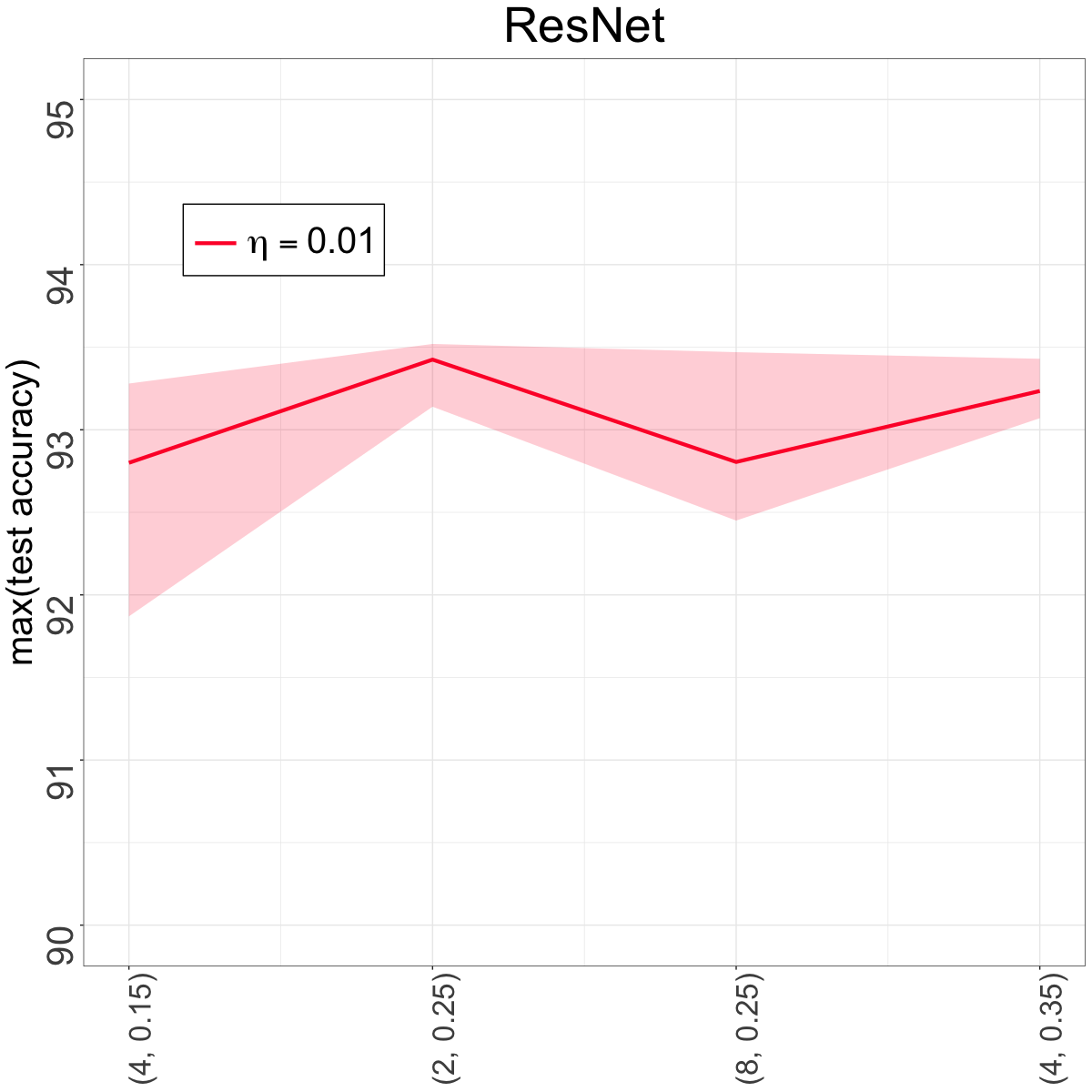}
\end{minipage}%
\caption{Sensitivity analysis for SplitSGD with respect to the parameters $w$ and $q$, appearing as the labels of the x-axis in the form $(w, q)$. In the convex setting (top plots) we consider the log loss achieved after $100$ epochs, while for deep neural networks (bottom plots) we report the maximum of the test accuracy in 100 epochs. For ResNet, we add a confidence band to highlight the stability of the results over different runs. Overall, SplitSGD achieves stable performance across various hyper-parameters setups; for details, see Section \ref{sensitivity_analysis_sec}.}
\label{sensitivity_analysis}
\end{figure}

We focus on the effect of averaging the two threads, and consider a very simple polynomial loss landscape in dimension $2$ with a local minimum and a global minimum. The loss surface is $f(x, y) = (x^2 + y^2 - 1)\cdot(x^2 + y^2 -a(x+y)+1)$ and the parameter $a$ regulates the distance and the difference in depth of the two minima. When $a=2$ both minima are global, when $a > 2$ the minimum located in the positive quadrant becomes the only global one and its basin of attraction gets larger. The histograms in Figure \ref{explanation_overfitting} refer to the proportion of times that the convergence happened to the global minimum instead of the local one when starting exactly on the saddle point located between them. In the left panel we set $a=2.1$, while in the right we set $a=4$, and we see that as $a$ grows the advantage of averaging the two threads becomes more relevant, since the new starting point falls into the basin of attraction of the global minimum more often. This example is not intended to completely explain the difference in overfitting between SplitSGD and the other methods that we see in Figure \ref{experiment_deep_learning}, but just to provide an intuition on why the averaging could be beneficial.

For the deep neural networks considered here, SplitSGD shows better results compared to SGD and Adam, and exhibits strong robustness to the choice of initial learning rates, which further verifies the effectiveness of SplitSGD in deep neural networks. The Splitting Diagnostic is proved to be beneficial in all these different settings, reducing the learning rate to enhance the test performance and reduce overfitting of the networks.
FDR shows a good result when used on ResNet with a specific learning rate, but in the other setting is not improving over SGD, suggesting that its diagnostic does not work on a variety of different scenarios.

%%%%%%%%%%%%%%%%%%%%%%%%
%%%%%%%%%%%%%%%%%%%%%%%%
\subsection{Sensitivity Analysis for SplitSGD}
{\label{sensitivity_analysis_sec}}

In this section, we analyse the impact of the hyper-parameters in the SplitSGD procedure. We focus on $q$ and $w$, while $l$ changes so that the computational budget of each diagnostic is fixed at one epoch. In the top panels of Figure \ref{sensitivity_analysis} we analyse the sensitivity of SplitSGD to these two parameters in the convex setting, for both linear and logistic regression, and consider $w \in \{10, 20, 40\}$ and $q \in \{0.35, 0.40, 0.45\}$. The data are generated in the same way as those used in Section \ref{convex_section}.
On the y-axis we report the log(loss) after training for $100$ epochs, while on the x-axis we consider the different $(w, q)$ configurations. The results are as expected; when the initial learning rate is larger, the impact of these parameters is very modest. When the initial learning rate is small, having a quicker decay (i.e. setting $q$ smaller) worsen the performance.

In the bottom panels of Figure \ref{sensitivity_analysis} we see the same analysis applied to the FeedForward Neural Network (FNN) described in Appendix \ref{appendix_more_experiments_DL} and the CNN used before, both trained on Fashion-MNIST. Here we report the maximum test accuracy achieved when training for $100$ epochs, and on the x-axis we have various configurations for $q \in \{0.15, 0,25, 0.35\}$ and $w \in \{2, 4, 8\}$. The results are very encouraging, showing that SplitSGD is robust with respect to the choice of these parameters also in non-convex settings. In the last plot of the bottom row we also run four simulation with each pair of $w$ and $q$ that are different from the default values used in Figure \ref{experiment_deep_learning}. For each run we compute the maximum test accuracy over $350$ epochs and for each $(w, q)$ we report both the median (solid line) and a shaded region corresponding to the minimum and maximum over the four experiments. What we observe is that the variability is in general pretty small, confirming that the goodness of the SplitSGD performance is not to be attributed to a fine tuning of these two parameters.

\section{Conclusion and Future Work}
We have developed a novel optimization method called SplitSGD, that works by splitting the SGD thread for stationarity detection. Extensive simulation studies show that this method is robust to the choice of the initial learning rate in a variety of optimization tasks, compared to classic adaptive and non-adaptive methods. Moreover, SplitSGD on certain deep neural network architectures outperforms classic SGD, Adam and FDR in terms of the test accuracy, and can sometime limit greatly the impact of overfitting. As the critical element underlying SplitSGD, the Splitting Diagnostic is a simple yet effective strategy that can possibly be incorporated into many optimization methods beyond SGD, as we already showed training SplitAdam on LSTM. One possible limitation of this method is the introduction of a new relevant parameter $q$, that regulates the rate at which the learning rate is adaptively decreased. Our simulations suggest the use of two different values depending on the context. A slower decrease, $q =
0.4$, in convex optimization, and a more aggressive one, $q = 0.25$, for deep learning. In the future, we look forward to seeing research investigations toward boosting the convergence of SplitSGD by allowing for different learning rate selection strategies across different layers of the neural networks.

%\bibliography{my_bibliography}
\bibliographystyle{tmlr}

\appendix

\section{Description of the convex setting and choice of the tolerance parameter $q$}
\label{appendix_choice_q}

\begin{figure}[htbp!]
\centering
\begin{subfigure}{0.35\textwidth}
\centering
\includegraphics[width = \linewidth]{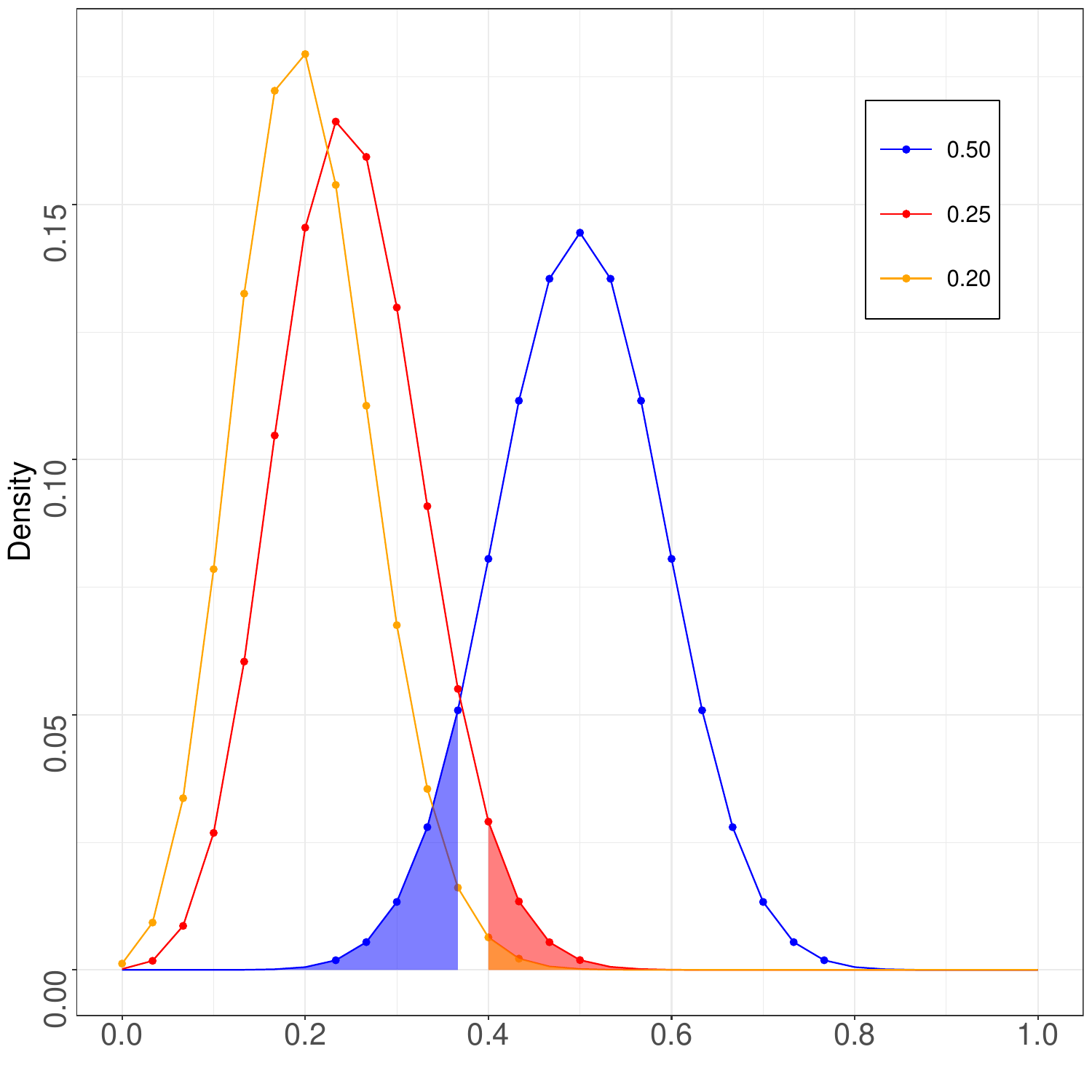}
\end{subfigure} \hfil
\begin{subfigure}{0.35\textwidth}
\centering
\includegraphics[width = \linewidth]{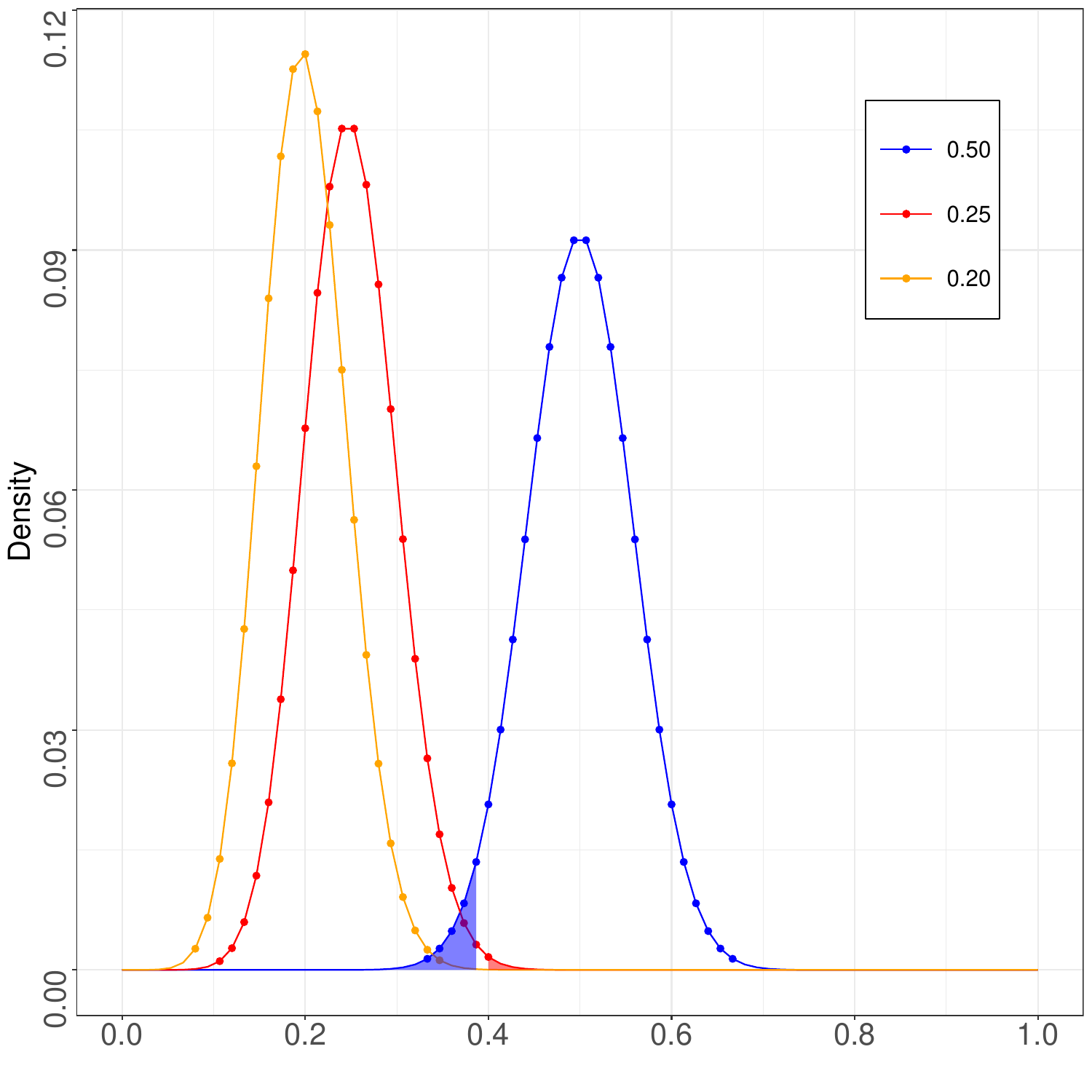}
\end{subfigure}
\caption{Continuous representation of the probability mass function of Binomial distributions. On the left we set $w = 30$ and $q = 0.4$, on the right $w = 75$ and $q = 0.4$, for both the probability of success (observing a negative gradient coherence) is $p \in \{0.2, 0.25, 0.5\}$. When $p = 0.5$ (stationarity) the type I error happens with probability approximated by the shaded blue region. When $p < 0.5$ (non stationarity) we erroneously declare stationarity with probability approximated by the shaded red and orange region.}
\label{fig_density_testing}
\end{figure}

For the experiments in the convex setting we use a feature matrix $X \in \R^{n \times d}$ with standard normal entries and $n = 1000$, $d = 20$. We set $\theta^*_j = 5\cdot e^{-j/2}$ for $j = 1,..., 20$ to guarantee some difference in the entries.
We generate the linear data as $y_i = X_i\cdot \theta^* + \epsilon_i$, where $\epsilon_i \sim N(0,1)$, and the data for logistic regression from a Bernoulli with probability $(1 + e^{-X_i\cdot \theta^*})^{-1}$.
The other parameters that are used through all Section \ref{convex_section} are the numbers of windows $w = 20$ of size $l = 50$ (so that each diagnostic consists of one epoch), the length of the first single thread $t_1 = 4$ epochs, and the acceptance proportion $q = 0.4$.

As we say in the main text, in general we would like $w, l, t_1$ and the number of diagnostics $B$ to be as large as possible, given the computational budget that we have.
The tolerance $q$, instead, is more tricky. In Theorem \ref{asymptotic_t} and Figure \ref{hist_asymptotic} we shown that, as $t_1 \to \infty$, the distribution of the sign of the gradient coherence is approximately a coin flip, provided that $\eta$ is small enough. This means that, once stationarity is reached, we want $q$ not to be too big, so that we will not observe a proportion of negative gradient coherences smaller than $q$ just by chance too often (and erroneously think that stationarity has not been reached yet). If we were then to assume independence between the $Q_i$, we should set $q$ to control the probability of a type I error (returning $T_D = N$ even though stationarity has been reached), which is
$$\frac{1}{2^w}\sum_{i=0}^{\lfloor w\cdot  q\rfloor-1} \binom{w}{i}$$
However, if we set $q$ to be too small, then in the initial phases of the procedure we might think that we have already reached stationarity only because by chance we observed a proportion of negative dot products larger than $q$. This trade-off, represented in Figure \ref{fig_density_testing}, is particularly relevant if we cannot afford a large number of windows $w$, but it loses importance as $w$ grows.

\section{Comparison with \texttt{pflug} Diagnostic with different parameters}
{\label{appendix_comparison_pflug}}
 
In Figure \ref{boxplot_pflug_linear_supplement} and Figure \ref{boxplot_pflug_logistic_supplement} we see other configurations for the experiment reported in the top panels of Figure \ref{experiment_convex}. There, the starting point was set to be around $\theta_s$, where $\theta_{s, j} = 5\cdot e^{-(d-j)/2}$ for $j = 1,..., 20$. Here we consider the same starting point for the panels on the right (for both linear and logistic regression) but a smaller learning rate. In both cases it is extremely clear that the \texttt{pflug} Diagnostic is detecting stationarity too late, and often (in the case of linear regression) running to the end of the budget. This can be a big problem in practice, because after stationarity has been reached all the iterations that keep using the same learning rate are not going to improve convergence, and are fundamentally wasted.
In the left and middle panel of both figures we consider a starting point for the procedures around the minimizer $\theta^*$. In this scenario, for both larger and smaller learning rates, we see that both procedure are either very precise or detect stationarity a bit too early. This is a smaller problem in practice, since at that point the learning rate is reduced but the SGD procedures keep running, even if with a smaller learning rate. The speed of convergence is then slower, but the steps that we make are still important towards convergence.

\begin{figure}%[!htbp]
\centering
\begin{subfigure}{0.32\textwidth}
\includegraphics[width = \linewidth]{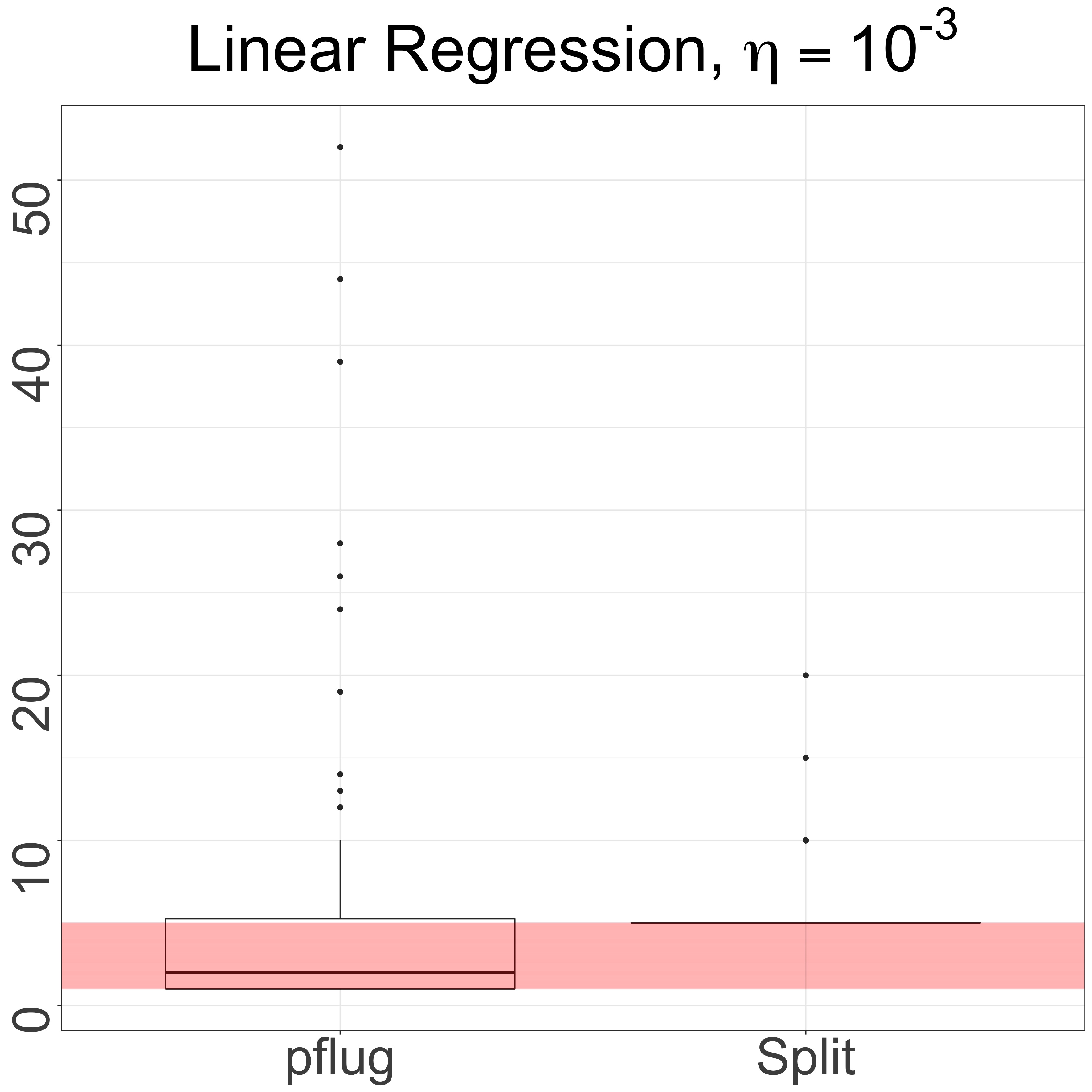}
\end{subfigure} \hfil
\begin{subfigure}{0.32\textwidth}
\includegraphics[width = \linewidth]{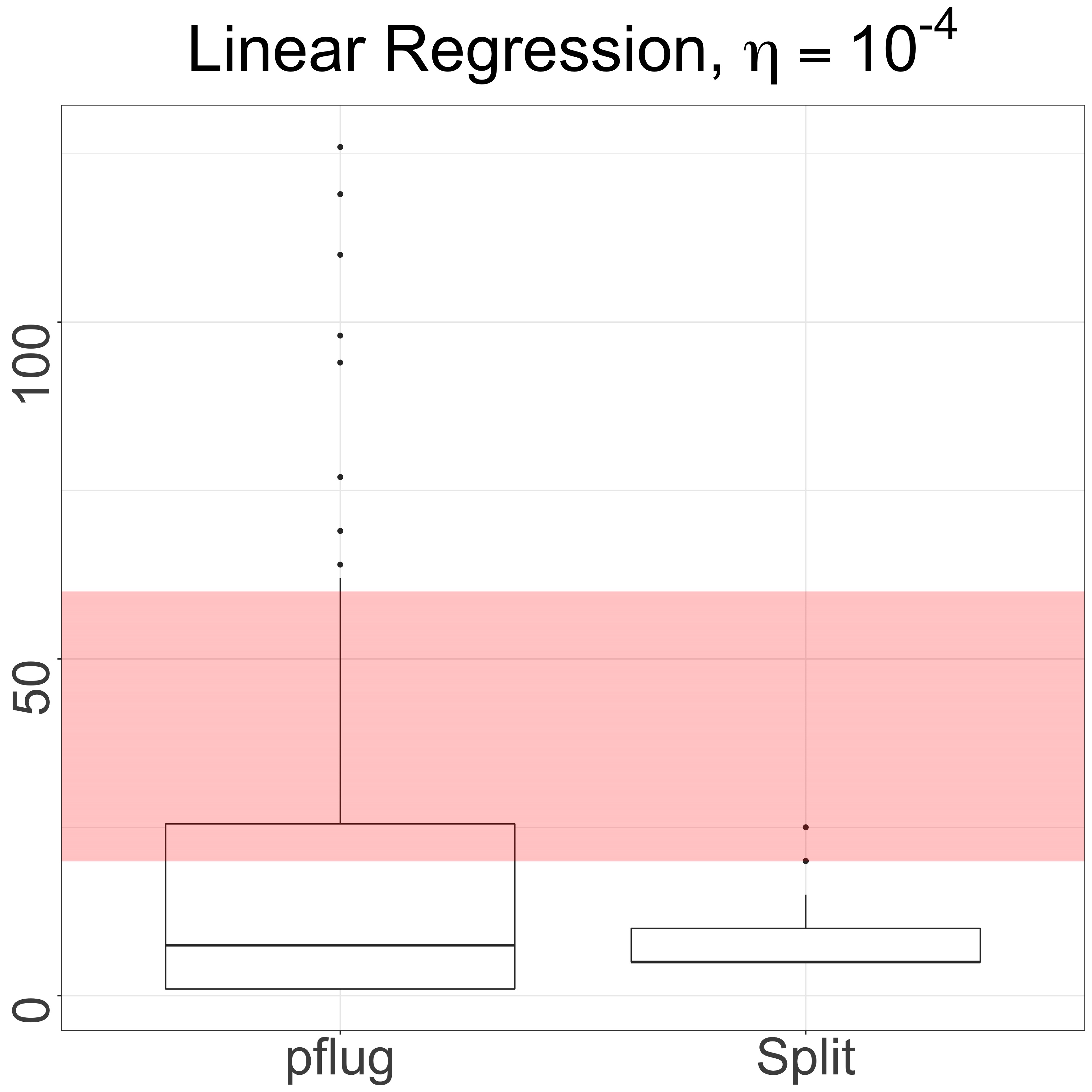}
\end{subfigure} \hfil
\begin{subfigure}{0.32\textwidth}
\includegraphics[width = \linewidth]{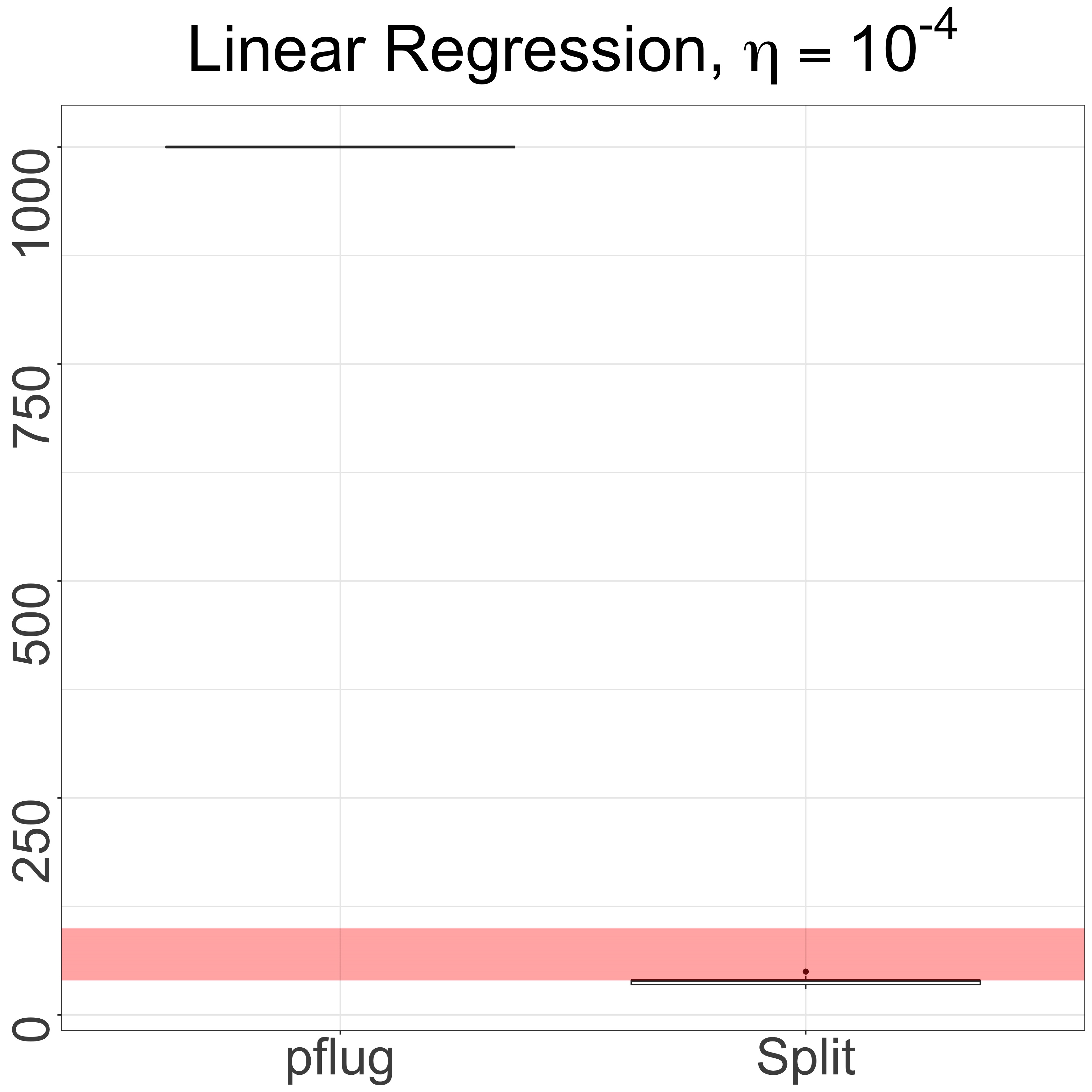}
\end{subfigure} \hfil
\caption{(left) starting around $\theta^*$, large learning rate. (middle) starting around $\theta^*$, small learning rate. (right) starting around $\theta_s$, small learning rate.}
\label{boxplot_pflug_linear_supplement}
\end{figure}

 \begin{figure}%[!htbp]
\centering
\begin{subfigure}{0.32\textwidth}
\includegraphics[width = \linewidth]{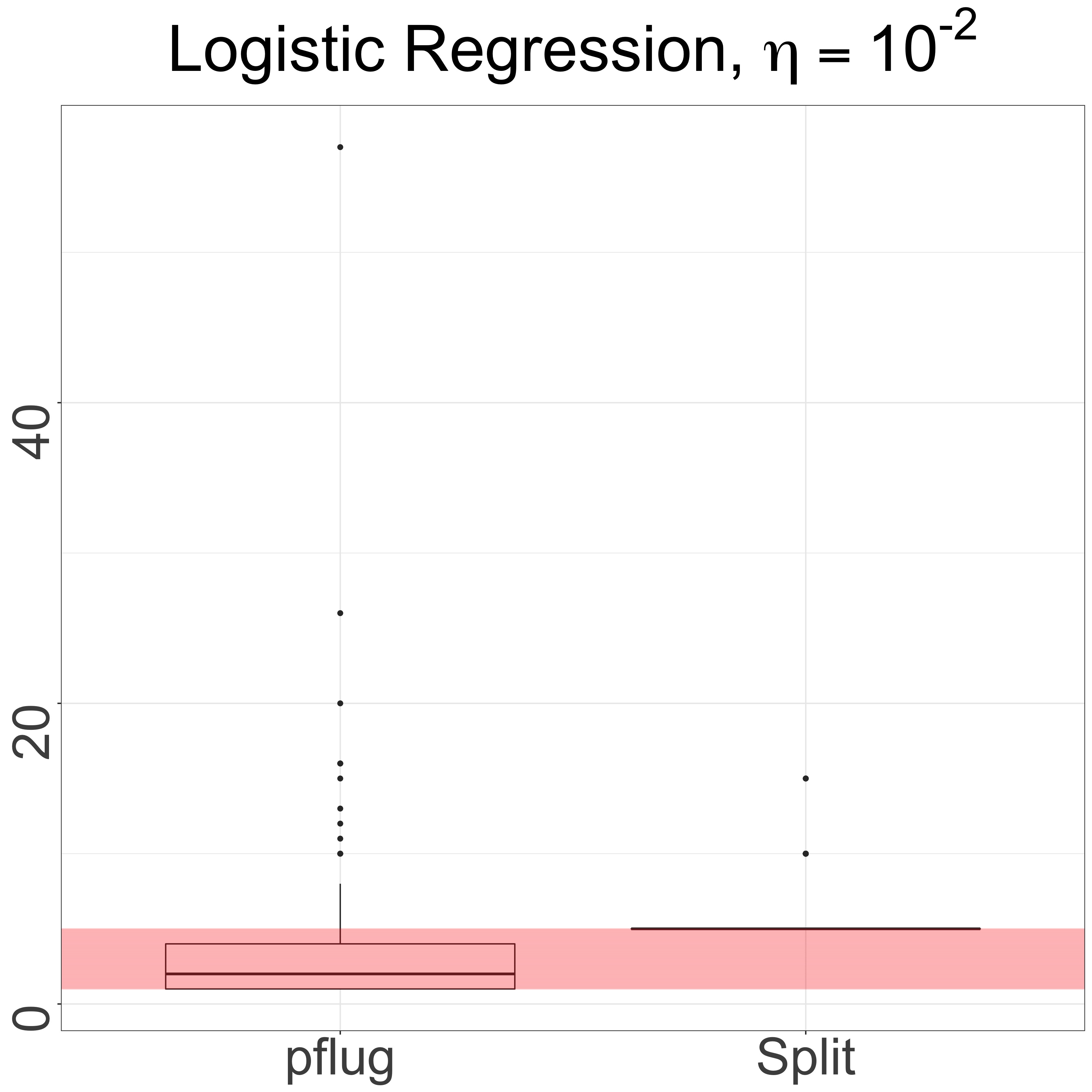}
\end{subfigure} \hfil
\begin{subfigure}{0.32\textwidth}
\includegraphics[width = \linewidth]{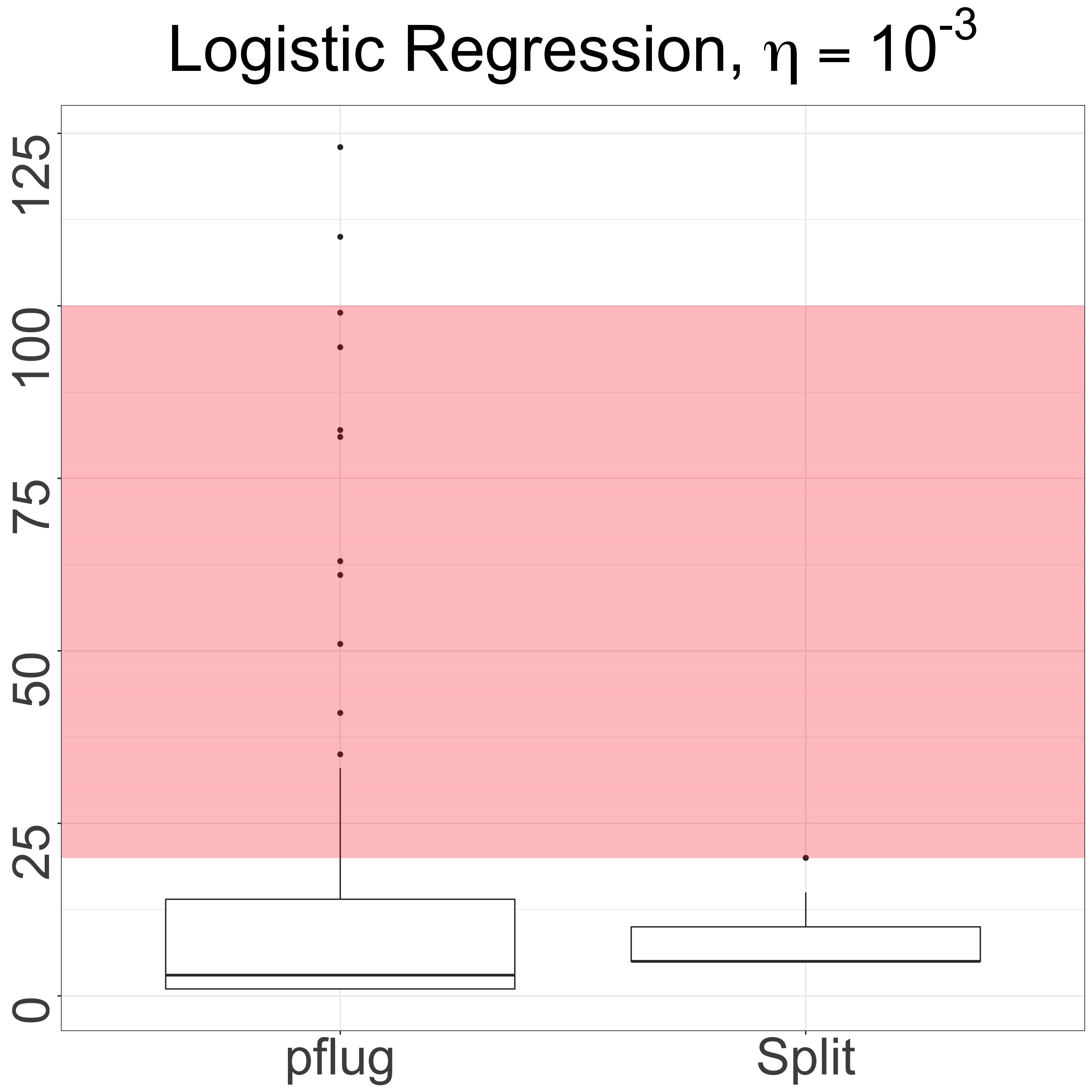}
\end{subfigure} \hfil
\begin{subfigure}{0.32\textwidth}
\includegraphics[width = \linewidth]{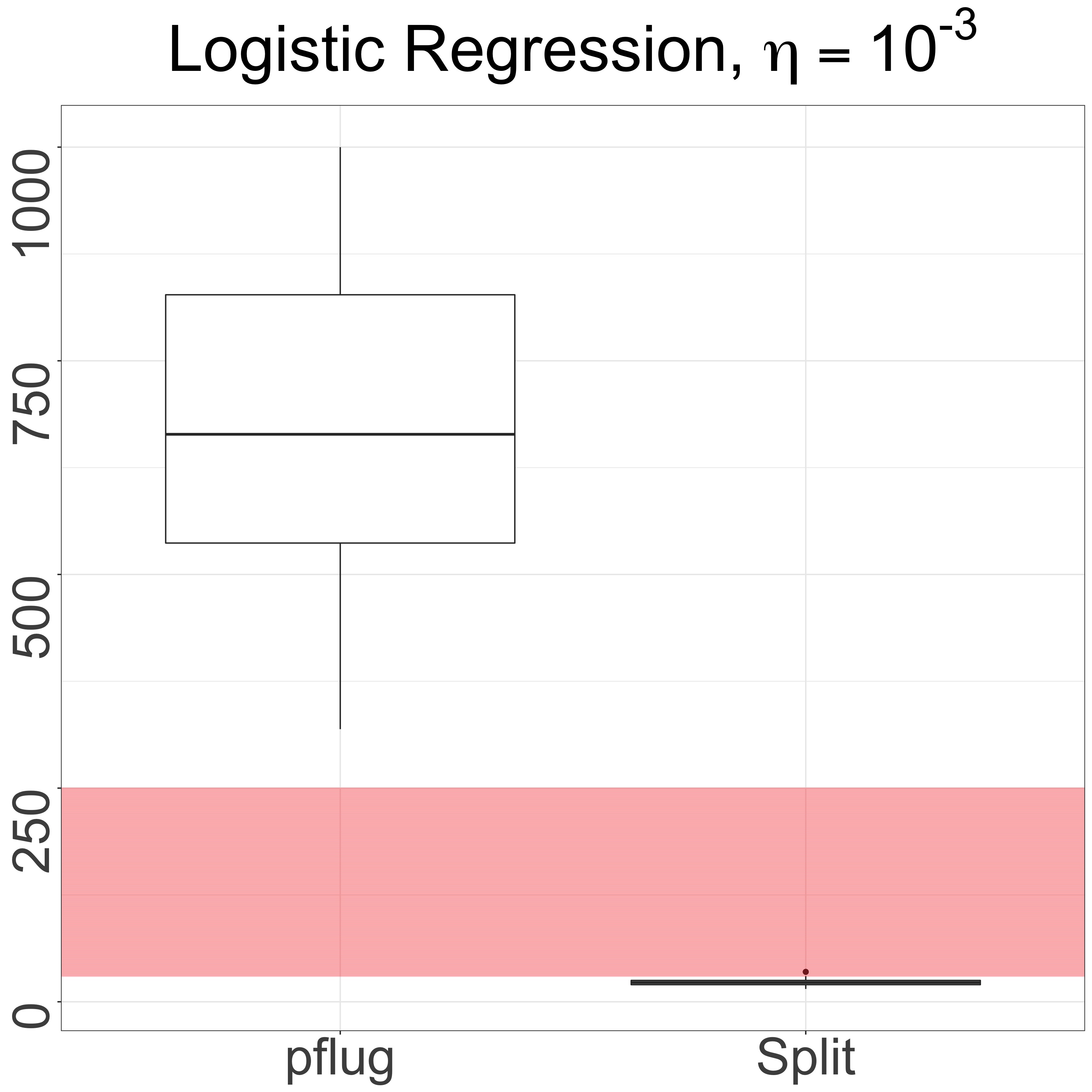}
\end{subfigure} \hfil
\caption{(left) starting around $\theta^*$, large learning rate. (middle) starting around $\theta^*$, small learning rate. (right) starting around $\theta_s$, small learning rate.}
\label{boxplot_pflug_logistic_supplement}
\end{figure}

\section{Changes to the SplitSGD procedure in deep learning}
\label{appendix_splitsgd_DL}

 The differences between the SplitSGD procedure that we analysed in Section \ref{sec:splitt-diagn-splitsg} and its adaptation to deep learning are the following:
 
 \begin{itemize}
 
 \item \textbf{momentum of SGD:} while in the convex setting we study the behavior of vanilla SGD, when training deep neural networks the standard choice is to use SGD with momentum \citep{sutskever2013importance}, which updates as
 \begin{align}{\label{appendix_sgd_momentum}} \nonumber
 \Delta \theta_t &= \beta \cdot \Delta \theta_{t-1} - \eta_t \cdot g(\theta_t, Z_{t+1}) \\
 \theta_{t+1} &= \theta_t + \Delta \theta_t
 \end{align} 
 If the learning rate is kept constant, SGD with momentum still goes through a transient phase before reaching stationarity. Even in this case, as we already saw in (\ref{stationary_dot_product}), we have that $\E_{\theta \sim \pi_\eta}\l[g(\theta, Z)\r] = 0$ since we see from (\ref{appendix_sgd_momentum}) that $\E_{\theta \sim \pi_\eta}\l[\Delta \theta\r] = 0$. This justifies the use of the gradient coherence as defined in (\ref{dot_product}) also when considering SGD with momentum.

 \item \textbf{gradient coherence on layers:} when considering a parameter space of dimension $d$, the gradient coherence is a dot product of two $d$-dimensional vectors. In deep learning, the parameter space is usually extremely large, so we decided to divide these vectors into pieces to try to extract more information about the stationarity of the SGD updates, by computing the dot product of each of the pieces. 
In practice, let's divide the vectors $u$ and $v$ into $p$ pieces not necessarily of equal length, so that $v =  (v_1, v_2, ..., v_p)$ and $u = (u_1, u_2, ..., u_p)$. Instead of computing the single dot product $\langle v, u \rangle$ we store the $p$ dot products $\langle v_i, u_i \rangle$.
In this way, we can also relax the trade-off between $l$ and $w$ (remember that we want to allocate a single epoch to the diagnostic, so that $2 l w$ is fixed to be the size of the training set). By computing more than a single value of $Q$ for each pair of vectors, we can allow to set $w$ smaller.

A natural division of the parameter space into smaller pieces comes from the layers of the network, so each time we compute the gradient coherence of the two threads we actually compute a separate value for each layer and then store all of them together. In the final count, as we did in the non-convex setting, we look at the proportion of these values that are negative to decide whether to decay the learning rate.

 \item \textbf{length of the single thread:} since training deep neural networks is usually computationally expensive, we decided not to increase the length of the single thread after stationarity was detected. This is made simply to avoid situations where stationarity is detected early and the length of the single thread increases so fast that we do not have time to decay the learning rate by much before reaching the end of the computational budget that we allocated.

\item \textbf{hyperparameters for the diagnostic:} we set the relevant hyperparameters $w$ and $q$ to take value $w = 4$ and $q = 0.25$. The value of $w$ is much smaller than the one used in the convex setting for the reason explained above that we compute the gradient coherence separately for each layer. With this choice, we can dedicate $1/8$ of the updates of each epoch to compute for each thread the average of the gradients and be sure that we averaged out a lot of the noise. The choice of setting $q = 0.25$ comes from the empirical results that we observed, and a deeper study of this parameter is probably needed. We performed a sensitivity analysis on these two parameters in Section \ref{sensitivity_analysis_sec} and noticed that a departure from these values is not changing the performance by much.
 
 \end{itemize}

\section{Other experiments in deep learning}
{\label{appendix_more_experiments_DL}}

We add here the description of two more experiments in Deep Learning that did not fit in the main body of the paper, together with the plots that we already included in Figure \ref{experiment_deep_learning} but this time with the addition of the $90\%$ confidence bands. We see that the Splitting Diagnostic increases the variability of SplitSGD with respect to other methods in some settings, but the interpretation that we gave in Section \ref{neural_network} of the better performance of SplitSGD and the lack of overfitting holds.

\begin{figure}[H]
\begin{minipage}{.32\textwidth}
  \centering
  \includegraphics[width=\linewidth]{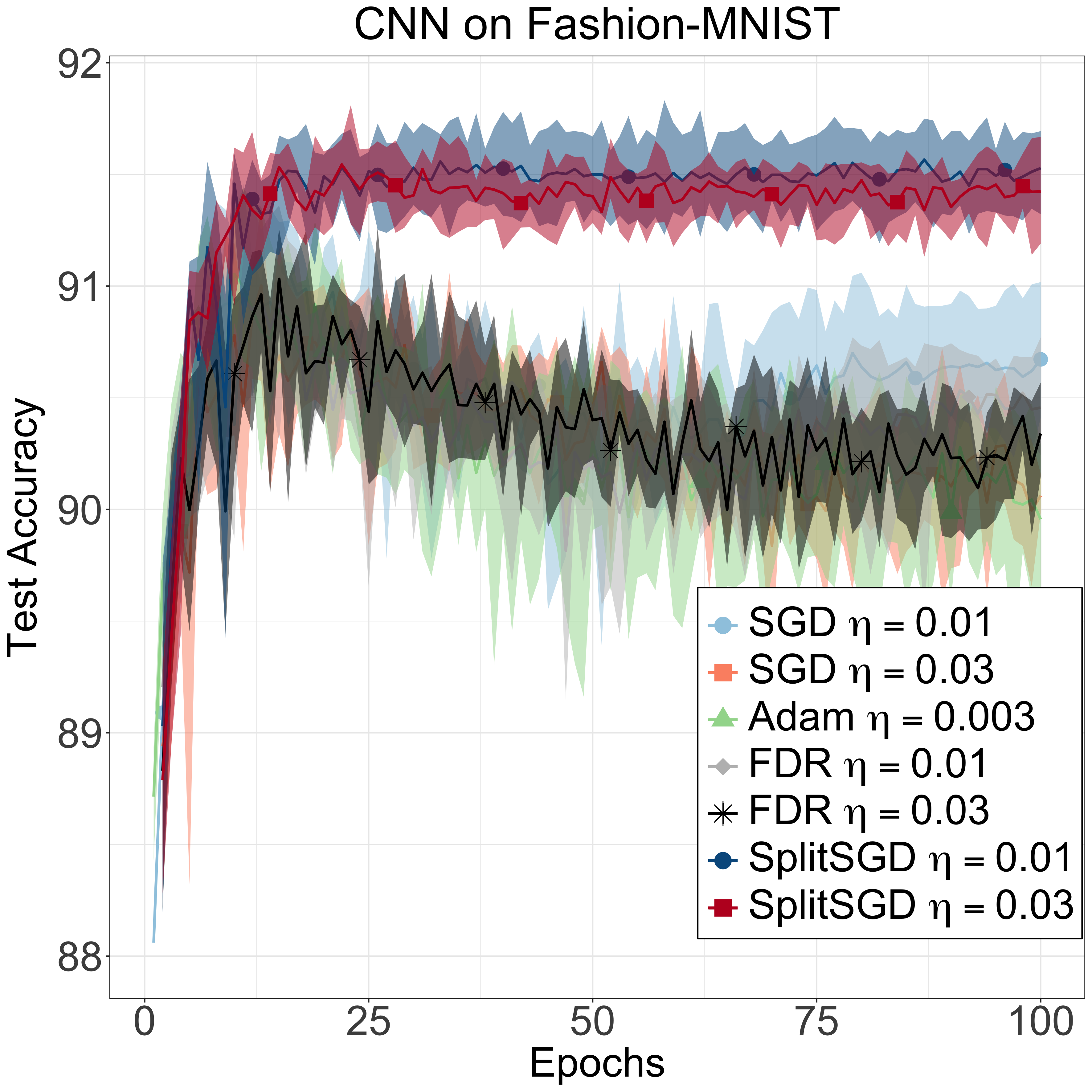}
\end{minipage}%
\hfill
\begin{minipage}{.32\textwidth}
  \centering
  \includegraphics[width=\linewidth]{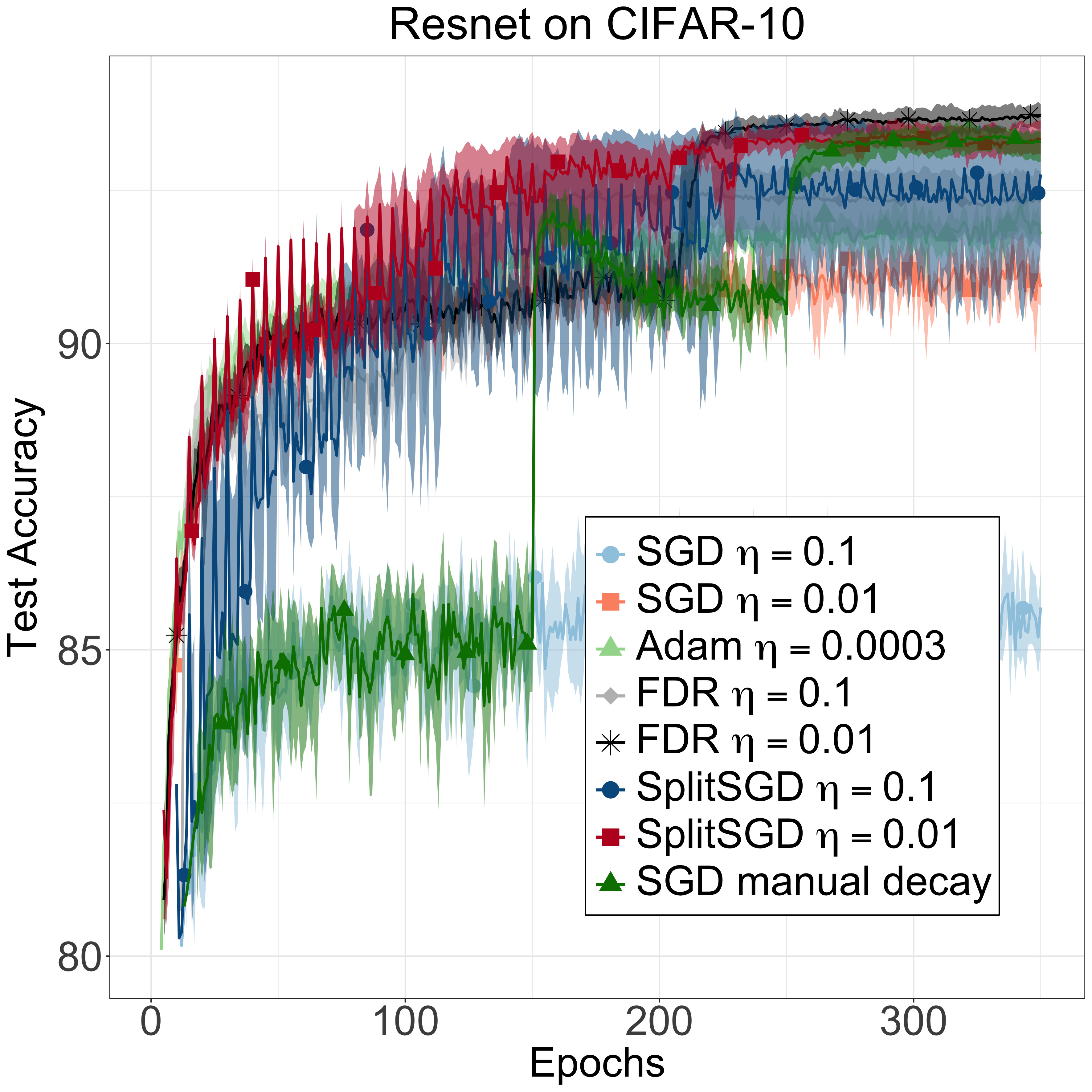}
\end{minipage}%
\hfill
\begin{minipage}{.32\textwidth}
  \centering
  \includegraphics[width=\linewidth]{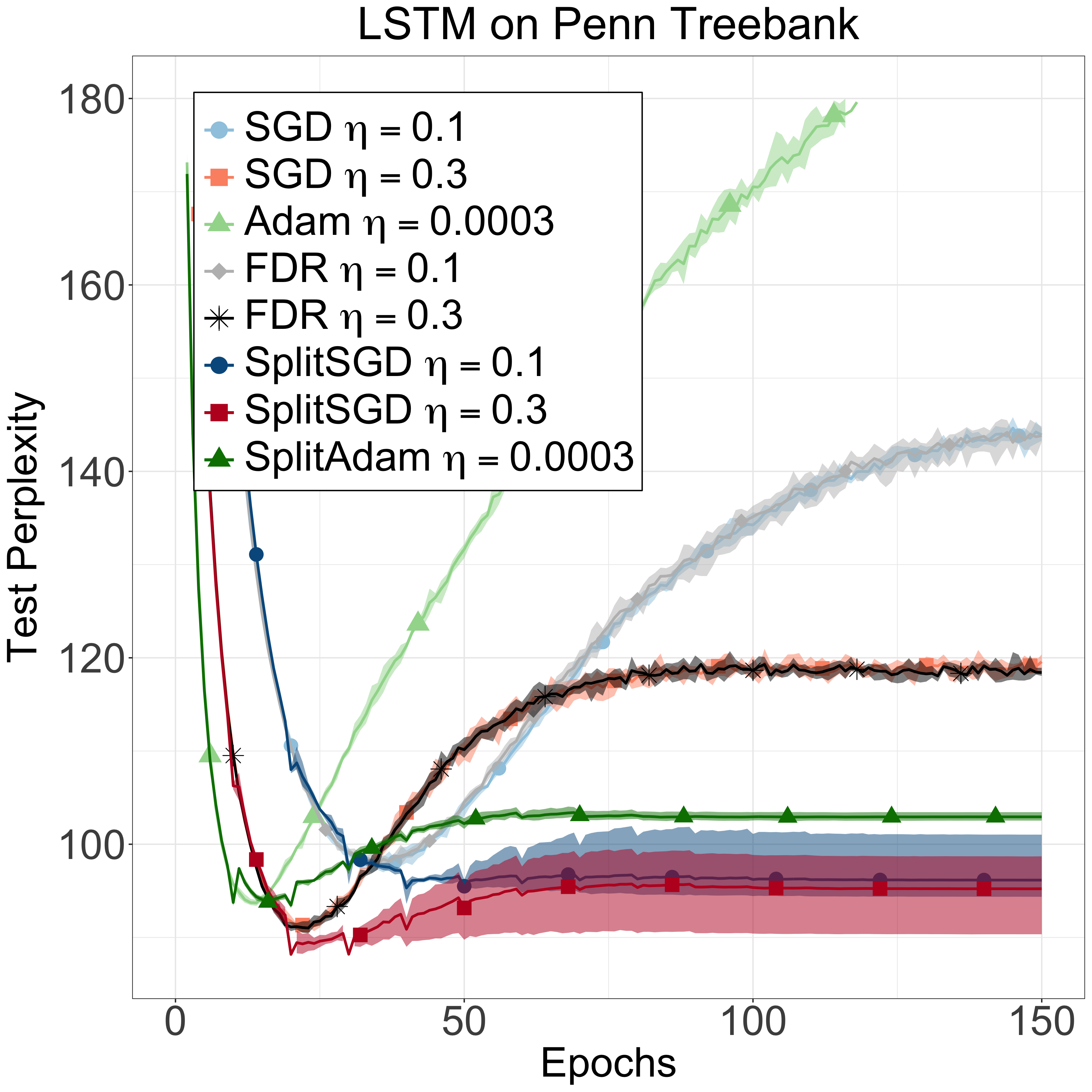}
\end{minipage}%
\caption{This is the same as Figure \ref{experiment_deep_learning} but here we also added $90\%$ confidence bands for each method.}
\label{experiment_deep_learning_bands}
\end{figure}

{\bf Feedforward neural networks (FNNs).} We train a FNN with three hidden layers of size $256, 128$ and $64$ on the Fashion-MNIST dataset \citep{xiao2017fashion}. The network is fully connected, with ReLu activation functions. The initial learning rates are $\eta \in \{\num{1e-2}, \num{3e-2}, \num{1e-1}\}$ for SGD and SplitSGD and $\eta \in \{\num{3e-4}, \num{1e-3}, \num{3e-3}\}$ for Adam. In the first panel of Figure \ref{appendix_FNN_VGG_test_accuracy} we see that most methods achieve very good accuracy, but SplitSGD reaches the overall best test accuracy when $\eta = \num{1e-1}$ and great accuracy with small oscillations when $\eta = \num{3e-2}$. The peaks in the SplitSGD performance are usually due to the averaging, while the smaller oscillations are due to the learning rate decay.
 
{\bf VGG19.} 
When training the neural network VGG19\footnote{More details can be found in \url{https://pytorch.org/docs/stable/torchvision/models.html}} on CIFAR-10, we observe a similar behavior to what already shown when training ResNet (second panel of Figure \ref{experiment_deep_learning}). SplitSGD, with both learning rates \num{1e-1} and \num{1e-2} achieves the same test accuracy of the manually tuned SGD, but in less epochs, and beats the performance of SGD and Adam. Also here it is possible to see the spikes given by the averaging, followed by the smoothing caused by the learning rate decay.

\begin{figure}[htbp!]
\centering
\begin{subfigure}{0.35\textwidth}
\centering
\includegraphics[width = \linewidth]{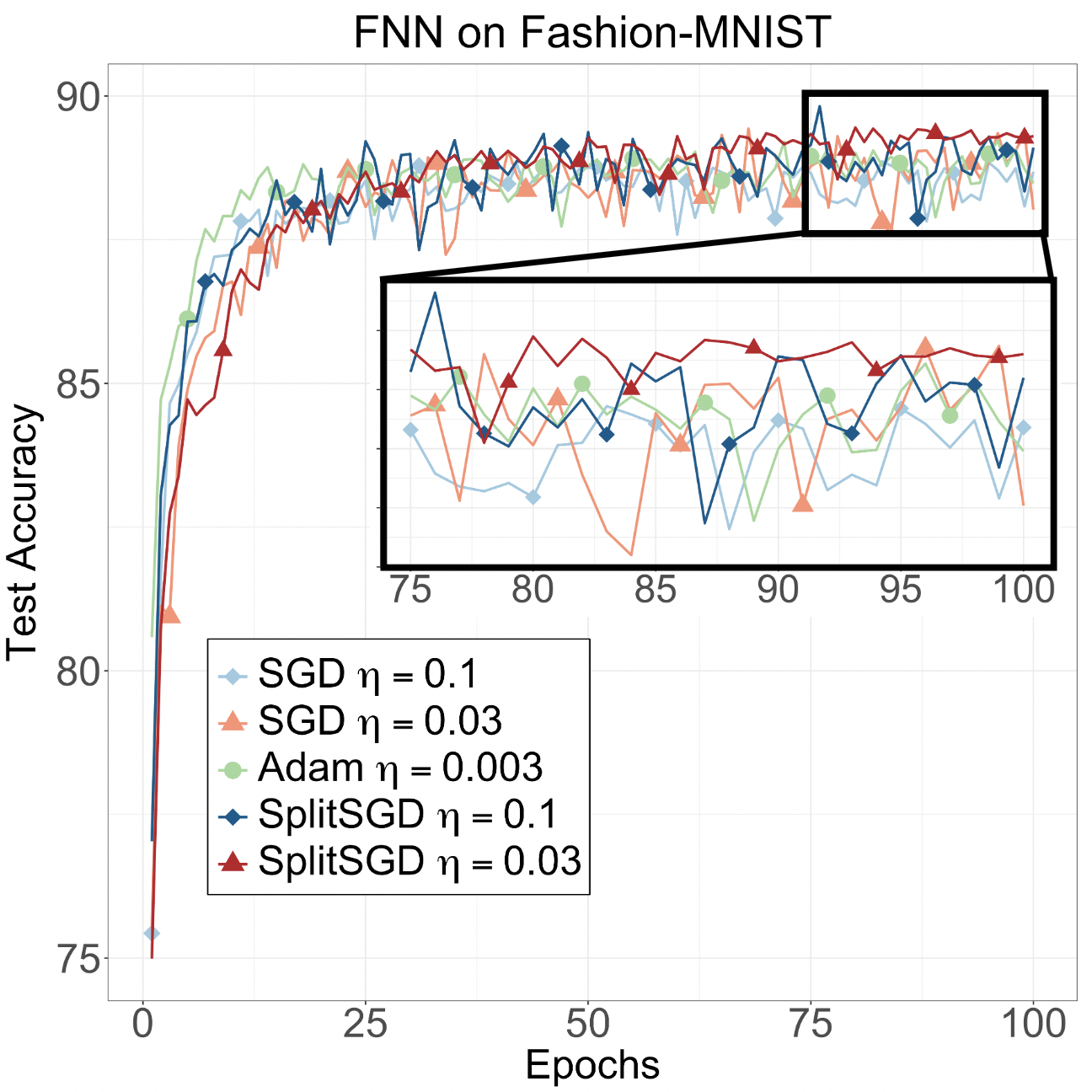}
\end{subfigure} \hfil
\begin{subfigure}{0.35\textwidth}
\centering
\includegraphics[width = \linewidth]{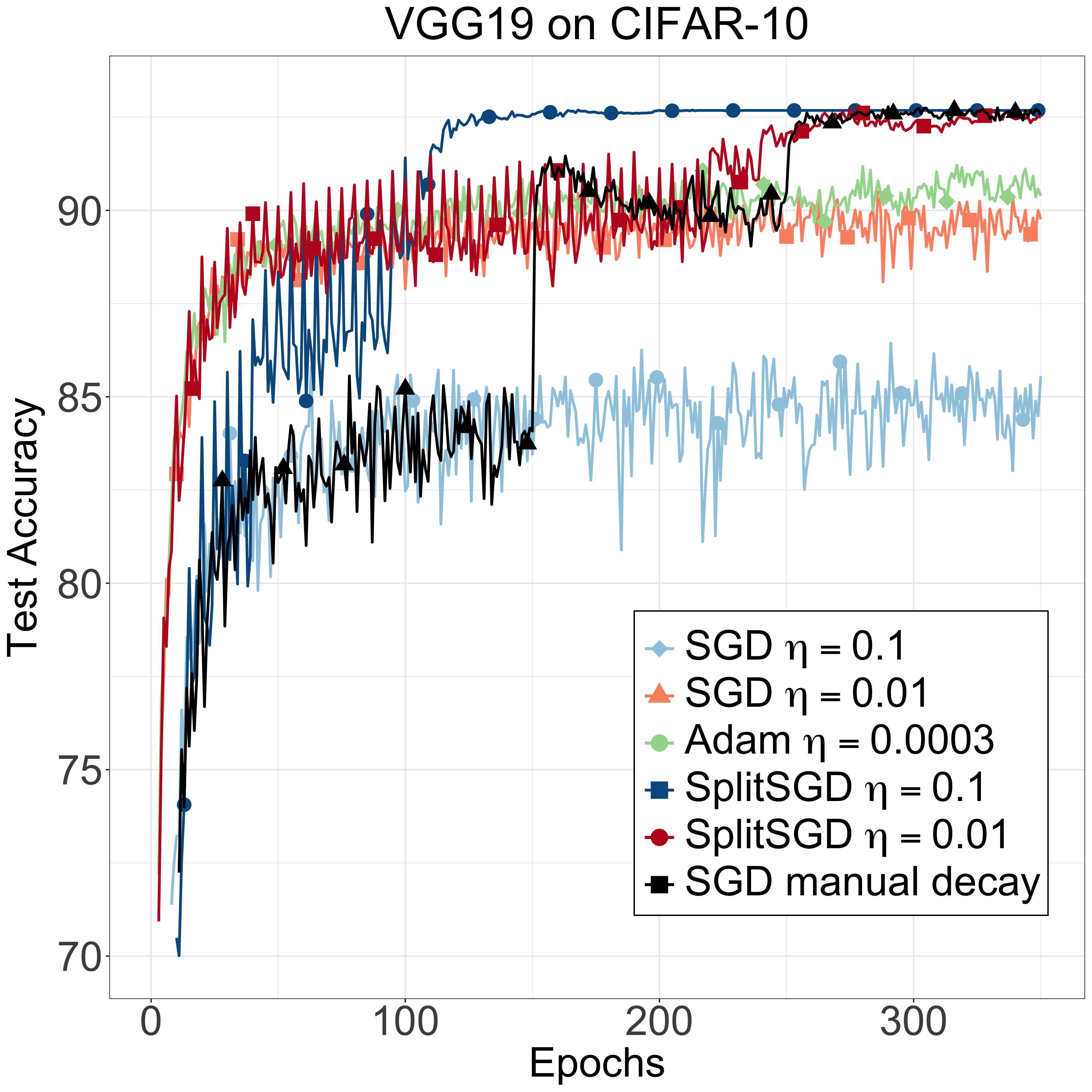}
\end{subfigure}
\caption{Compare the accuracy of SGD, Adam and SplitSGD in training FNN on Fashion-MNIST (left) and VGG19 on CIFAR-10 (right).}
\label{appendix_FNN_VGG_test_accuracy}
\end{figure}

\section{Lemmas}
{\label{appendix_lemmas}}
%%%%%%%%%%%%%%%%%%%%%%%%%%%%%%%%%%%%%%%%%%%%

Proof of Lemma \ref{lemma_squared_distance}:

\begin{proof}
This proof can be easily adapted from \citet{bm11}. From the recursive definition of $\theta_t$ one has
\begin{equation*}
\E\l[||\theta_t - \theta^*||^2\r] \leq \l(1 - 2\eta(\mu - L^2 \eta)\r) \cdot \E\l[||\theta_{t-1} - \theta^*||^2\r] + 2 G^2 \eta^2 .
\end{equation*}
This inequality can be recursively applied to obtain the desired result
\begin{align*}
\E\l[||\theta_t - \theta^*||^2\r] &\leq \l(1 - 2\eta(\mu - L^2 \eta)\r)^t \cdot \E\l[||\theta_{0} - \theta^*||^2\r] + 2 G^2 \eta^2 \sum_{j=0}^{t-1} \l(1 - 2\eta(\mu - L^2 \eta)\r)^j  \\
&\leq \l(1 - 2\eta(\mu - L^2 \eta)\r)^t \cdot \E\l[||\theta_{0} - \theta^*||^2\r] + \frac{G^2 \eta}{\mu - L^2 \eta}
\end{align*}
\end{proof}

This lemma represents the dynamic of SGD with constant learning rate, where the dependence from the starting point vanishes exponentially fast, but there is a term dependent on $\eta$ that is not vanishing even for large $t$.

%%%%%%%%%%%%%%%%%%%%%%%%%%%%%%%%%%%%%%%%%%%%
\begin{lemma}{\label{distance_norm_conditional}}
If Assumption \ref{bounded_noisy_gradient} with $m=4$ holds, then for any $t, i \in \N$ one has
\begin{equation*}
\E\l[||\theta_{t+i} - \theta_t||^4\ | \ \F_t\r] \leq \eta^4 i^4 G^4
\end{equation*}
\end{lemma}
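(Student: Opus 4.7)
The plan is to bound the increment $\theta_{t+i} - \theta_t$ by unrolling the SGD recursion and then apply a standard power-mean inequality followed by the moment bound in Assumption \ref{bounded_noisy_gradient}. Concretely, iterating the update rule \eqref{SGD} gives
\begin{equation*}
\theta_{t+i} - \theta_t = -\eta \sum_{j=0}^{i-1} g(\theta_{t+j}, Z_{t+j+1}),
\end{equation*}
so by the triangle inequality $\|\theta_{t+i} - \theta_t\| \leq \eta \sum_{j=0}^{i-1} \|g(\theta_{t+j}, Z_{t+j+1})\|$.

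Next, I would raise both sides to the fourth power. The standard power-mean (or Jensen's) inequality applied to the sum of $i$ nonnegative terms yields
\begin{equation*}
\Bigl(\sum_{j=0}^{i-1} \|g(\theta_{t+j}, Z_{t+j+1})\|\Bigr)^4 \leq i^{3} \sum_{j=0}^{i-1} \|g(\theta_{t+j}, Z_{t+j+1})\|^4.
\end{equation*}
Multiplying by $\eta^4$ gives a pointwise bound on $\|\theta_{t+i} - \theta_t\|^4$ in terms of the individual fourth-power gradient norms.

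Then I would take $\E[\,\cdot \mid \F_t]$ of this bound. For each fixed $j$, the tower property gives $\E[\|g(\theta_{t+j}, Z_{t+j+1})\|^4 \mid \F_t] = \E\!\bigl[\E[\|g(\theta_{t+j}, Z_{t+j+1})\|^4 \mid \F_{t+j}] \,\big|\, \F_t\bigr]$, and Assumption \ref{bounded_noisy_gradient} with $m=4$ bounds the inner conditional expectation by $G^4$ deterministically. Hence each term in the sum contributes at most $G^4$ after taking conditional expectation, for a total of $i \cdot G^4$. Combining everything yields
\begin{equation*}
\E\bigl[\|\theta_{t+i} - \theta_t\|^4 \,\big|\, \F_t\bigr] \leq \eta^4 \cdot i^3 \cdot i \cdot G^4 = \eta^4 i^4 G^4,
\end{equation*}
which is the claim.

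There is essentially no obstacle here: the only subtlety worth flagging is that the conditional moment bound in Assumption \ref{bounded_noisy_gradient} is stated relative to $\F_t$ for the gradient at iterate $\theta_t$, so one must invoke the tower property to make it usable at iterate $\theta_{t+j}$ for $j \geq 1$. The rest is a deterministic inequality followed by linearity of conditional expectation.
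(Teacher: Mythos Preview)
Your proof is correct and follows essentially the same approach as the paper: unroll the recursion, bound $\bigl\|\sum_j g_j\bigr\|^4$ by $i^3 \sum_j \|g_j\|^4$, then use the tower property together with Assumption~\ref{bounded_noisy_gradient} to bound each term by $G^4$. The only cosmetic difference is that the paper obtains the $i^3$ factor via two applications of Cauchy--Schwarz on the vectors, whereas you use the triangle inequality followed by the power-mean inequality on the scalar norms; both routes give the identical bound.
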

\begin{proof}
For any $j = 1, ..., l$, let $x_j$ be a vector of length $n$. Applying Cauchy-Schwarz inequality twice, we get
\begin{align}{\label{cs_twice}} \nonumber
||\sum_{j=1}^l x_j||^4 &= ||\sum_{j=1}^l x_j||^2\cdot ||\sum_{j=1}^l x_j||^2 \leq \l(l\cdot \sum_{j=1}^l || x_j||^2\r)^2 \\
&= l^2 \l(\sum_{j=1}^l || x_j||^2\r)^2 \leq l^3 \cdot \sum_{j=1}^l || x_j||^4
\end{align}
Since 
$$\theta_{t+i} = \theta_t - \eta \sum_{j=0}^{i-1} g(\theta_{t+j}, Z_{t+j+1}) ,$$
then we can use the fact that $\F_k \subseteq \F_{k+1}$ for any $k$, together with Assumption \ref{bounded_noisy_gradient} and (\ref{cs_twice}), to get that
\begin{align*}
\E\l[||\theta_{t+i} - \theta_t||^4\ | \ \F_t\r] &= \eta^4 \cdot \E\l[||\sum_{j=0}^{i-1} g(\theta_{t+j}, Z_{t+j+1})||^4 \ | \ \F_t\r] \\
&\leq \eta^4 i^3 \sum_{j=0}^{i-1} \E\l[||g(\theta_{t+j}, Z_{t+j+1})||^4 \ | \ \F_t\r] \\
&= \eta^4 i^3 \sum_{j=0}^{i-1} \E\big[\underbrace{\E\l[||g(\theta_{t+j}, Z_{t+j+1})||^4 \ | \ \F_{t+j}\r]}_{\qquad \leq  G^4} \ \big| \ \F_t\big] \\
&\leq \eta^4 i^4 G^4
\end{align*}
Note that this is a bound that considers the worst case in which all the noisy gradient updates point in the same direction and are of norm $G$.
\end{proof}
\begin{remark}{\label{unconditional_distance_parameters}}
We can obviously use the same bound for the unconditional squared norm, since
$$\E\l[||\theta_{t+i} - \theta_t||^4\r] = \E\l[\E\l[||\theta_{t+i} - \theta_t||^4\ | \ \F_t\r]\r] \leq \eta^4 i^4 G^4 .$$
\end{remark}

%%%%%%%%%%%%%%%%%%%%%%%%%%%%%%%%%%%%%%%%%%%%
\begin{lemma}{\label{diff_gradient_condition_Ft}}
If Assumption \ref{smoothness} and \ref{bounded_noisy_gradient} with $m=2$ hold, then for any $i = 1,..., l$ and $k = 1, 2$ we have that
\begin{align*}
\E\l[|| \nabla F(\theta_{t + i}^{(k)}) - \nabla F(\theta_0) ||^2  \ | \ \F_t\r] \leq (L ||\theta_t - \theta_0 || + L \eta G i)^2
\end{align*}
\end{lemma}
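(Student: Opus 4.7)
The plan is to combine smoothness, the triangle inequality, and the bounded second moment of the noisy gradient. By Assumption \ref{smoothness} we immediately have $\|\nabla F(\theta_{t+i}^{(k)}) - \nabla F(\theta_0)\| \le L \|\theta_{t+i}^{(k)} - \theta_0\|$, and by the triangle inequality $\|\theta_{t+i}^{(k)} - \theta_0\| \le \|\theta_{t+i}^{(k)} - \theta_t\| + \|\theta_t - \theta_0\|$. The goal then reduces to showing that $\E[\|\theta_{t+i}^{(k)} - \theta_t\|^2 \mid \F_t]^{1/2} \le \eta G i$, after which the statement of the lemma follows by squaring the sum and invoking Jensen for the cross term.

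First I would write $\theta_{t+i}^{(k)} - \theta_t = -\eta \sum_{j=0}^{i-1} g(\theta_{t+j}^{(k)}, Z_{t+j+1}^{(k)})$ and apply Cauchy--Schwarz to obtain
\[
\|\theta_{t+i}^{(k)} - \theta_t\|^2 \le \eta^2 \, i \sum_{j=0}^{i-1} \|g(\theta_{t+j}^{(k)}, Z_{t+j+1}^{(k)})\|^2.
\]
Taking conditional expectation given $\F_t$ and using the tower property with $\F_t \subseteq \F_{t+j}$ together with Assumption \ref{bounded_noisy_gradient} (with $m=2$) yields $\E[\|g(\theta_{t+j}^{(k)}, Z_{t+j+1}^{(k)})\|^2 \mid \F_t] \le G^2$, so that $\E[\|\theta_{t+i}^{(k)} - \theta_t\|^2 \mid \F_t] \le \eta^2 i^2 G^2$. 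By Jensen's inequality this also gives $\E[\|\theta_{t+i}^{(k)} - \theta_t\| \mid \F_t] \le \eta i G$.

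Finally, I would square the triangle-inequality bound, multiply by $L^2$, take conditional expectation, and observe that $\|\theta_t - \theta_0\|$ is $\F_t$-measurable and therefore passes through the expectation:
\[
\E[\|\nabla F(\theta_{t+i}^{(k)}) - \nabla F(\theta_0)\|^2 \mid \F_t] \le L^2 \E[\|\theta_{t+i}^{(k)} - \theta_t\|^2 \mid \F_t] + 2L^2 \|\theta_t - \theta_0\| \, \E[\|\theta_{t+i}^{(k)} - \theta_t\| \mid \F_t] + L^2 \|\theta_t - \theta_0\|^2.
\]
Plugging in the two bounds just established gives $L^2 \eta^2 i^2 G^2 + 2 L^2 \eta i G \|\theta_t - \theta_0\| + L^2 \|\theta_t - \theta_0\|^2 = (L\|\theta_t - \theta_0\| + L\eta G i)^2$, which is exactly the claimed inequality.

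There is no real obstacle here; the only point requiring a bit of care is handling the cross term, which is why I apply Jensen to get an $L^1$-bound on $\|\theta_{t+i}^{(k)} - \theta_t\|$ in addition to the $L^2$-bound. This mirrors the conditional-moment style of Lemma \ref{distance_norm_conditional}, but uses $m=2$ rather than $m=4$ because we need only the second moment of the displacement, and the cross term is absorbed into the perfect square.
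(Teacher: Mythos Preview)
Your proof is correct and follows essentially the same approach as the paper's own proof. The only cosmetic difference is the order of operations: the paper first adds and subtracts $\nabla F(\theta_t)$ inside the gradient norm and then applies smoothness to each piece, whereas you apply smoothness once to pass to $\|\theta_{t+i}^{(k)}-\theta_0\|$ and then insert $\theta_t$ via the triangle inequality; both routes lead to the same three-term expansion and the same perfect square. Your inline derivation of $\E[\|\theta_{t+i}^{(k)}-\theta_t\|^2\mid\F_t]\le \eta^2 i^2 G^2$ under the $m=2$ assumption is in fact slightly cleaner than the paper's citation of Lemma~\ref{distance_norm_conditional}, which is stated for $m=4$.
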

\begin{proof}
By adding and subtracting $\nabla F(\theta_t)$, and by Lemma \ref{distance_norm_conditional}, we get.
\begin{align*}
&\E\l[|| \nabla F(\theta_{t + i}^{(k)}) - \nabla F(\theta_0) ||^2  \ | \ \F_t\r] \leq \E\l[|| \nabla F(\theta_{t + i}^{(k)}) - \nabla F(\theta_t) + \nabla F(\theta_t) - \nabla F(\theta_0) ||^2  \ | \ \F_t\r] \\
&\qquad \leq ||\nabla F(\theta_t) - \nabla F(\theta_0) ||^2 + \E\l[|| \nabla F(\theta_{t + i}^{(k)}) - \nabla F(\theta_t)||^2 \ | \ \F_t\r] \\
&\qquad\quad+ 2 ||\nabla F(\theta_t) - \nabla F(\theta_0) ||\cdot \E\l[|| \nabla F(\theta_{t + i}^{(k)}) - \nabla F(\theta_t)|| \ | \ \F_t\r] \\
&\qquad \leq L^2 ||\theta_t - \theta_0 ||^2 + L^2 \E\l[||\theta_{t + i}^{(k)} - \theta_t||^2 \ | \ \F_t\r]+ 2 L^2 ||\theta_t - \theta_0 ||\cdot \E\l[||\theta_{t + i}^{(k)} - \theta_t|| \ | \ \F_t\r] \\
&\qquad \leq L^2 ||\theta_t - \theta_0 ||^2 + L^2 \eta^2 G^2 i^2 + 2 L^2 ||\theta_t - \theta_0 || \eta G i \\
&\qquad = (L ||\theta_t - \theta_0 || + L \eta G i)^2
\end{align*}
\end{proof}

\begin{remark}{\label{unconditional_diff_gradient_bound}}
When we consider the unconditional distance of the gradients, we can simply use smoothness and Remark \ref{unconditional_distance_parameters} to get 
$$\E\l[|| \nabla F(\theta_{t + i}^{(k)}) - \nabla F(\theta_0) ||^2\r] \leq L^2 \E\l[|| \theta_{t + i}^{(k)} -  \theta_0 ||^2\r] \leq L^2 \eta^2 G^2 (t+i)^2$$
which is the same result that we obtain from Lemma \ref{diff_gradient_condition_Ft} if at the end we bound $||\theta_t - \theta_0 ||$ with its expectation, and use the fact that $\E\l[||\theta_t - \theta_0 ||\r] \leq \eta G t$.
\end{remark}

%%%%%%%%%%%%%%%%%%%%%%%%%%%%%%%%%%%%%%%%%%%%
\begin{lemma}{\label{gradient_condition_Ft}}
If Assumption \ref{smoothness} and \ref{bounded_noisy_gradient} with $m=2$ hold, then for any $i = 1,..., l$ and $k = 1, 2$ we have that
\begin{itemize}
\item[i)] $\E\l[|| \nabla F(\theta_{t + i}^{(k)}) ||^2  \ | \ \F_t\r] \leq (||\nabla F(\theta_0)|| + L ||\theta_t - \theta_0|| + L\eta G i)^2 $
\item[ii)] $\E\l[|| \nabla F(\theta_{t + i}^{(k)}) ||^2  \ | \ \F_t\r] \leq (L ||\theta_t - \theta^*|| + L\eta G i)^2$
\end{itemize}
\end{lemma}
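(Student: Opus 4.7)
Both inequalities follow from the same basic recipe: split the gradient at $\theta_{t+i}^{(k)}$ into a deterministic anchor plus a displacement, apply smoothness to bound the displacement by a distance in parameter space, and then use a short-horizon bound on $\|\theta_{t+i}^{(k)}-\theta_t\|$ coming from Assumption \ref{bounded_noisy_gradient}. The only conceptual wrinkle is that the bounds must be conditional on $\F_t$, so I will be careful that the anchor term I peel off is $\F_t$-measurable.

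\textbf{Part (i).} The plan is to write $\nabla F(\theta_{t+i}^{(k)})=\nabla F(\theta_0)+\bigl(\nabla F(\theta_{t+i}^{(k)})-\nabla F(\theta_0)\bigr)$, expand the squared norm, and take conditional expectations. Since $\|\nabla F(\theta_0)\|$ is deterministic, this gives
\[
\E\!\l[\|\nabla F(\theta_{t+i}^{(k)})\|^2 \mid \F_t\r] \le \|\nabla F(\theta_0)\|^2 + 2\|\nabla F(\theta_0)\|\,\E\!\l[\|\nabla F(\theta_{t+i}^{(k)})-\nabla F(\theta_0)\| \mid \F_t\r] + \E\!\l[\|\nabla F(\theta_{t+i}^{(k)})-\nabla F(\theta_0)\|^2 \mid \F_t\r].
\]
I will bound the last term directly by Lemma \ref{diff_gradient_condition_Ft} with its RHS $(L\|\theta_t-\theta_0\|+L\eta G i)^2$, and the middle term by the square root of the same bound via conditional Jensen. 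The three pieces then assemble into the perfect square $(\|\nabla F(\theta_0)\|+L\|\theta_t-\theta_0\|+L\eta G i)^2$, which is exactly the claim.

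\textbf{Part (ii).} Here the anchor should instead be $\theta^*$, exploiting $\nabla F(\theta^*)=0$. Smoothness gives $\|\nabla F(\theta_{t+i}^{(k)})\|\le L\|\theta_{t+i}^{(k)}-\theta^*\|$, so
\[
\E\!\l[\|\nabla F(\theta_{t+i}^{(k)})\|^2 \mid \F_t\r] \le L^2\, \E\!\l[\|\theta_{t+i}^{(k)}-\theta^*\|^2 \mid \F_t\r].
\]
I will then insert $\theta_t$: $\|\theta_{t+i}^{(k)}-\theta^*\|\le \|\theta_t-\theta^*\|+\|\theta_{t+i}^{(k)}-\theta_t\|$, expand the square, and take conditional expectations. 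The cross term and the quadratic term in $\|\theta_{t+i}^{(k)}-\theta_t\|$ are controlled by the $m=2$ analogue of Lemma \ref{distance_norm_conditional}, which gives $\E[\|\theta_{t+i}^{(k)}-\theta_t\|^2\mid\F_t]\le \eta^2 i^2 G^2$ and hence (by conditional Jensen) $\E[\|\theta_{t+i}^{(k)}-\theta_t\|\mid\F_t]\le \eta i G$. The three terms again combine into the perfect square $(L\|\theta_t-\theta^*\|+L\eta G i)^2$.

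\textbf{Expected obstacle.} There is no real obstacle: both parts are short consequences of the triangle inequality, smoothness, and the moment bound on noisy gradients. The only step worth double-checking is the $m=2$ version of Lemma \ref{distance_norm_conditional}, which is not stated explicitly in the excerpt but follows from the same Cauchy--Schwarz argument (with one application instead of two) since $\theta_{t+i}^{(k)}-\theta_t=-\eta\sum_{j=0}^{i-1} g(\theta_{t+j}^{(k)},Z_{t+j+1}^{(k)})$ and Assumption \ref{bounded_noisy_gradient} with $m=2$ bounds each summand's second moment by $G^2$. Once that micro-lemma is in hand, both (i) and (ii) collapse to the perfect-square algebra described above.
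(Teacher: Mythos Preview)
Your proposal is correct and follows essentially the same approach as the paper: smoothness plus triangle inequality plus the conditional second-moment bound $\E[\|\theta_{t+i}^{(k)}-\theta_t\|^2\mid\F_t]\le \eta^2 i^2 G^2$, assembled into a perfect square. The only cosmetic difference is ordering: for part (i) the paper first splits at $\theta_t$ (reaching an intermediate bound in terms of $\|\nabla F(\theta_t)\|$) and then splits again at $\theta_0$, whereas you anchor directly at $\theta_0$ and cite Lemma~\ref{diff_gradient_condition_Ft}, which already packages that two-step split; for part (ii) the paper reuses the same intermediate bound and replaces $\|\nabla F(\theta_t)\|$ by $L\|\theta_t-\theta^*\|$, while you bound $\|\nabla F(\theta_{t+i}^{(k)})\|\le L\|\theta_{t+i}^{(k)}-\theta^*\|$ first and then insert $\theta_t$ --- same ingredients, same result.
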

\begin{proof}
We add and subtract $\nabla F(\theta_t)$ to the gradient on the left hand side, and apply Lemma \ref{distance_norm_conditional}. 
\begin{align}{\label{gradient_condition_Ft_temp}} \nonumber
&\E\l[|| \nabla F(\theta_{t + i}^{(k)}) ||^2  \ | \ \F_t\r] = \E\l[|| \nabla F(\theta_{t + i}^{(k)}) - \nabla F(\theta_t) + \nabla F(\theta_t) ||^2  \ | \ \F_t\r] \\\nonumber
&\qquad \leq ||\nabla F(\theta_t)||^2 + \E\l[||\nabla F(\theta_{t + i}^{(k)}) - \nabla F(\theta_t)||^2 \ |\ \F_t\r] \\ \nonumber
&\qquad \quad+ 2 ||\nabla F(\theta_t)|| \cdot \E\l[||\nabla F(\theta_{t + i}^{(k)}) - \nabla F(\theta_t)|| \ |\ \F_t\r] \\ \nonumber
&\qquad \leq ||\nabla F(\theta_t)||^2 + L^2 \E\l[||\theta_{t + i}^{(k)} - \theta_t||^2 \ |\ \F_t\r] + 2 L ||\nabla F(\theta_t)|| \cdot \E\l[||\theta_{t + i}^{(k)} - \theta_t|| \ |\ \F_t\r] \\ 
&\qquad \leq ||\nabla F(\theta_t)||^2 + L^2 \eta^2 G^2 i^2 + 2 ||\nabla F(\theta_t)|| \cdot L \eta G i 
\end{align}
To get part $i)$ we repeat the same trick, this time adding and subtracting $\nabla F(\theta_0)$ to the terms that contain $\nabla F(\theta_t)$.
\begin{align*}
(\ref{gradient_condition_Ft_temp})&\leq ||\nabla F(\theta_0)||^2 + ||\nabla F(\theta_t) - \nabla F(\theta_0)||^2 + 2 ||\nabla F(\theta_0)||\cdot ||\nabla F(\theta_t) - \nabla F(\theta_0)||  \\
&\qquad\qquad+ L^2 \eta^2 G^2 i^2 + 2 ||\nabla F(\theta_0)|| \cdot L \eta G i + 2 ||\nabla F(\theta_t) - \nabla F(\theta_0)|| \cdot L \eta G i \\
&\qquad \leq ||\nabla F(\theta_0)||^2 + L^2 ||\theta_t - \theta_0||^2 + 2L ||\nabla F(\theta_0)||\cdot || \theta_t - \theta_0||  \\
&\qquad\qquad+ L^2 \eta^2 G^2 i^2 + 2 ||\nabla F(\theta_0)|| \cdot L \eta G i + 2 ||\theta_t - \theta_0|| \cdot L^2 \eta G i \\
&\qquad = (||\nabla F(\theta_0)|| + L ||\theta_t - \theta_0|| + L\eta G i)^2
\end{align*}
To get part $ii)$, instead, we can add $\nabla f(\theta^*)$ and get
\begin{align*}
(\ref{gradient_condition_Ft_temp})&\leq L^2 ||\theta_t - \theta^*||^2 + L^2 \eta^2 G^2 i^2 + 2 ||\theta_t - \theta^*|| \cdot L^2 \eta G i \\
&= (L ||\theta_t - \theta^*|| + L \eta G i)^2
\end{align*}

\end{proof}
\begin{remark}{\label{unconditional_gradient_bound}}
For the unconditional squared norm of the gradient we again obtain the same bound as if in Lemma \ref{gradient_condition_Ft} we were considering $\E[||\theta_t - \theta_0||] \leq \eta G t$ instead of just the argument of the expectation.
\begin{align*}
\E\l[|| \nabla F(\theta_{t + i}^{(k)}) ||^2\r] &= \E\l[|| \nabla F(\theta_{t + i}^{(k)})- \nabla F(\theta_0) + \nabla F(\theta_0) ||^2\r]  \\
&\leq ||\nabla F(\theta_0)||^2 + \E\l[|| \nabla F(\theta_{t + i}^{(k)})- \nabla F(\theta_0)||^2\r] \\
&\quad+ 2 ||\nabla F(\theta_0)|| \cdot \E\l[|| \nabla F(\theta_{t + i}^{(k)})- \nabla F(\theta_0)||\r] \\
&\leq  ||\nabla F(\theta_0)||^2 + L^2 \eta^2 G^2 (t+i)^2 + 2 ||\nabla F(\theta_0)|| L \eta G (t+i) \\
&= (||\nabla F(\theta_0)|| + L \eta G (t+i))^2
\end{align*}
\end{remark}

%%%%%%%%%%%%%%%%%%%%%%%%%%%%%%%%%%%%%%%%%%%%
%%%%%%%%%%%%%%%%%%%%%%%%%%%%%%%%%%%%%%%%%%%%
%%%%%%%%%%%%%%%%%%%%%%%%%%%%%%%%%%%%%%%%%%%%
\section{Proof of Theorem \ref{asymptotic_eta}}
\label{appendix_proof_asymptotic_eta}

To slightly simplify the notation, we consider only $Q_1$. For the following windows, the calculations are equal and just involve some more terms, that are negligible if $\eta$ is small enough. We assume that the Splitting Diagnostic starts after $t$ iterations have already been made.
We use the idea that, for a fixed $t$, if the learning rate is sufficiently small, the SGD iterate $\theta_t$ and $\theta_0$ will not be very far apart. In particular we will use $\eta$ small enough such that $\eta\cdot(t+l)$ is small, making every term of order $O(\eta^k(t+l)^k)$ negligible for $k > 1$. 
Thanks to the conditional independence of the errors, the expectation of $Q_1$ can be written only in terms of the true gradients.
\begin{align}{\label{expectation_thm1}} \nonumber
\E\l[Q_1\r] &=  \frac{1}{l^2}\sum_{i = 0}^{l-1} \sum_{j = 0}^{l-1} \E\l[\langle g(\theta_{t+ i}^{(1)}) , g(\theta_{t + j}^{(2)}) \rangle\r] \\ \nonumber
&= \frac{1}{l^2}\sum_{i = 0}^{l-1} \sum_{j = 0}^{l-1} \E\l[\langle \nabla F(\theta_{t + i}^{(1)}) + \epsilon(\theta_{t+i}^{(1)}) , \nabla F(\theta_{t + j}^{(2)}) + \epsilon(\theta_{t+j}^{(2)}) \rangle\r] \\
&= \frac{1}{l^2}\sum_{i = 0}^{l-1} \sum_{j = 0}^{l-1} \E\l[\langle \nabla F(\theta_{t + i}^{(1)}) , \nabla F(\theta_{t + j}^{(2)})\rangle\r] 
\end{align}
We now add and subtract $\nabla F(\theta_0)$, and use L-smoothness and Remark \ref{unconditional_distance_parameters} to provide a lower bound for $\E\l[Q_1\r]$. From (\ref{expectation_thm1}) we get
\begin{align}{\label{lower_bound_E}} \nonumber
&\E\l[Q_1\r] = \frac{1}{l^2}\sum_{i = 0}^{l-1} \sum_{j = 0}^{l-1} \l\{ \langle \nabla F(\theta_0) , \nabla F(\theta_0)\rangle + \E\l[\langle \nabla F(\theta_{t + i}^{(1)}) - \nabla F(\theta_0) , \nabla F(\theta_{t + j}^{(2)}) - \nabla F(\theta_0) \rangle\r] \r.\\\nonumber
&\qquad\quad \l.+ \E\l[\langle \nabla F(\theta_0) , \nabla F(\theta_{t + j}^{(2)}) - \nabla F(\theta_0) \rangle\r] + \E\l[\langle \nabla F(\theta_{t + i}^{(1)}) - \nabla F(\theta_0) , \nabla F(\theta_0) \rangle\r] \r\} \\[5pt] \nonumber
&\qquad\geq ||\nabla F(\theta_0)||^2 - \frac{1}{l^2}\sum_{i = 0}^{l-1} \sum_{j = 0}^{l-1} \E\l[||\nabla F(\theta_{t + i}^{(1)}) - \nabla F(\theta_0) || \cdot ||\nabla F(\theta_{t + j}^{(2)}) - \nabla F(\theta_0) ||\r] \\\nonumber
&\qquad\quad - \frac{1}{l} \sum_{j = 0}^{l-1}\E\l[||\nabla F(\theta_0)||\cdot || \nabla F(\theta_{t + j}^{(2)}) - \nabla F(\theta_0) ||\r] \\
&\qquad\quad - \frac{1}{l}\sum_{i = 0}^{l-1} \E\l[||\nabla F(\theta_0)||\cdot|| \nabla F(\theta_{t + i}^{(1)}) - \nabla F(\theta_0) ||\r] \\[5pt] \nonumber
&\qquad\geq ||\nabla F(\theta_0)||^2 - \frac{L^2}{l^2}\sum_{i = 0}^{l-1} \sum_{j = 0}^{l-1}\sqrt{ \E\l[||\theta_{t + i}^{(1)} - \theta_0 ||^2\r] \cdot \E\l[|| \theta_{t + j}^{(2)} - \theta_0 ||^2\r]} \\ \nonumber 
&\qquad\quad - \frac{2L}{l}\sum_{i = 0}^{l-1} ||\nabla F(\theta_0)||\cdot \E\l[|| \theta_{t + i}^{(1)} -  \theta_0 ||\r]  \\[5pt] \nonumber
&\qquad\geq ||\nabla F(\theta_0)||^2 - L^2\eta^2G^2(t+l)^2 - 2 L ||\nabla F(\theta_0)|| \eta G (t+l) \\[5pt]
&\qquad = ||\nabla F(\theta_0)||^2 - 2 L ||\nabla F(\theta_0)|| \eta G (t+l) + O(\eta^2 (t+l)^2)
\end{align}
Notice that, in the extreme case where $\eta = 0$, we simply have $\E[Q_1] \geq ||\nabla F(\theta_0)||^2$ which is actually an equality, since we would have $\theta_t = \theta_0$ and the noisy gradient at step $t$ would be $g(\theta_0, Z_t)$, whose expectation is just $\nabla F(\theta_0)$.
We now expand the second moment, and there are a lot of terms to be considered separately. 
\begin{align*}
l^4\cdot \E\l[Q_1^2\r] &= \E\l[\l\langle \sum_{i = 0}^{l-1} g(\theta_{t + i}^{(1)}) , \sum_{j = 0}^{l-1} g(\theta_{t + j}^{(2)}) \r\rangle^2 \r] \\[5pt]
&= \E\l[\l\langle \sum_{i = 0}^{l-1} \l(\nabla F(\theta_{t + i}^{(1)}) + \epsilon(\theta_{t+i}^{(1)}) \r), \sum_{j = 0}^{l-1} \l(\nabla F(\theta_{t + j}^{(2)}) + \epsilon(\theta_{t+j}^{(2)}) \r)\r\rangle^2 \r] \\[5pt]
&= \underbrace{\E\l[\l\langle \sum_{i = 0}^{l-1} \nabla F(\theta_{t + i}^{(1)}) , \sum_{j = 0}^{l-1} \nabla F(\theta_{t + j}^{(2)}) \r\rangle^2\r]}_{I} + \underbrace{\E\l[\l\langle \sum_{i = 0}^{l-1} \nabla F(\theta_{t + i}^{(1)}) , \sum_{j = 0}^{l-1} \epsilon(\theta_{t+j}^{(2)}) \r\rangle^2\r]}_{II} \\
&\qquad + \underbrace{\E\l[\l\langle \sum_{i = 0}^{l-1} \epsilon(\theta_{t+i}^{(1)}) , \sum_{j = 0}^{l-1} \nabla F(\theta_{t + j}^{(2)}) \r\rangle^2\r]}_{III} + \underbrace{\E\l[\l\langle \sum_{i = 0}^{l-1} \epsilon(\theta_{t+i}^{(1)}) , \sum_{j = 0}^{l-1} \epsilon(\theta_{t+j}^{(2)}) \r\rangle^2\r]}_{IV} \\
&\qquad + 2 \underbrace{\E\l[\l\langle \sum_{i = 0}^{l-1} \nabla F(\theta_{t + i}^{(1)}) , \sum_{j = 0}^{l-1} \nabla F(\theta_{t + j}^{(2)}) \r\rangle \cdot \l\langle \sum_{h = 0}^{l-1} \nabla F(\theta_{t + h}^{(1)}) , \sum_{k = 0}^{l-1} \epsilon(\theta_{t+k}^{(2)}) \r\rangle\r]}_{V} \\
&\qquad + 2 \underbrace{\E\l[\l\langle \sum_{i = 0}^{l-1} \nabla F(\theta_{t + i}^{(1)}) , \sum_{j = 0}^{l-1} \nabla F(\theta_{t + j}^{(2)}) \r\rangle \cdot \l\langle \sum_{h = 0}^{l-1} \epsilon(\theta_{t+h}^{(1)}) , \sum_{k = 0}^{l-1} \nabla F(\theta_{t + k}^{(2)}) \r\rangle\r]}_{VI} \\
&\qquad + 2 \underbrace{\E\l[\l\langle \sum_{i = 0}^{l-1} \nabla F(\theta_{t + i}^{(1)}) , \sum_{j = 0}^{l-1} \nabla F(\theta_{t + j}^{(2)}) \r\rangle \cdot \l\langle \sum_{h = 0}^{l-1} \epsilon(\theta_{t+h}^{(1)}) , \sum_{k = 0}^{l-1} \epsilon(\theta_{t+k}^{(2)}) \r\rangle\r]}_{VII} \\
&\qquad + 2 \underbrace{\E\l[\l\langle \sum_{i = 0}^{l-1} \nabla F(\theta_{t + i}^{(1)}) , \sum_{j = 0}^{l-1} \epsilon(\theta_{t+j}^{(2)}) \r\rangle \cdot \l\langle \sum_{h = 0}^{l-1} \epsilon(\theta_{t+h}^{(1)}) , \sum_{k = 0}^{l-1} \nabla F(\theta_{t + k}^{(2)}) \r\rangle\r]}_{VIII} \\
&\qquad + 2 \underbrace{\E\l[\l\langle \sum_{i = 0}^{l-1} \nabla F(\theta_{t + i}^{(1)}) , \sum_{j = 0}^{l-1} \epsilon(\theta_{t+j}^{(2)}) \r\rangle \cdot \l\langle \sum_{h = 0}^{l-1} \epsilon(\theta_{t+h}^{(1)}) , \sum_{k = 0}^{l-1} \epsilon(\theta_{t+k}^{(2)}) \r\rangle\r]}_{IX} \\
&\qquad + 2 \underbrace{\E\l[\l\langle \sum_{i = 0}^{l-1} \epsilon(\theta_{t+i}^{(1)}) , \sum_{j = 0}^{l-1} \nabla F(\theta_{t + j}^{(2)}) \r\rangle \cdot \l\langle \sum_{h = 0}^{l-1} \epsilon(\theta_{t+h}^{(1)}) , \sum_{k = 0}^{l-1} \epsilon(\theta_{t+k}^{(2)}) \r\rangle\r]}_{X}
\end{align*}
In the squared terms $I$ to $IV$, the errors are independent from the other argument of the dot product, conditional on $\F_t$, since they are evaluated on different threads. However, in the double products ($V$ to $X$), some errors are used to generate the subsequent values of the SGD iterates on the same thread. This means that we cannot just ignore them, but we instead have to carefully find an upper bound for each one.
\begin{itemize}
\item In $I$ we use the Cauchy-Schwarz inequality and Lemma \ref{gradient_condition_Ft}, after exploiting the independence of the two threads conditional on $\F_t$.
\begin{align*}
&\E\l[\l\langle \sum_{i = 0}^{l-1} \nabla F(\theta_{t + i}^{(1)}) , \sum_{j = 0}^{l-1} \nabla F(\theta_{t + j}^{(2)}) \r\rangle^2\r]  \leq l^4 \cdot \max_{i, j} \ \E\l[\l\langle \nabla F(\theta_{t + i}^{(1)}) , \nabla F(\theta_{t + j}^{(2)}) \r\rangle^2\r]\\
&\qquad \leq l^4\cdot \max_{i, j}\ \E\l[ \E\l[|| \nabla F(\theta_{t + i}^{(1)}) ||^2  \ | \ \F_t\r]\cdot  \E\l[|| \nabla F(\theta_{t + j}^{(2)}) ||^2  \ | \ \F_t\r]\r] \\
&\qquad \leq l^4\cdot \E\l[\l(||\nabla F(\theta_0)|| + L ||\theta_t - \theta_0|| + L\eta G l\r)^4\r] \\
&\qquad \lesssim l^4\cdot \E\l[||\nabla F(\theta_0)||^4 + 4L||\nabla F(\theta_0)||^3\cdot || \theta_t - \theta_0|| + 4 ||\nabla F(\theta_0)||^3\cdot L \eta G l \r] \\
&\qquad\quad  + l^4\cdot O(\eta^2(t+l)^2)  \\
&\qquad \lesssim l^4\cdot \l(||\nabla F(\theta_0)||^4  +  4L \eta G ||\nabla F(\theta_0)||^3 (t + l) + O(\eta^2(t+l)^2)\r)
\end{align*}
In the first approximate inequality denoted by $\lesssim$, we have included most of the terms of the expansion in the $O(\eta^2(t+l)^2)$, even if technically we could have done it only after taking the expected value. Notice that here it was important to have a bound in Remark \ref{unconditional_distance_parameters} up to the fourth order.

\item
Terms $II$ and $III$ are equal, since the two threads are identically distributed, and the errors in one thread are a martingale difference sequence independent from the updates in the other thread. We will use the bound for the error norm
\begin{equation}{\label{bound_error_norm}}
 \E\l[||\epsilon_{t}||^2 \ | \ \F_{t}\r] =  \E\l[\epsilon_{t}^T\epsilon_{t}\ | \ \F_{t}\r] = \E\l[\text{tr}(\epsilon_{t}\epsilon_{t}^T)\ | \ \F_{t}\r] \leq d\cdot \sigma_{max}
\end{equation}
which is a consequence of Assumption \ref{errors_assumption}, and condition on $\F_{t}$ to use independence of the errors. In the last line we use Remark \ref{unconditional_gradient_bound}.
\begin{align*}
&\E\l[\l\langle \sum_{i = 0}^{l-1} \nabla F(\theta_{t + i}^{(1)}) , \sum_{j = 0}^{l-1} \epsilon_{t+j}^{(2)} \r\rangle^2\r] = \sum_{j = 0}^{l-1} \E\l[\l\langle \sum_{i = 0}^{l-1} \nabla F(\theta_{t + i}^{(1)}) , \epsilon_{t+j}^{(2)} \r\rangle^2\r] \\
&\qquad\qquad\leq l^2 \max_{i} \sum_{j = 0}^{l-1} \E\l[ \l|\l| \nabla F(\theta_{t + i}^{(1)})\r|\r|^2\cdot ||\epsilon_{t+j}^{(2)}||^2\r] \\
&\qquad\qquad= l^3\cdot \max_{i} \ \E\l[ \E\l[||\epsilon_{t+i}^{(2)}||^2 \ | \ \F_{t}\r] \cdot \E\l[ \l|\l| \nabla F(\theta_{t + i}^{(1)})\r|\r|^2 \ | \ \F_{t}\r] \r] \\
&\qquad\qquad\leq  l^3\cdot d \sigma_{max} \cdot \max_{i} \ \E\l[||\nabla F(\theta_{t + i}^{(1)})||^2\r] \\
&\qquad\qquad\lesssim l^3\cdot d \sigma_{max} \cdot \l(||\nabla f (\theta_0)||^2 + 2||\nabla f (\theta_0)|| L G \eta (t+l) + O(\eta^2 (t+l)^2)\r) 
\end{align*}

\item
In $IV$, we use the conditional independence of the two threads, and the fact that the errors are a martingale difference sequence, to cancel out all the cross products. An upper bound is then
\begin{align*}
\E\l[\l\langle \sum_{i= 0}^{l-1} \epsilon_{t+i}^{(1)} , \sum_{j = 0}^{l-1} \epsilon_{t+j}^{(2)} \r\rangle^2\r] &= \sum_{i = 0}^{l-1}\sum_{j = 0}^{l-1} \E\l[\l\langle  \epsilon_{t+i}^{(1)} ,  \epsilon_{t+j}^{(2)} \r\rangle^2\r] \\
&\leq \sum_{i = 0}^{l-1}\sum_{j = 0}^{l-1} \E\l[||\epsilon_{t+i}^{(1)} ||^2\cdot||\epsilon_{t+j}^{(2)}||^2\r] \\
&= \sum_{i = 0}^{l-1}\sum_{j = 0}^{l-1} \E\l[ \E\l[||\epsilon_{t+i}^{(1)} ||^2 \ | \ \F_{t}\r]\cdot\E\l[||\epsilon_{t+j}^{(2)}||^2 \ | \ \F_{t}\r] \r]\\
&\leq l^2 d^2\sigma^2_{max}
\end{align*}
\end{itemize}

Now we start dealing with the double products. The problem here is that these terms are not all null, since the errors are used in the subsequent updates in the same thread, and they are then not independent.
\begin{itemize}
\item $V$ and $VI$ are distributed in the same way. We can cancel out some terms using the conditional independence given $\F_t$, and use the conditional version of Cauchy-Schwarz inequality separately on the two threads.
{ \small
\begin{align*}
&\E\l[\l\langle \sum_{i = 0}^{l-1} \nabla F(\theta_{t + i}^{(1)}) , \sum_{j = 0}^{l-1} \nabla F(\theta_{t + j}^{(2)}) \r\rangle \cdot \l\langle \sum_{h = 0}^{l-1} \nabla F(\theta_{t + h}^{(1)}) , \sum_{k = 0}^{l-1} \epsilon(\theta_{t+k}^{(2)}) \r\rangle\r] \\
&\qquad = \sum_{i,j,h,k = 0}^{l-1}\E\l[\l\langle  \nabla F(\theta_{t + i}^{(1)}) ,  \nabla F(\theta_{t + j}^{(2)}) \r\rangle \cdot \l\langle  \nabla F(\theta_{t + h}^{(1)}) ,  \epsilon(\theta_{t+k}^{(2)}) \r\rangle\r] \\
&\qquad = \sum_{i,j,h,k = 0}^{l-1} \E\l[\l\langle \nabla F(\theta_0) + \l(\nabla F(\theta_{t + i}^{(1)}) - \nabla F(\theta_0)\r) , \nabla F(\theta_0) + \l(\nabla F(\theta_{t + j}^{(2)}) - \nabla F(\theta_0)\r) \r\rangle \times \r.\\
&\hspace{3cm}\l. \times
\l\langle \nabla F(\theta_0) + \l(\nabla F(\theta_{t + h}^{(1)}) - \nabla F(\theta_0)\r) ,  \epsilon(\theta_{t+k}^{(2)}) \r\rangle\r] \\
&\qquad = \sum_{i,j,h,k = 0}^{l-1} \E\l[\l\langle \nabla F(\theta_0), \nabla F(\theta_{t + j}^{(2)}) - \nabla F(\theta_0)\r\rangle \cdot \l\langle \nabla F(\theta_0), \epsilon(\theta_{t+k}^{(2)})\r\rangle\r] \\
&\qquad\quad + \sum_{i,j,h,k = 0}^{l-1} \E\l[\l\langle \nabla F(\theta_0), \nabla F(\theta_{t + j}^{(2)}) - \nabla F(\theta_0)\r\rangle \cdot \l\langle \nabla F(\theta_{t + h}^{(1)}) - \nabla F(\theta_0), \epsilon(\theta_{t+k}^{(2)})\r\rangle\r] \\
&\qquad\quad + \sum_{i,j,h,k = 0}^{l-1} \E\l[\l\langle \nabla F(\theta_{t + i}^{(1)}) - \nabla F(\theta_0), \nabla F(\theta_{t + j}^{(2)}) - \nabla F(\theta_0)\r\rangle \cdot \l\langle \nabla F(\theta_0), \epsilon(\theta_{t+k}^{(2)})\r\rangle\r] \\
&\qquad\quad + \sum_{i,j,h,k = 0}^{l-1} \E\l[\l\langle \nabla F(\theta_{t + i}^{(1)}) - \nabla F(\theta_0), \nabla F(\theta_{t + j}^{(2)}) - \nabla F(\theta_0)\r\rangle \times \r. \\
 &\qquad\qquad\qquad \l. \times \l\langle \nabla F(\theta_{t + h}^{(1)}) - \nabla F(\theta_0), \epsilon(\theta_{t+k}^{(2)})\r\rangle\r] \\
&\qquad \leq l^2 ||\nabla F(\theta_0)||^2 \sum_{j,k = 0}^{l-1} \E\l[||\nabla F(\theta_{t + j}^{(2)}) - \nabla F(\theta_0)||\cdot||\epsilon(\theta_{t+k}^{(2)})||\r] \\
&\qquad\quad + l ||\nabla F(\theta_0)|| \sum_{j,h,k = 0}^{l-1} \E\l[||\nabla F(\theta_{t + j}^{(2)}) - \nabla F(\theta_0)||\cdot||\nabla F(\theta_{t + h}^{(1)}) - \nabla F(\theta_0)||\cdot||\epsilon(\theta_{t+k}^{(2)})||\r] \\
&\qquad\quad + l ||\nabla F(\theta_0)|| \sum_{i,j,k = 0}^{l-1} \E\l[||\nabla F(\theta_{t + i}^{(1)}) - \nabla F(\theta_0)||\cdot||\nabla F(\theta_{t + j}^{(2)}) - \nabla F(\theta_0)||\cdot||\epsilon(\theta_{t+k}^{(2)})||\r]  \\
&\qquad\quad + \sum_{i,j,h,k = 0}^{l-1} \E\l[||\nabla F(\theta_{t + i}^{(1)}) - \nabla F(\theta_0)||\cdot||\nabla F(\theta_{t + j}^{(2)}) - \nabla F(\theta_0)||\times \r. \\
&\qquad\qquad\qquad \l. \times||\nabla F(\theta_{t + h}^{(1)}) - \nabla F(\theta_0)||\cdot||\epsilon(\theta_{t+k}^{(2)})||\r]
\end{align*}
}%
We bound the four pieces separately.
For the first, we can just apply Cauchy-Schwarz and $L$-smoothness, together with Remark \ref{unconditional_distance_parameters} 
\begin{align*}
\E\l[||\nabla F(\theta_{t + j}^{(2)}) - \nabla F(\theta_0)||\cdot||\epsilon(\theta_{t+k}^{(2)})||\r] &\leq L \sqrt{\E\l[||\theta_{t + j}^{(2)} - \theta_0||^2\r]\cdot\E\l[||\epsilon(\theta_{t+k}^{(2)})||^2\r]}\\
&\leq \sqrt{d \sigma_{max}}\cdot L\eta G (t+l) 
\end{align*}
The bound for the second and third term is equal. We use the conditional independence of the two threads and Lemma \ref{diff_gradient_condition_Ft}.
\begin{align*}
&\E\l[||\nabla F(\theta_{t + j}^{(2)}) - \nabla F(\theta_0)||\cdot||\nabla F(\theta_{t + h}^{(1)}) - \nabla F(\theta_0)||\cdot||\epsilon(\theta_{t+k}^{(2)})||\r] = \\
&\qquad = \E\l[ \E\l[||\nabla F(\theta_{t + j}^{(2)}) - \nabla F(\theta_0)||\cdot||\epsilon(\theta_{t+k}^{(2)})|| \ |\ \F_t\r] \times \r. \\
&\qquad\quad \l. \times \E\l[||\nabla F(\theta_{t + h}^{(1)}) - \nabla F(\theta_0)||\ | \ \F_t\r] \r] \\
&\qquad \leq \E\l[ \sqrt{\E\l[||\nabla F(\theta_{t + j}^{(2)}) - \nabla F(\theta_0)||^2 \ | \ \F_t\r] \cdot \E\l[||\epsilon(\theta_{t+k}^{(2)})||^2 \ |\ \F_t\r]}\times \r. \\
&\qquad\qquad  \times\E\l[||\nabla F(\theta_{t + h}^{(1)}) - \nabla F(\theta_0)||\ | \ \F_t\r] \bigg] \\
&\qquad \leq \sqrt{d \sigma_{max}}\cdot \E\l[(L ||\theta_t - \theta_0|| + L \eta G l)^2\r]\\
&\qquad \leq \sqrt{d \sigma_{max}}\cdot L^2 \eta^2 G^2 (t+l)^2
\end{align*}
The last term again makes use of conditional independence and Lemma \ref{diff_gradient_condition_Ft}.
\begin{align*}
&\E\l[||\nabla F(\theta_{t + i}^{(1)}) - \nabla F(\theta_0)||\cdot||\nabla F(\theta_{t + j}^{(2)}) - \nabla F(\theta_0)|| \r. \\
&\quad \l. \times ||\nabla F(\theta_{t + h}^{(1)}) - \nabla F(\theta_0)||\cdot||\epsilon(\theta_{t+k}^{(2)})||\r] = \\
&\qquad = \E\l[ \E\l[||\nabla F(\theta_{t + i}^{(1)}) - \nabla F(\theta_0)||\cdot||\nabla F(\theta_{t + h}^{(1)}) - \nabla F(\theta_0)|| \ |\ \F_t\r]  \times \r.\\
&\qquad\quad \l.\times \ \E\l[||\nabla F(\theta_{t + j}^{(2)}) - \nabla F(\theta_0)||\cdot||\epsilon(\theta_{t+k}^{(2)})||  \ |\ \F_t\r] \r] \\
&\qquad \leq \E\l[  \sqrt{ \E\l[||\nabla F(\theta_{t + i}^{(1)}) - \nabla F(\theta_0)||^2 \ |\ \F_t\r] \cdot\E\l[||\nabla F(\theta_{t + h}^{(1)}) - \nabla F(\theta_0)||^2 \ |\ \F_t\r]}  \times \r. \\
&\qquad\quad \times \l. \sqrt{\E\l[||\nabla F(\theta_{t + j}^{(2)}) - \nabla F(\theta_0)||^2 \ |\ \F_t \r] \cdot\E\l[||\epsilon(\theta_{t+k}^{(2)})||^2  \ |\ \F_t\r]}  \r] \\
&\qquad\leq \sqrt{d \sigma_{max}}\cdot \E\l[(L ||\theta_t - \theta_0|| + L \eta G l)^3\r]\\
&\qquad \leq \sqrt{d \sigma_{max}}\cdot L^3 \eta^3 G^3 (t+l)^3
\end{align*}
The last inequality follows from the use of Remark \ref{unconditional_distance_parameters} to bound the moments of $||\theta_t - \theta_0||$ up to order three.

\item The upper bound for $VII$ and $VIII$ is the same, even if the error terms are in different positions. Again we invoke conditional independence to get rid of the dot products that only contain $\nabla F(\theta_0)$, and subsequently apply Cauchy-Schwarz inequality.
\begin{align*}
&\E\l[\l\langle \sum_{i = 0}^{l-1} \nabla F(\theta_{t + i}^{(1)}) , \sum_{j = 0}^{l-1} \nabla F(\theta_{t + j}^{(2)}) \r\rangle \cdot \l\langle \sum_{i = 0}^{l-1} \epsilon(\theta_{t+i}^{(1)}) , \sum_{j = 0}^{l-1} \epsilon(\theta_{t+j}^{(2)}) \r\rangle\r] \\
&\qquad = \sum_{i,j,h,k = 0}^{l-1} \E\l[\l\langle \nabla F(\theta_0) + \l(\nabla F(\theta_{t + i}^{(1)}) - \nabla F(\theta_0)\r), \r. \r. \\
& \qquad \qquad \qquad \l. \l. \nabla F(\theta_0) + \l(\nabla F(\theta_{t + j}^{(2)}) - \nabla F(\theta_0)\r) \r\rangle \cdot \l\langle \epsilon(\theta_{t+h}^{(1)}), \epsilon(\theta_{t+k}^{(2)}) \r\rangle\r] \\
&\qquad = \sum_{i,j,h,k = 0}^{l-1} \E\l[\l\langle\nabla F(\theta_{t + i}^{(1)}) - \nabla F(\theta_0), \nabla F(\theta_{t + j}^{(2)}) - \nabla F(\theta_0) \r\rangle\cdot\l\langle \epsilon(\theta_{t+h}^{(1)}), \epsilon(\theta_{t+k}^{(2)}) \r\rangle\r] \\
&\qquad \leq L^2 \sum_{i,j,h,k = 0}^{l-1} \E\l[||\theta_{t + i}^{(1)} - \theta_0||\cdot ||\theta_{t + j}^{(2)} - \theta_0 ||\cdot ||\epsilon(\theta_{t+h}^{(1)})||\cdot ||\epsilon(\theta_{t+k}^{(2)})|| \r] \\[5pt]
&\qquad \leq L^2 \sum_{i,j,h,k = 0}^{l-1} \E\l[\E\l[||\theta_{t + i}^{(1)} - \theta_0||\cdot||\epsilon(\theta_{t+h}^{(1)})|| \ | \ \F_t\r] \times \r. \\
&\qquad \qquad  \l. \times \E\l[ ||\theta_{t + j}^{(2)} - \theta_0 ||\cdot ||\epsilon(\theta_{t+k}^{(2)})||\ | \ \F_t \r]\r]\\
&\qquad \leq L^2 \sum_{i,j,h,k = 0}^{l-1} \E\l[\sqrt{\E\l[||\theta_{t + i}^{(1)} - \theta_0||^2 \ | \ \F_t\r]\cdot\E\l[||\epsilon(\theta_{t+h}^{(1)})||^2 \ | \ \F_t\r]} \times\r. \\
&\qquad\qquad \l. \times \sqrt{\E\l[ ||\theta_{t + j}^{(2)} - \theta_0 ||^2\ | \ \F_t\r]\cdot\E\l[ ||\epsilon(\theta_{t+k}^{(2)})||^2\ | \ \F_t \r]}\r]\\
&\qquad \leq l^4 L^2 \eta^2 G^2 (t+l)^2 d \sigma_{max}
\end{align*}

\item Also the upper bounds for $IX$ and $X$ are equal. In the first one, when $k \neq j$ we can condition on $\F^{(1)}_{t+l}$ and $\F^{(2)}_{t+\max\{k, j\}}$ to get that the expectation is null. Then we are only left with a sum on three indexes $i, j, h$ and $k = j$. In the last passage we again condition on the appropriate $\sigma$-algebras to bound separately the two threads. 

\begin{align*}
&\E\l[\l\langle \sum_{i = 0}^{l-1} \nabla F(\theta_{t + i}^{(1)}) , \sum_{j = 0}^{l-1} \epsilon(\theta_{t+j}^{(2)}) \r\rangle \cdot \l\langle \sum_{h = 0}^{l-1} \epsilon(\theta_{t+h}^{(1)}) , \sum_{k = 0}^{l-1} \epsilon(\theta_{t+k}^{(2)}) \r\rangle\r] \\
&\qquad = \sum_{i, j, h = 0}^{l-1} \E\l[\l\langle \nabla F(\theta_0) + \l(\nabla F(\theta_{t + i}^{(1)}) - \nabla F(\theta_0)\r),  \epsilon(\theta_{t+j}^{(2)})\r\rangle \cdot \l\langle\epsilon(\theta_{t+h}^{(1)}), \epsilon(\theta_{t+j}^{(2)}) \r\rangle\r] \\
&\qquad = \sum_{i, j, h = 0}^{l-1} \E\l[\l\langle \nabla F(\theta_{t + i}^{(1)}) - \nabla F(\theta_0),  \epsilon(\theta_{t+j}^{(2)})\r\rangle \cdot \l\langle\epsilon(\theta_{t+h}^{(1)}), \epsilon(\theta_{t+j}^{(2)}) \r\rangle\r] \\
&\qquad \leq \sum_{i, j, h = 0}^{l-1} \E\l[||\nabla F(\theta_{t + i}^{(1)}) - \nabla F(\theta_0)||\cdot ||\epsilon(\theta_{t+j}^{(2)})||^2 \cdot ||\epsilon(\theta_{t+h}^{(1)})||\r] \\[5pt]
&\qquad \leq \sum_{i, j, h = 0}^{l-1}\E\l[ \E\l[||\nabla F(\theta_{t + i}^{(1)}) - \nabla F(\theta_0)||\cdot ||\epsilon(\theta_{t+h}^{(1)})|| \ | \ \F_t\r]\cdot\E\l[ ||\epsilon(\theta_{t+j}^{(2)})||^2 \ | \ \F_t\r] \r]\\
&\qquad \leq l^3 L \eta G (t+l) \l(d \sigma_{max}\r)^{3/2}
\end{align*}

\end{itemize}
We put together all these upper bounds, leaving in extended form all the terms that are more significant than $O(\eta^2 (t+l)^2)$. We get
\begin{align*}
Var\l(Q_1\r) &= \E\l[Q_1^2\r] - \E\l[Q_1\r]^2 \\
&\lesssim \frac{2 || \nabla F(\theta_0)||^2 d \sigma_{max}}{l} + \frac{d^2\sigma^2_{max}}{l^2} \\
&\quad + \eta \cdot \l( \frac{4 d \sigma_{max} || \nabla F(\theta_0)|| L G (t+l)}{l} + \frac{2L G(t+l) (d \sigma_{max})^{3/2}}{l} \r) \\
&\quad + \eta \cdot \l(8 L G || \nabla F(\theta_0)||^3 (t + l) + 2 || \nabla F(\theta_0)||^2 L G (t+l) \sqrt{d \sigma_{max}}\r) + O(\eta^2 (t+l)^2)
\end{align*}
which immediately translates to a bound for the standard deviation of the following form
\begin{align}{\label{upper_bound_sd}}
\sd\l(Q_1\r) &\lesssim \frac{|| \nabla F(\theta_0)|| \sqrt{2 d \sigma_{max}}}{\sqrt l} + \frac{d\sigma_{max}}{l}\\\nonumber
&+ \sqrt{\eta} \cdot \l(8 L G || \nabla F(\theta_0)||^3 (t + l) + 2 || \nabla F(\theta_0)||^2 L G (t+l) \sqrt{d \sigma_{max}}\r)^{1/2} \\ \nonumber
&+ \sqrt\eta \cdot \l( \frac{4 d \sigma_{max} || \nabla F(\theta_0)|| L G (t+l)}{l} + \frac{2L G(t+l) (d \sigma_{max})^{3/2}}{l} \r)^{1/2} + O(\eta (t+l))
\end{align}
We combine (\ref{upper_bound_sd}) with the fact, consequence of (\ref{lower_bound_E}), that $\E[Q_1]/||\nabla F(\theta_0)||^2 \gtrsim 1 + O(\eta (t+l))$, to get the desired inequality
\begin{equation*}
\sd(Q_1) \lesssim C_1(\eta, l) \cdot \E[Q_1]
\end{equation*}
where
\begin{align*}
C_1(\eta, l) &= \frac{1}{||\nabla F(\theta_0)||^2} \cdot \l\{\frac{|| \nabla F(\theta_0)|| \sqrt{2 d \sigma_{max}}}{\sqrt l} + \frac{d\sigma_{max}}{l} \r.\\
&+ \sqrt{\eta} \cdot \l(8 L G || \nabla F(\theta_0)||^3 (t + l) + 2 || \nabla F(\theta_0)||^2 L G (t+l) \sqrt{d \sigma_{max}}\r)^{1/2} \\ 
&+\l. \sqrt\eta \cdot \l( \frac{4 d \sigma_{max} || \nabla F(\theta_0)|| L G (t+l)}{l} + \frac{2L G(t+l) (d \sigma_{max})^{3/2}}{l} \r)^{1/2} \r\}
\end{align*}
This confirms that $C_1(\eta, l) = O(1/\sqrt l) + O(\sqrt{\eta (t+l)})$.

%%%%%%%%%%%%%%%%%%%%%%%%%%%%%%%%%%%%%
%%%%%%%%%%%%%%%%%%%%%%%%%%%%%%%%%%%%%

\section{Proof of Theorem \ref{asymptotic_t}}
\label{appendix_proof_asymptotic_t}

As before, we only consider $Q_1$ for simplicity.
To provide an upper bound for $|\E[Q_1]|$, we use the fact that $\nabla F(\theta^*) = 0$ together with Assumption \ref{smoothness}. Starting from (\ref{expectation_thm1}) we have
\begin{align*}
|\E\l[Q_1\r]| &= \frac{1}{l^2}\l| \sum_{j = 0}^{l-1} \sum_{k = 0}^{l-1} \E\l[\langle \nabla F(\theta_{t + j}^{(1)}) , \nabla F(\theta_{t + k}^{(2)})\rangle\r]\r| \\
&\leq \frac{1}{l^2}\sum_{j = 0}^{l-1} \sum_{k = 0}^{l-1} \E\l[|| \nabla F(\theta_{t + j}^{(1)})-\nabla F(\theta^*) ||\cdot ||\nabla F(\theta_{t + k}^{(2)})-\nabla F(\theta^*) ||\r] \\
&\leq \frac{L^2}{l^2}\sum_{j = 0}^{l-1} \sum_{k = 0}^{l-1} \E\l[||\theta_{t + j}^{(1)}-\theta^*||\cdot ||\theta_{t + k}^{(2)}-\theta^*||\r] \\
&\leq \frac{L^2}{l^2}\sum_{j = 0}^{l-1} \sum_{k = 0}^{l-1} \sqrt{\E\l[||\theta_{t + j}^{(1)}-\theta^*||^2\r]\cdot\E\l[ ||\theta_{t + k}^{(2)}-\theta^*||^2\r]}
\end{align*}
Now we can use Lemma \ref{lemma_squared_distance} that states that, for $\eta \leq \frac{\mu}{L^2}$,
\begin{equation}{\label{lemma_again}}
\E\l[||\theta_t - \theta^*||^2\r] \leq \l(1 - 2\eta(\mu - L^2 \eta)\r)^t \cdot \E\l[||\theta_0 - \theta^*||^2\r] + \frac{G^2 \eta}{\mu - L^2\eta} .
\end{equation}
As $t \to \infty$ we have that $\E\l[||\theta_{t}-\theta^*||^2\r] \lesssim \frac{G^2 \eta}{\mu - L^2\eta}$. $L$-smoothness combined with (\ref{lemma_again}) also gets
\begin{equation}{\label{bound_grad_t}}
\E\l[||\nabla F(\theta_t)||^2\r] \lesssim \frac{L^2 G^2 \eta}{\mu - L^2\eta} \qquad\text{as } \ t \to \infty .
\end{equation}
Since the first term of (\ref{lemma_again}) is decreasing in $t$, our bound on the expectation of $Q_1$ is
\begin{equation}
|\E\l[Q_1\r]| \leq L^2 \cdot \l( \l(1 - 2\eta(\mu - L^2 \eta)\r)^t \cdot \E\l[||\theta_0 - \theta^*||^2\r] + \frac{G^2 \eta}{\mu - L^2\eta}\r)
\end{equation}
To deal with the second moment, we introduce the notation 
$$S_k := \sum_{i=0}^{l-1} g(\theta_{t+i}^{(k)}, Z_{t+i+1}^{(k)}) = \sum_{i=0}^{l-1} \nabla F(\theta_{t+i}^{(k)}) + \sum_{i=0}^{l-1} \epsilon(\theta_{t+i}^{(k)}) =: G_k + e_k .$$
where $G_k$ is the true signal in the first window of thread $k$ and $e_k$ the related noise.
Conditional on $\F_t$, the random variables $S_1$ and $S_2$ are independent and identically distributed. Then we can write
\begin{align*}
l^4\cdot \E[Q_1^2] &= \E\l[\langle S_1, S_2\rangle^2\r] = \E\l[S_2^T S_1 S_1^T S_2\r] \\
&= \E\l[\Tr(S_2^T S_1 S_1^T S_2)\r] = \E\l[\Tr(S_1 S_1^T S_2 S_2^T )\r] \\
&= \Tr\l(\E\l[S_1 S_1^T S_2 S_2^T\r]\r) = \Tr\l(\E\big\{\E\l[S_1 S_1^T \ | \ \F_t\r]\cdot \E\l[S_2 S_2^T \ | \ \F_t\r]\big\}\r) \\
&= \Tr\l(\E\big\{\E\l[S_1 S_1^T \ | \ \F_t\r]^2\big\}\r)
\end{align*}
The goal is now to show that the matrix $\E\l[S_1 S_1^T \ | \F_t\r]$ is positive definite, and provide a lower bound for its second moment using the fact that if $A \succeq \lambda I$ for $\lambda \geq 0$, then $A^2 \succeq \lambda^2 I$. We can write
\begin{align*}
\E\l[S_1 S_1^T \ | \F_t\r] &= \E\l[(G_1 + e_1) (G_1 + e_1)^T \ | \F_t\r] \\
&= \E\l[G_1 G_1^T \ | \F_t\r] + \E\l[G_1 e_1^T \ | \F_t\r] + \E\l[e_1 G_1^T \ | \F_t\r] + \E\l[e_1 e_1^T \ | \F_t\r]
\end{align*}
We immediately have that $\E\l[G_1 G_1^T  \ | \F_t \r] \succeq 0$, because, for any $x \in \R^d$, 
$$x^T \E\l[G_1G_1^T  \ | \F_t\r] x = \E\l[x^T G_1G_1^T x \ | \F_t\r] = \E\l[||x^T G_1||^2 \ | \F_t\r] \geq 0 .$$ 
Moreover we can also find an easy lower bound for the error term using Assumption \ref{errors_assumption},
\begin{align*}
\E\l[e_1 e_1^T \ | \F_t \r] &= \E\l[\l(\sum_{i=0}^{l-1} \epsilon(\theta_{t+i}^{(1)})\r)\l(\sum_{j=0}^{l-1} \epsilon(\theta_{t+j}^{(1)})\r)^T  \ \bigg| \F_t \r] \\
&= \sum_{i=0}^{l-1} \E\l\{\E\l[ \epsilon(\theta_{t+i}^{(1)}) \epsilon(\theta_{t+i}^{(1)})^T  \ | \F_{t+i-1}\r]\r\} \\
&\succeq l \cdot \sigma_{min} \cdot I
\end{align*}
To lower bound the remaining terms we introduce a simple Lemma.
\begin{lemma}{\label{matrix_product_vectors}}
If $u, v \in \R^d$, then $u v^T + v u^T \succeq - 2 ||u||\cdot||v||\cdot I$
\end{lemma}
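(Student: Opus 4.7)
The plan is to prove the matrix inequality $uv^T + vu^T + 2\|u\|\|v\|\, I \succeq 0$ directly from the definition of positive semidefiniteness. For an arbitrary $x \in \R^d$ I would first use bilinearity of the inner product to observe that
\begin{equation*}
x^T(uv^T + vu^T)x = 2\langle x,u\rangle\langle x,v\rangle,
\end{equation*}
which reduces the matrix statement to a scalar inequality about two inner products.

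Next, I would apply the Cauchy--Schwarz inequality to each factor separately, $|\langle x,u\rangle| \le \|x\|\,\|u\|$ and $|\langle x,v\rangle| \le \|x\|\,\|v\|$, to obtain the one-line lower bound $2\langle x,u\rangle\langle x,v\rangle \ge -2\|u\|\,\|v\|\,\|x\|^2 = -2\|u\|\,\|v\|\, x^T I x$. Rearranging then yields $x^T\bigl(uv^T + vu^T + 2\|u\|\|v\| I\bigr)x \ge 0$ for every $x \in \R^d$, which is exactly the claim.

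As a sanity check on the constant, a slightly different derivation starts from the manifestly positive semidefinite matrix $(au+bv)(au+bv)^T \succeq 0$ for scalars $a,b$; expanding gives $ab(uv^T + vu^T) \succeq -a^2 uu^T - b^2 vv^T$, and then bounding $uu^T \preceq \|u\|^2 I$ and $vv^T \preceq \|v\|^2 I$ and optimizing over the ratio $a/b > 0$ produces the optimum at $a/b = \sqrt{\|v\|/\|u\|}$, which recovers precisely the constant $2\|u\|\|v\|$.

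I do not expect any real obstacle here: this is a short linear-algebra fact, and the Cauchy--Schwarz step is tight enough to give the stated constant with no slack. The point of having the lemma available is purely instrumental in the surrounding argument, where it supplies a uniform lower bound for the symmetric cross term $\E[G_1 e_1^T + e_1 G_1^T \mid \F_t]$ in the expansion of $\E[S_1 S_1^T \mid \F_t]$, so that this conditional second moment can in turn be lower bounded by a positive multiple of the identity in the proof of Theorem \ref{asymptotic_t}.
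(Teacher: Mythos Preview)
Your proof is correct and follows essentially the same approach as the paper: test against an arbitrary $x$, reduce to $2\langle x,u\rangle\langle x,v\rangle \ge -2\|u\|\|v\|\|x\|^2$, and invoke Cauchy--Schwarz. Your alternative derivation via $(au+bv)(au+bv)^T\succeq 0$ is a nice sanity check not in the paper, and your remark about the lemma's role in bounding the cross term of $\E[S_1S_1^T\mid\F_t]$ is exactly how it is used.
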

\begin{proof}
We apply the Cauchy-Schwarz inequality and get, for any $x \in \R^d$,
\begin{align*}
x^T (u v^T + v u^T + 2||u||\cdot||v||\cdot I)x &= x^T u v^T x + x^T vu^T x + 2||u||\cdot||v||\cdot x^T x \\
&= \langle x, u\rangle \langle v, x\rangle + \langle x, v\rangle \langle u, x\rangle + 2 ||u||\cdot||v|| \cdot||x||^2 \geq 0
\end{align*}
\end{proof}
Using Lemma \ref{matrix_product_vectors}, and Lemma \ref{gradient_condition_Ft} $ii)$ in the last inequality, we immediately get that
\begin{align*}
\E\l[G_1 e_1^T \ | \F_t \r] + \E\l[e_1 G_1^T \ | \F_t \r] &\succeq - 2 \E\l[ ||G_1||\cdot ||e_1||\ | \F_t \r] \cdot I \\
&\succeq -2\sum_{i=1}^l \sum_{j=1}^l \E\l[ ||\nabla F(\theta_{t+i}^{(1)})||\cdot||\epsilon(\theta_{t+j}^{(1)})|| \ | \F_t \r] \cdot I \\
&\succeq - 2 \sum_{i=0}^{l-1}\sum_{j=0}^{l-1} \sqrt{ \E\l[||\nabla F(\theta_{t+i}^{(1)})||^2 \ | \F_t\r]\cdot\E\l[||\epsilon(\theta_{t+j}^{(1)})||^2 \ | \F_t\r]} \cdot I \\
&\succeq - 2 l^2 \cdot \sqrt{d \sigma_{max}} \cdot \l(L ||\theta_t - \theta^*|| + L \eta G l\r) \cdot I 
\end{align*}
Notice that we could improve the bound using the fact that $\epsilon(\theta_{t+j}^{(1)})$ is independent from $\nabla F(\theta_{t+i}^{(1)})$ for any $j \geq i$. Putting the pieces together we get that
\begin{align*}
\E\l[S_1S_1^T \ | \F_t \r] &\succeq \l(l \sigma_{min} - 2 l^2 \cdot \sqrt{d \sigma_{max} }\cdot\l(L ||\theta_t - \theta^*|| + L \eta G l\r)\r) \cdot I \\[6pt]
\Rightarrow \qquad \E\l[S_1S_1^T \ | \F_t \r]^2 &\succeq \l(l \sigma_{min} - 2 l^2 \cdot \sqrt{d \sigma_{max} }\cdot \l(L ||\theta_t - \theta^*|| + L \eta G l\r)\r)^2 \cdot I \\[5pt]
&\succeq \bigg\{ l^2 \sigma_{min}^2  + 4 l^4 d \sigma_{max} \cdot \l(L ||\theta_t - \theta^*|| + L \eta G l\r)^2 \\
&\quad  - 4 l^3 \sigma_{min}\sqrt{d \sigma_{max}} \cdot \l(L ||\theta_t - \theta^*|| + L \eta G l\r)  \bigg\} \cdot I
\end{align*}
and then, using the asymptotic bound in (\ref{lemma_again}),
\begin{align*}
\E\l[\E\l[S_1S_1^T \ | \F_t \r]^2\r] &\succeq \bigg\{ l^2 \sigma_{min}^2 - 4l^3 \sigma_{min}\sqrt{d\sigma_{max}} \cdot \l(L\cdot\E\l[||\theta_t-\theta^*||\r] + L \eta G l \r)  \bigg\} \cdot I \\[5pt]
&\stackrel{\stackrel{t \to \infty}{\to}}{\succeq} \bigg\{ l^2 \sigma_{min}^2 - 4l^3 \sigma_{min}\sqrt{d\sigma_{max}} \cdot \l(\frac{LG \sqrt\eta}{\sqrt{\mu - L^2\eta}} + L \eta G l \r)  \bigg\} \cdot I 
\end{align*}
which finally gives the bound on the second moment, which is
\begin{align*}
l^4\cdot \E[Q_1^2] &\gtrsim d\cdot \l(l^2 \sigma_{min}^2 - 4l^3 \sigma_{min}\sqrt{d\sigma_{max}} L G \sqrt\eta \cdot \l(\frac{1}{\sqrt{\mu - L^2\eta}} +  l \sqrt\eta\r)\r)\\
&\geq d l^2 \sigma_{min}^2 - K_1 l^3 \sqrt\eta  - K_2  l^4 \eta
\end{align*}
Using the fact shown before, that
$$\E[Q_1]^2 \lesssim \frac{L^4 G^4 \eta^2}{(\mu - L^2 \eta)^2} \qquad\text{as } \ t\to\infty,$$
we can bound the variance of $Q_1$ from below with
$$Var(Q_1) = \E[Q_1^2] - \E[Q_1]^2 \geq \frac{d\sigma_{min}^2}{l^2} -\frac{K_1\sqrt\eta}{l} -  K_2 \eta - \frac{L^4G^4 \eta^2}{(\mu - L^2 \eta)^2}$$
and then
$$Var(Q_1) \gtrsim \l(\frac{d\sigma_{min}^2}{l^2} - \frac{K_1\sqrt\eta}{l} + O(\eta)\r) \cdot \frac{\E[Q_1]^2 (\mu - L^2 \eta)^2}{L^4G^4 \eta^2} .$$
The desired inequality is finally
\begin{equation*}
|\E[Q_1]| \lesssim  C_2(\eta) \cdot \sd(Q_1)
\end{equation*}
with
$$C_2(\eta) = \frac{L^2G^2 \eta}{(\mu - L^2 \eta)}\cdot\l(\frac{d\sigma_{min}^2}{l^2} - \frac{K_1\sqrt\eta}{l} + O(\eta)\r)^{-1/2}  = C_2 \cdot \eta + o(\eta) .$$

\end{document}